\documentclass[12pt]{article}

\usepackage[margin=1in]{geometry}   % Reasonable margins for review
\usepackage{setspace}
\usepackage{lmodern}                % Better default font
\usepackage[T1]{fontenc}
\usepackage[utf8]{inputenc}

\usepackage{amsmath, amssymb, amsthm, mathtools}
\usepackage{bm}                     % Bold math symbols
\usepackage{mathrsfs}               % Script math (for e.g. sigma-fields)
\usepackage{bbm}                    % For indicator function \mathbbm{1}

\numberwithin{equation}{section}

\usepackage{float}
\usepackage{graphicx}
\usepackage{subcaption}
\usepackage{algorithm}
\usepackage{makeidx}
\usepackage{algpseudocode}
\usepackage{booktabs}
\usepackage{array}
\usepackage{enumitem}               % Customizable lists
\usepackage{soul}

\usepackage[authoryear]{natbib}     % Author–year citations
\usepackage{hyperref}
\hypersetup{
  colorlinks = true,
  linkcolor  = black,
  citecolor  = blue, % Temporary...
  urlcolor   = black
}
\usepackage[capitalize,noabbrev,nameinlink]{cleveref}
\theoremstyle{plain}
\newtheorem{theorem}{Theorem}[section]
\newtheorem{lemma}[theorem]{Lemma}

\newtheorem{corollary}[theorem]{Corollary}
\newtheorem{assumption}[theorem]{Assumption}

\theoremstyle{definition}

\theoremstyle{remark}
\newtheorem{remark}[theorem]{Remark}

\makeindex

\newcommand{\E}{\mathbb{E}}

\DeclareMathOperator{\tr}{tr}

\DeclareMathOperator*{\argmin}{arg\,min}

\newcounter{mntcomm}

\newcounter{ddcomm}

\newcounter{tmcomm}

\title{\vspace{-1cm}Inversion-Free Natural Gradient Descent on Riemannian Manifolds}
\author{
  Dario Draca\textsuperscript{1},
  Takuo Matsubara\textsuperscript{2} and
  Minh-Ngoc Tran\textsuperscript{2}
  \\[1em]
  \textit{\textsuperscript{1}School of Mathematics and Statistics, University of Sydney, Australia} \\
  \textit{\textsuperscript{2}University of Sydney Business School, Australia}
}
\date{}  % Omit date on title page

\begin{document}
% --------------------------------------------------
\maketitle

% --------------------------------------------------
% Abstract and Keywords
% --------------------------------------------------
\begin{abstract}
\footnotesize
The natural gradient method is a central tool for statistical optimisation, but its broader application is hindered by the assumption of a Euclidean parameter space, the repeated estimation of the Fisher information matrix (FIM), and the computational cost of its subsequent inversion.
% This paper proposes a purely intrinsic, inversion-free natural gradient method for objective functionals defined over density functions parameterised by parameters living on general Riemannian manifolds.
This paper proposes an intrinsic, inversion-free natural gradient method for statistical models whose parameters lie on general Riemannian manifolds.
Formulating statistical optimisation in this non-Euclidean setting allows for the natural enforcement of parameter constraints, the elimination of non-identifiable parameters, and the exploitation of geodesic convexity. 
% % To overcome the computational bottleneck of FIM inversion, our algorithm maintains a moving approximation of the inverse FIM directly on the manifold.
% % By exploiting the outer-product structure of the Fisher information, this approximation is updated using score vectors mapped across distinct tangent spaces via vector transport operations.
% To overcome the computational bottleneck of matrix inversion, our algorithm maintains a moving approximation of the inverse FIM directly on the manifold,
% where new score vectors are incorporated via efficient low rank updates.
Our algorithm is based on a moving approximation of the inverse FIM, which is maintained directly on the manifold.
% This approximation is periodically updated with new score vectors using low-rank matrix identities, at lower cost than dense matrix inversion.
This approximation is efficiently updated with new score vectors using low-rank matrix identities.
% We establish a rigorous analysis, proving almost-sure convergence rates of $O(\log s / s^\alpha)$ for the sequence of iterates, and a similar rate for the approximate FIM.
We prove almost-sure convergence rates of $O(\log s / s^\alpha)$ for the sequence of iterates, and a similar rate for the approximate FIM.
A limited-memory variant with sub-quadratic storage complexity is further proposed for large-scale applications. 
We demonstrate the efficacy of our method on variational Bayes within the Bures-Wasserstein manifold, normalising flows on the Stiefel manifold, and reduced-rank logistic regression.
\end{abstract}
\normalsize

\noindent \textbf{Keywords:} Variational Bayes; Fisher information matrix; Stochastic optimisation; Riemannian optimisation; Bures-Wasserstein manifold

% --------------------------------------------------
% Main text
% --------------------------------------------------

%\newpage
\section{Introduction}

This paper studies the optimisation of an objective function defined over a family of parametric distributions \( \mathcal{Q} = \{q_\theta : \theta \in \mathcal{M} \} \). Problems of this form are central to statistics and machine learning. For instance, maximum likelihood estimation \citep{fisher1922mathematical} minimises the negative log-likelihood over $\mathcal{Q}$, while variational Bayes approximates an intractable posterior distribution by minimising the negative evidence lower bound \citep{blei2017variational}.
% Two broad paradigms exist for addressing this class of optimisation problems. 
% % The first identifies $\mathcal{Q}$ directly with its parameter space $\mathcal{M}$, reducing the optimisation to a standard problem in $\mathcal{M}$.
% % This paradigm, however, treats $\mathcal{Q}$ without regard to the geometry of the underlying statistical model.
% % This paradigm, however, discards the intrinsic differential structure of $\mathcal{Q}$ entirely.
% The first reduces the optimisation to a standard problem on $\mathcal{M}$, treating the objective as a function of the parameters, but without accounting for how parameter perturbations affect the distribution.
% % In contrast, the second geometrically informed paradigm, which this paper adopts, formulates $\mathcal{Q}$ as a statistical manifold in the sense of information geometry \citep{amari2016information}, leveraging tools from differential geometry to accelerate optimisation.
% This paper adopts the second approach, which equips $\mathcal{Q}$ with the structure of a statistical manifold \citep{amari2016information}, and exploits this structure to accelerate optimisation.
The optimisation literature provides a host of general-purpose methods for such problems \citep{duchi2011adaptive, kingma2014adam, byrd2016stochastic}.
However, these approaches are agnostic to how the shape of the distribution changes with respect to its parameterisation.

In information geometry \citep{amari2016information}, the parametric family $\mathcal{Q}$ is formally structured as a Riemannian manifold endowed with the Fisher information metric.
% The gradient of the objective function defined over this manifold is known as the \emph{Fisher-scored gradient} in the statistics literature, and popularised in the machine learning literature by \cite{amari1998natural} under the name \emph{natural gradient}.
The gradient of the objective function with respect to this metric defines the update direction of the classical Fisher scoring algorithm \citep{osborne1992fisher}, and is known as the \emph{natural gradient} in the machine learning literature \citep{amari1998natural}.
% This formulation is fundamental to statistical methodologies, yielding a parameterisation-invariant steepest descent direction on the parametric family $\mathcal{Q}$.
This formulation yields a steepest descent direction on $\mathcal{Q}$ that is intrinsic to the statistical model, and in particular invariant to reparameterisation.
The natural gradient has driven substantial improvements across a broad spectrum of optimisation tasks, from variational inference \citep{hoffman2013stochastic, tran2017variational, pmlr-v97-lin19b} to deep learning \citep{martens2020new}. 
However, 
% despite these well-documented advantages, the broader application of the natural gradient remains fundamentally bottlenecked: 
the vast majority of existing methodology relies on the assumption that the parameter space $\mathcal{M}$ is a flat, Euclidean space.

This paper is concerned with a \emph{doubly} geometric setting, where the underlying parameter space $\mathcal{M}$ of the statistical manifold $\mathcal{Q}$ is also a Riemannian manifold. The extension of natural gradient methods to non-Euclidean parameter spaces is motivated by several practical considerations. First, numerous statistical models involve constrained parameters. For instance, the space of symmetric positive definite (SPD) matrices naturally forms a complete Riemannian manifold \citep{pennec2006riemannian, arsigny2006log}, where updating the matrix along its geodesics enforces positive definiteness without requiring ad-hoc projections \citep{tran2021variational,lin2020handling}.
% Second, formulating optimisation problems on non-Euclidean spaces can eliminate non-identifiable parameters, ensuring the invertibility of the Fisher information matrix (FIM) of the parametric family $\mathcal{Q}$.
Second, formulating optimisation problems on non-Euclidean spaces can eliminate redundancies in the parametrisation, yielding a smaller and more well-conditioned Fisher information matrix (FIM).
% For example, problems involving linear subspaces---such as sufficient dimension reduction \citep{cook2009likelihood, nghiem2024likelihood} and envelope models \citep{cook2015simultaneous}---are naturally formulated on the Grassmann manifold \citep{edelman1998geometry}.
For example, in sufficient dimension reduction \citep{cook2009likelihood, nghiem2024likelihood} and envelope models \citep{cook2015simultaneous}, a key parameter is a linear subspace, typically represented by a semi-orthogonal basis matrix. Since any rotation of this basis spans the same subspace, the likelihood is invariant to such rotations. The redundancy can be eliminated by formulating the problem directly on the Grassmann manifold \citep{edelman1998geometry}.
Similarly, structural non-identifiability in factored low-rank models \citep{yee2003reduced} can be resolved via the manifold of fixed-rank matrices \citep{meyer2011linear, mishra2014fixed}. Finally, adopting a manifold perspective can improve the regularity of the objective function. Certain optimisation problems that are non-convex under standard Euclidean parameterisations exhibit convexity along the geodesics of a suitably chosen manifold \citep{wiesel2012geodesic}. A prominent example is the Bures-Wasserstein manifold of Gaussian distributions \citep{bures1969extension, malago2018wasserstein}, where the Kullback-Leibler divergence to a log-concave target becomes geodesically convex, yielding optimisation algorithms with strong theoretical guarantees \citep{lambert2022variational, diao2023forward}.

Despite the growing interest of Riemannian optimisation \citep{hosseini2020recent, boumal2023introduction}, comparatively little progress has been made on developing natural gradient methods for general Riemannian parameter spaces. 
While the notion of Fisher information matrix extends naturally to Riemannian manifolds \citep{smith2005covariance, xavier2005intrinsic}, the methodological challenge lies in operationalising it into a practical optimisation algorithm.
Recently, \citet{hu2024riemannian} proposed a natural gradient method specialised for matrix manifolds embedded in $\mathbb{R}^{m \times n}$ \citep{absil2009optimization}. 
Their approach relies on the ambient vector space to represent the FIM, estimating it on a per-iteration basis using a Monte Carlo (minibatch) sample of score vectors.
% However, their methodology and theory are inextricably linked to the existence of a convenient ambient embedding space. 
% The development of a purely intrinsic natural gradient method, which operates universally across general Riemannian manifolds while simultaneously overcoming the computational bottleneck, remains a significant challenge.
This strategy has some limitations: first, the ambient space can be much larger than the manifold's intrinsic dimension, incurring additional storage overhead. Second, Monte Carlo estimates of the Fisher information often have large variance, and the estimated FIM must then be inverted at each iteration, incurring up to cubic cost.

% The computational challenges of natural gradient methods arise from the estimation of the FIM and its subsequent inversion at each iterate. 
% This is a bottleneck that persists regardless of the parameter space's geometric structure. 
The computational difficulties of storing, estimating, and inverting the FIM remain the primary impediment to the practical implementation of natural gradient methods. In the Euclidean setting, considerable effort has been devoted to addressing these bottlenecks; see e.g. \citet{martens2020new}.
% In the Euclidean setting, \citet{godichon2024natural} proposed an elegant approach that circumvents the exact estimation of the FIM and its direct inversion. 
Recently, \citet{godichon2024natural} proposed\footnote{A similar approach was studied in the machine learning community \citep{amari2000adaptive, park2000adaptive}.} an elegant approach that streamlines the estimation of the FIM and its inversion. 
% By exploiting the outer-product structure of the FIM, they maintain a stochastic moving approximation of the inverse FIM at each iterate, significantly reducing the computational cost while guaranteeing almost-sure convergence of the optimisation algorithm.
Their method maintains a moving approximation of the inverse FIM, which is repeatedly updated with new score vectors. This is achieved at quadratic cost per iteration using low-rank matrix identities \citep{sherman1950adjustment}. 
% However, generalising this inversion-free paradigm to general Riemannian manifolds introduces major theoretical difficulties. Because the approximate inverse FIM is localized to the tangent space of the current iterate, maintaining a moving average necessitates transporting this object along the manifold. 
The development of such ``inversion-free'' algorithms has also been popularised in the stochastic and quasi-Newton literature \citep{chau2024inversion}.
Adapting such an approach to Riemannian manifolds is appealing, as it is computationally efficient, highly general, and amenable to theoretical analysis. The new challenge is that score vectors are localised to parameter-specific vector spaces---their \textit{tangent spaces}. To maintain and update a moving approximation to the FIM, one must repeatedly move this object across tangent spaces using vector transport operations \citep{absil2009optimization}. This mirrors the transport of Hessian approximations in Riemannian quasi-Newton methods \citep{huang2015broyden}.
% The core technical difficulty lies in controlling the accumulated curvature-based distortions and the structural errors introduced by the approximation of parallel transport operations.

% The main contribution of this paper is the development of a fully intrinsic, inverse-free natural gradient method for optimisation over general Riemannian parameter spaces.
The main contribution of this paper is the development of an inversion-free natural gradient method for optimisation over Riemannian parameter spaces.
% Our framework overcomes both the geometric limitations of ambient embeddings and the computational burden of exact FIM inversion.
% In contrast to \citet{hu2024riemannian}, our approach is purely intrinsic.
In contrast to \citet{hu2024riemannian}, our approach is purely intrinsic: it does not require an ambient embedding space for its implementation or analysis, and assumes only standard regularity conditions on the manifold, distribution family, and objective function.
To our knowledge, this is the first practical natural gradient method applicable to general Riemannian manifolds.
% It explicitly treats the inverse FIM as a stochastic object localized to the tangent space of the current iterate, requiring no ambient space and relying strictly on standard regularity conditions of the manifold and the objective function.
%Our algorithm extends \citet{godichon2024natural} to general manifolds in full rigor, achieving a quadratic time natural gradient method in most manifolds encountered in practice.
% Our algorithm extends \citet{godichon2024natural} to general manifolds in full rigor.
% It maintains a moving approximation of the inverse FIM updated via vector transport operations.
Following \citet{godichon2024natural}, our algorithm maintains a moving approximation of the inverse FIM, which undergoes alternating transport and low-rank score vector updates.
Since this approximation averages over previous score vectors, it benefits from a larger effective sample size than per-iteration estimates, yielding a more stable natural gradient direction.
% We establish asymptotic convergence guarantees of the proposed algorithm. Specifically, the algorithm converges to a global or local minimiser almost surely at a rate of $O(\log{s}/s^\alpha)$, with $s$ the number of iterations.
% Similarly, the approximate FIM of the algorithm converges to the exact FIM almost surely.
We establish asymptotic results for the proposed algorithm: the iterates converge to a global or local minimiser almost surely at a rate of $O(\log{s}/s^\alpha)$ for $\alpha \in (\tfrac{2}{3},1)$, which matches the Euclidean setting. The approximate FIM is also shown to converge almost surely, albeit at a slower rate than the Euclidean.
% To further alleviate computational constraints, we propose a limited-memory variant of the algorithm that employs a low-rank estimate of the inverse FIM, yielding sub-quadratic storage complexity.
For large-scale applications, we propose a limited-memory variant based on a low-rank representation of the inverse FIM, with sub-quadratic storage complexity.
% Finally, we demonstrate the practical efficacy of our intrinsic method through challenging statistical applications, including variational Bayes with Gaussian distributions on the Bures-Wasserstein space, normalising flows \citep{berg2018sylvester} on the Stiefel manifold, and reduced-rank regression on the manifold of fixed-rank matrices \citep{yee2003reduced, meyer2011linear}.
Finally, we demonstrate our method on several statistical applications across a range of manifolds. This includes variational Bayes on the Bures-Wasserstein space, normalising flows \citep{berg2018sylvester} on the Stiefel manifold, and reduced-rank logistic regression \citep{yee2003reduced} on the manifold of fixed-rank matrices \citep{meyer2011linear}.

The paper is organised as follows.
\Cref{sec:Background} introduces the necessary preliminaries and background.
In \Cref{sec:Riemannian Natural Gradient}, we review the intrinsic formulation of the Fisher information on a smooth manifold, and the corresponding representation of the natural gradient.
\Cref{sec:Inversion-free Riemannian Natural Gradient} presents our inversion-free natural gradient method, followed by a rigorous convergence analysis in \Cref{sec:Convergence Analysis}.
We demonstrate the application of our method to the Bures-Wasserstein, Stiefel, and fixed-rank manifolds in \Cref{sec:Example: Bures-Wasserstein Natural Gradient,sec:reduced-rank-regression,sec:stiefel-experiment}.
% \Cref{sec:Example: Bures-Wasserstein Natural Gradient}, \Cref{sec:reduced-rank-regression} and \Cref{sec:stiefel-experiment}, respectively. 
Finally, \Cref{sec:Conclusion} concludes the paper. 
Proofs and technical details are provided in the Supplementary Material.

\section{Background}\label{sec:Background}

In this section we recall, informally, some elementary notions from Riemannian geometry, describe the statistical optimisation problems of interest, and review the natural gradient method. For details on Riemannian optimisation see \citet{absil2009optimization, boumal2023introduction}.

\subsection{Geometric preliminaries}

Let $\mathcal{M}$ denote a smooth $d$-dimensional manifold; this is a topological space which locally resembles $\mathbb{R}^d$, and is regular enough to support a rigorous notion of smoothness and differentiability for curves and functions. At each point $x \in \mathcal{M}$, the velocities $\dot{\gamma}(0)$ of smooth curves $\gamma:[0,1]\rightarrow\mathcal{M}$ with $\gamma(0)=x$ form a vector space, the \textit{tangent space} $T_x\mathcal{M}$. The \textit{tangent bundle} collects all tangent spaces $T\mathcal{M} = \{(x,v) : x \in \mathcal{M}, v \in T_x\mathcal{M}\}$. If $\Phi:\mathcal{M}\rightarrow\mathcal{N}$ is a smooth map between manifolds, then at each $x \in \mathcal{M}$ it induces a linear map $D\Phi(x): T_x\mathcal{M} \rightarrow T_{\Phi(x)}\mathcal{N}$ referred to as its \textit{differential}; this is analogous to the (Euclidean) Jacobian viewed as a linear map. A \textit{Riemannian metric} on $\mathcal{M}$ assigns a smoothly varying inner product to each tangent space, denoted $\langle \cdot, \cdot\rangle_x : T_x\mathcal{M} \times T_x\mathcal{M} \rightarrow \mathbb{R}$. The pair $(\mathcal{M}, \langle \cdot, \cdot\rangle_x)$ is referred to as a \textit{Riemannian manifold}. The induced norm is $\Vert v\Vert_x = \sqrt{\langle v, v\rangle_x}$. The \textit{Riemannian length} of a curve $\gamma : [0,1] \rightarrow \mathcal{M}$ is obtained by integrating the magnitude of the velocity: $\mathrm{len}(\gamma)= \int^1_0 \Vert \dot{\gamma}(t)\Vert_{\gamma(t)}dt$. The \textit{Riemannian distance} $d(x,y)$ is defined as the infimum of $\mathrm{len}(\gamma)$ over all smooth curves connecting $x,y\in \mathcal{M}$. A smooth curve $\gamma:[0,1] \rightarrow \mathcal{M}$ is a \textit{geodesic} if it experiences zero acceleration on the manifold; this is the analogue of a straight line in Euclidean space. For each $v \in T_x\mathcal{M}$, there exists a unique geodesic $\gamma_{x,v}(t)$ with $\gamma_{x,v}(0)=x$ and $\dot{\gamma}_{x,v}(0)=v$. The \textit{exponential map} at $x$ is $\exp_x(v) :=\gamma_{x,v}(1)$, generalizing the Euclidean operation $x \rightarrow x+v$. In practice, the exponential map must often be replaced by a \textit{retraction}; a smooth map $\mathcal{R} : T\mathcal{M} \rightarrow \mathcal{M}$ such that $\mathcal{R}_x(0)=x$ and $D\mathcal{R}_x(0)=\mathrm{Id}_{x}$ for all $x \in \mathcal{M}$. A retraction is \textit{second order} if $d(\mathcal{R}_x(v), \exp_x(v)) = O(\Vert v\Vert^3_x)$ for $(x,v) \in T\mathcal{M}$.

\subsection{Statistical optimisation}
In this paper we are interested in optimisation over a parametric family of probability distributions; this problem is central to statistics. Let $\mathcal{Q} = \{q_\theta : \theta \in \mathcal{M}\}$ denote such a family, and consider a dataset $\{y_i\}^{n}_{i=1}$. Maximum likelihood estimation \citep{fisher1922mathematical} minimises
\begin{equation}\label{eq:MLE loss}
    \mathcal{L}(\theta):=\E_{Y\sim \tilde{p}}\big[-\log q_\theta(Y)\big], \qquad \tilde{p}(y)=\frac{1}{n}\sum_{i=1}^n\delta_{y_i}(y).
\end{equation}
In variational Bayes \citep{wainwright2008graphical}, one seeks to approximate a target density $\pi(y)=\bar\pi(y)/C$, where $\bar\pi$ can be evaluated pointwise but the normalising constant $C$ may be unknown. This is done by minimising the negative evidence lower bound
\begin{equation}\label{eq:VB loss}
    \mathcal{L}(\theta):=\E_{Y\sim q_\theta}\big[\log{q_\theta(Y)} - \log{\bar\pi(Y)}\big].
\end{equation}
For both objectives, the gradient can be formulated as an expectation $\nabla \mathcal{L}(\theta)=\mathbb{E}_{Y \sim p}[g(Y;\theta)]$, where $p$ and $g$ are suitably chosen; see \Cref{sec:natural-gradient-step}. When $\mathcal{M}=\mathbb{R}^d$, the standard approach to optimizing \eqref{eq:VB loss} is stochastic gradient descent \citep{hoffman2013stochastic}

\begin{equation} \label{eq:stochastic-gd}
    \theta^{(k+1)} = \theta^{(k)} - \tau_k \widehat{\nabla \mathcal{L}}(\theta^{(k)}),
\end{equation}
where $\widehat{\nabla \mathcal{L}}$ is a stochastic estimate of the gradient, and $\tau_k>0$ is a step size.

This procedure often converges slowly in practice, and one must employ additional strategies like momentum \citep{polyak1964some, nesterov2013gradient}. Popular alternatives include adaptive gradient methods \citep{duchi2011adaptive, kingma2014adam}, and stochastic (quasi)-Newton methods \citep{byrd2016stochastic}; see also \citet{bottou2018optimization} for a comprehensive survey. The natural gradient method \citep{amari1998natural} is also notable, and we discuss it shortly.

The generalisation of Euclidean stochastic optimisation methods to the Riemannian setting has been the subject of considerable attention. \citet{bonnabel2013stochastic} studied the convergence of Riemannian stochastic gradient descent, which generalises the iteration \eqref{eq:stochastic-gd} to manifolds:
\begin{equation}
    \theta^{(k+1)} = \mathcal{R}_{\theta^{(k)}}(-\tau_k \widehat{\nabla \mathcal{L}}(\theta^{(k)})).
\end{equation}

Here, $\widehat{\nabla\mathcal{L}}(\theta_k)$ is a stochastic estimate of the \textit{Riemannian} gradient, and $\mathcal{R}$ is a retraction. Various extensions including adaptive Riemannian gradient methods \citep{becigneul2018riemannian, kasai2019riemannian}, and variance reduced methods \citep{zhang2016riemannian, sato2019riemannian} have been proposed. There are also Riemannian generalisations of stochastic (quasi)-Newton methods \citep{kasai2018riemannian}. For a somewhat recent overview, see \citet{hosseini2020recent}.

These methods exploit the geometry of the parameter space $\mathcal{M}$, and in the case of second-order methods the curvature of the objective. However, they do not directly account for how parameter perturbations affect the shape of the distribution, which is of primary concern in \eqref{eq:MLE loss} and \eqref{eq:VB loss}. If there is a mismatch between parameter scales and their influence on $q_\theta$, this can degrade optimisation. The natural gradient method addresses this directly.

\subsection{Natural gradient descent}

In natural gradient descent \citep{amari1998natural} on $\mathbb{R}^d$, one preconditions the Euclidean gradient with the inverse Fisher information $I_F(\theta):=\mathbb{E}_{Y\sim q_\theta}[\nabla\ell_Y(\theta) \nabla \ell_Y(\theta)^\top]$ where $\ell_y(\theta):=\log q_\theta(y)$

\begin{equation}
    \nabla^{\text{nat}}\mathcal{L}(\theta) = I_F^{-1}(\theta) \nabla \mathcal{L}(\theta).
\end{equation}
It is well-known that the Fisher information locally characterises the KL divergence.
% To understand why this benefits optimisation, we first note that the Fisher information locally characterises the KL divergence.
For a small perturbation $\delta \theta$, we have the following second-order expansion; see e.g. \citep{amari2016information}
\begin{equation} \label{eq:KL-second-order}
    \mathrm{KL}(q_\theta \Vert q_{\theta+\delta \theta}) = \frac{1}{2} (\delta \theta)^\intercal I_F(\theta) (\delta\theta) + o(\Vert \delta \theta\Vert^2).
\end{equation}

The Fisher information thus weights directions according to their effect on the distribution, as measured by the KL divergence.
Consequently, preconditioning by $I_F^{-1}(\theta)$ shrinks steps in directions where small parameter changes induce large changes in the distribution, and enlarge them in directions where the distribution is less sensitive. 

Formally, the natural gradient corresponds to the Riemannian gradient of $\mathcal{L}$ with respect to the Fisher-Rao metric on $\mathcal{Q}$. This geometric structure is intrinsic to the statistical model, so the natural gradient points in the same direction in distribution space regardless of the parametrisation. This invariance, together with the above rescaling behavior, often leads to more effective optimisation \citep{ollivier2017information}. For a review of natural gradient methods in modern machine learning, we refer to \citet{martens2020new}.

In addition to \citet{hu2024riemannian}, several authors have investigated the natural gradient method in the manifold setting. Closely related is \cite{tran2021variational}, who develop a general-purpose natural gradient algorithm which is applicable to embedded submanifolds and quotient manifolds, where $\mathcal{M} \subset \mathbb{R}^{d}$ and $\dim{\mathcal{M}} \leq d$. They approximate natural gradient directions in the ambient space, and then project the result onto the tangent space of $\mathcal{M}$. 
A notable limitation is that the Fisher information can be singular in the ambient space.

The remaining works are more specialised, and mostly apply to SPD manifolds. For example, \citet{lin2020handling} formulate a retraction-based natural gradient update for SPD matrices which enforces positive-definiteness. \citet{lin2023simplifying} propose a momentum-based approach on certain SPD (sub)manifolds which employs special local coordinate systems to simplify updates. \citet{magris2024manifold} develop a natural gradient algorithm for Gaussian VB, which performs retraction-based updates with respect to the precision matrix.

\section{Riemannian Natural Gradient Descent}\label{sec:Riemannian Natural Gradient}

In this section we define the Fisher information for a family of distributions whose parameters live on a Riemannian manifold, and formulate the natural gradient update.

\subsection{Fisher information on a Manifold}

The intrinsic formulation of the Fisher information on a Riemannian manifold is well established; see, for example, \citet{smith2005covariance,xavier2005intrinsic}. Let $\mathcal{Q}:=\{q_\theta : \theta \in \mathcal{M}\}$ be a family of densities on $\mathbb{R}^d$, where $\mathcal{M}$ is a smooth manifold. Define
\begin{equation}
\ell_y(\theta) := \log q_\theta(y), \quad \quad y \in \mathbb{R}^d.    
\end{equation}
The Fisher information at $\theta$ is a bilinear form (equivalently, a $(0,2)$-tensor) on $T_\theta \mathcal{M}$
\begin{equation}
  F_\theta[u,v] :=
  \mathbb{E}_{Y \sim q_\theta} \big[D\ell_Y(\theta)[u]\ D\ell_Y(\theta)[v]\big], \qquad u,v \in T_\theta\mathcal{M},
  \label{eq:Fisher-tensor-def}
\end{equation}
where $D\ell_Y(\theta) : T_\theta \mathcal{M} \rightarrow \mathbb{R}$ is the differential with respect to $\theta \in \mathcal{M}$. Provided the expectation exists, $F_\theta$ is clearly symmetric and non-negative definite. 

Suppose in addition that $\mathcal{M}$ possesses a ``baseline'' Riemannian metric $\langle \cdot,\cdot \rangle_\theta$. Let $G_\theta$ denote its matrix representation in local coordinates, so $\langle u, v\rangle_\theta = u^\top G_\theta v$ for all $u,v \in T_\theta \mathcal{M}$. We use $\nabla \ell_y(\theta)$ to denote the Riemannian gradient of $\ell_y(\cdot)$ with respect to this baseline metric; i.e. $D\ell_y(\theta)[u] = \langle u, \nabla \ell_y(\theta)\rangle_\theta$ for all $u \in T_\theta \mathcal{M}$.
We refer to $\nabla \ell_y(\theta)$ as the score vector throughout the paper.
Then, substituting into \eqref{eq:Fisher-tensor-def} yields
\begin{equation}
    F_\theta[u,v] = \mathbb{E}_{q_\theta}\big[\langle u, \nabla\ell_Y(\theta)\rangle_\theta \cdot \langle v, \nabla \ell_Y(\theta)\rangle_\theta \big] = u^\top G_\theta \mathbb{E}_{q_\theta}\left[\nabla\ell_Y(\theta) \, \nabla \ell_Y(\theta)^\top \right]G_\theta v.
\end{equation}
Thus, in local coordinates the bilinear map $F_\theta$ is represented by the
matrix
\begin{equation} \label{eq:Fisher-tensor-def-coords}
    \hat{F}_\theta := G_\theta \mathbb{E}_{q_\theta}\left[\nabla\ell_Y(\theta) \, \nabla\ell_Y(\theta)^\top  \right] G_\theta.
\end{equation}

When this expression is positive definite and varies smoothly in $\theta$, the Fisher information form defines a Riemannian metric on $\mathcal{M}$ (equivalently, $\mathcal{Q}$) in its own right. To streamline terminology, we will refer to \eqref{eq:Fisher-tensor-def} and its coordinate representations as the \textit{Fisher information metric} regardless of whether it satisfies these additional conditions.

The Fisher information metric can also be described by a self-adjoint linear map $I_F(\theta)$ %$: T_\theta \mathcal{M} \rightarrow T_\theta \mathcal{M}$ via the Riesz representation theorem
\begin{equation}
    F_\theta[u,v] = \langle u, I_F(\theta) v\rangle_\theta, \qquad \forall u,v\in T_\theta \mathcal{M}.
    \label{eq:Fisher-operator-def}
\end{equation}
In coordinates this operator has matrix representation
\begin{equation}
  \widehat I_F(\theta) = \mathbb{E}_{q_\theta} \left[\nabla \ell_Y(\theta)\nabla \ell_Y(\theta)^\top\right]\, G_\theta.
\end{equation}

Formally, $I_F(\theta)$ can be viewed as a $(1,1)$-tensor (or a linear map) obtained by raising one of the indices of $F_\theta$ with respect to the baseline metric. To disambiguate between $F_\theta$ and $I_F(\theta)$, we refer to the latter as the \textit{Fisher information operator} when necessary.

The Fisher metric retains many of its familiar properties from the Euclidean setting. For example, one has the equivalence between the outer product and Hessian representations \citep[Theorem 1]{smith2005covariance}. Note, here the second-order version of the Fisher information is defined using the \textit{Riemannian} Hessian \citep{absil2009optimization} interpreted as a bilinear form. Following \citet{smith2005covariance}, this expression is independent of the choice of affine connection.
\begin{equation}
    %\mathbb{E}_{q_\theta}[\nabla \ell_Y(\theta) \nabla \ell_Y (\theta)^{\top}] = -\mathbb{E}_{q_\theta}[\nabla^2 \ell_Y(\theta)].
    F_\theta = -\mathbb{E}_{q_\theta}[\nabla^2 \ell_Y(\theta)].
\end{equation}

% Since $F_\theta$ is a covariant tensor, we also have the same kind of ``change of variables'' rule as in Euclidean space. 
Similar to \eqref{eq:KL-second-order}, the Fisher information can also be shown to locally characterize the KL divergence on a Riemannian manifold. Let $\mathcal{R}$ denote a second-order retraction; in Appendix \ref{appendix:kl-fisher-manifold} we show the following local expansion under standard regularity conditions
% In the previous section, we mentioned that the Fisher information locally characterises the KL divergence. A similar property can be shown to hold on a Riemannian manifold. Let $\mathcal{R}$ denote a second-order retraction; under mild regularity conditions
\begin{equation}
  \mathrm{KL}(q_\theta \Vert q_{\mathcal{R}_\theta(v)})=\tfrac{1}{2}F_\theta [v,v]+ o(\|v\|_\theta^2), \qquad \forall (\theta, v) \in T\mathcal{M}.
\end{equation}

\subsection{Natural gradient on a Riemannian manifold}

To formulate the natural gradient descent step, let $f: \mathcal{M}\rightarrow \mathbb{R}$ denote a smooth function. The Riemannian gradient $\nabla f(\theta) \in T_\theta \mathcal{M}$ with respect to the baseline metric is defined by:
\begin{equation}
    Df(\theta)[v] = \langle v, \nabla f(\theta)\rangle_\theta, \qquad \forall v \in T_\theta \mathcal{M}.
\end{equation}
On $\mathbb{R}^d$ with the Euclidean metric, this reduces to the usual gradient. Provided that $F_\theta$ is positive, this differential can also be characterized by the Fisher metric

\begin{equation}
    Df(\theta)[v] = F_\theta(v, \nabla^{\text{nat}}f(\theta)), \qquad \forall v \in T_\theta \mathcal{M},
\end{equation}
Then $\nabla^{\text{nat}}f(\theta) \in T_\theta \mathcal{M}$ is the \textit{natural gradient}.
From \eqref{eq:Fisher-operator-def} we have the representation
\begin{equation}
    \nabla^{\text{nat}}f(\theta) = I_F^{-1}(\theta) \nabla f(\theta) = \left[\mathbb{E}_{q_\theta}[\nabla\ell_Y(\theta) \ \nabla \ell_Y(\theta)^\top] G_\theta\right]^{-1} \nabla f(\theta).
\end{equation}

Given a retraction $\mathcal{R}$, step size sequence $\tau_s > 0$, starting point $\theta^{(0)}$, a natural gradient descent iteration on $\mathcal{M}$ takes the form
\begin{equation}
  \theta^{(s+1)} = \mathcal{R}_{\theta^{(s)}}\Big(-\,\tau_s\,I_F^{-1}(\theta^{(s)})\,\nabla f(\theta^{(s)}) \Big).
  \label{eq:nat-grad-iteration}
\end{equation}
This update generalises the standard Euclidean scheme, where $\mathcal{M}$ is a vector space and the retraction is implicitly specified by the parametrisation.

\section{Inversion-free Riemannian Natural Gradient}\label{sec:Inversion-free Riemannian Natural Gradient}

In this section we present an approximate natural gradient descent method for arbitrary Riemannian manifolds. In many practical applications, the Fisher operator does not possess a convenient analytic expression, and must be repeatedly estimated and inverted. The proposed approach works by sampling a single (or small) number of score vectors at each iteration, and using a well-known matrix inversion identity \citep{sherman1950adjustment} to update a running approximation of the inverse Fisher operator. For a $d$-dimensional manifold, this operation has quadratic complexity, while inversion is typically cubic.

\begin{lemma}\label{lemma:sherman-morrison}
    Let $A \in \mathrm{GL}(n,\mathbb{R})$ and $u,v \in \mathbb{R}^n$ where $1+v^\intercal A^{-1} u \neq 0$, then
    \begin{equation}
        (A + uv^\intercal)^{-1} = A^{-1} - (1+v^\intercal A^{-1}u)^{-1} \cdot A^{-1}u v^\intercal A^{-1}.
        \label{eq:sherman-morrison}
    \end{equation}
\end{lemma}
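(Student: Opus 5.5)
The natural route is direct verification. Because $A+uv^\intercal$ is a square matrix, it is enough to exhibit a one-sided inverse. So I will take the candidate
\[
B := A^{-1} - \beta\, A^{-1}u v^\intercal A^{-1}, \qquad \beta := (1+v^\intercal A^{-1}u)^{-1},
\]
which is well defined by the hypothesis $1+v^\intercal A^{-1}u \neq 0$. The goal is to show $(A+uv^\intercal)B = I_n$. This gives invertibility of $A+uv^\intercal$ and identifies its inverse with $B$, since a right inverse of a square matrix is automatically two-sided.

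The computation goes as follows. First expand
\[
(A+uv^\intercal)B = I + uv^\intercal A^{-1} - \beta\, uv^\intercal A^{-1} - \beta\, u (v^\intercal A^{-1}u) v^\intercal A^{-1}.
\]
The only point needing care is the last term. There, $v^\intercal A^{-1} u$ is a scalar $c$, so it can be pulled out, and the last three terms collapse to
\[
\bigl(1 - \beta - \beta c\bigr)\, uv^\intercal A^{-1} = \bigl(1 - \beta(1+c)\bigr)\, uv^\intercal A^{-1} = 0,
\]
because $\beta(1+c)=1$ by definition of $\beta$. Hence $(A+uv^\intercal)B=I$.

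There is no genuine obstacle here. The one subtle step is invoking the fact that a one-sided inverse of a square matrix is two-sided. If one prefers not to rely on that, the symmetric computation $B(A+uv^\intercal)=I$ goes through identically, this time grouping the scalar $v^\intercal A^{-1}u$ from the middle factor.

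As an optional remark for later use in the algorithm: the converse also holds. If $1+c=0$, then $A^{-1}u \neq 0$ (otherwise $c=0$), and $(A+uv^\intercal)A^{-1}u = u(1+c) = 0$. So $A+uv^\intercal$ is singular, and the hypothesis of the lemma is exactly the right nondegeneracy condition.
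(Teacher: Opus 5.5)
Your proof is correct: the expansion of $(A+uv^\intercal)B$ collapses to $I$ exactly as you show, and concluding from a right inverse is valid for square matrices. The paper gives no proof of this lemma and simply cites the classical Sherman--Morrison identity, so your direct verification supplies the standard argument that the citation relies on.
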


This idea of approximating the natural gradient without matrix inversion was, to our knowledge, first mentioned in the Euclidean setting by \citet{amari2000adaptive}. More recently, its application to variational Bayes has been explored, still on $\mathbb{R}^d$ by \citet{godichon2024natural}; we now briefly review their method and adopt their notation. 

For each $s=1,2\ldots,$ let $\theta^{(s)}\in \mathbb{R}^d$ denote the $s$-th algorithm iterate, then define
\begin{equation}\label{eq:Hiform}
  H_{s+1}:= H_0+\sum_{k=0}^s\nabla \ell_{\bar{y}_{k+1}}(\theta^{(k)})\nabla \ell_{\bar{y}_{k+1}}(\theta^{(k)})^\top,\quad \bar{y}_{k+1}\sim q_{\theta^{(k)}}; \qquad \mathbf{H}_{s+1} := \frac{1}{s+1}H_{s+1}.
\end{equation}
where $H_0$ is some positive definite matrix such as $ H_0=\epsilon I$ with $\epsilon > 0$ and $I$ the identity. The matrix $\mathbf{H}^{-1}_{s+1}$ is used as a substitute for $I_F^{-1}(\theta^{(s)})$ in a natural gradient descent update
\begin{equation}
    \theta^{(s+1)} := \theta^{(s)} - \tau_{s+1} \mathbf{H}^{-1}_{s+1} \widehat{\nabla \mathcal{L}}(\theta^{(s)}),
\end{equation}
where $\tau_{s+1}$ is a step-size sequence, and $\mathcal{L}$ the (negative) evidence lower bound (ELBO). \cite{godichon2024natural} show that $\theta^{(s)}$ converges to a stationary point $\theta^*$ of $\mathcal{L}$, and that $\mathbf{H}^{-1}_s \rightarrow I_{F}^{-1}(\theta^*)$. The updated inverse $H_{s+1}^{-1}$ can be computed from $H_s^{-1}$ using \eqref{eq:sherman-morrison} as
\begin{equation}\label{eq:IF recursive}
H_{s+1}^{-1}= H^{-1}_s-\left(1+\phi^\top_{s+1} H_{s}^{-1}\phi_{s+1}\right)^{-1}
 H_s^{-1}\phi_{s+1}\phi_{s+1}^\top H_{s}^{-1}, \qquad \phi_{s+1}:=\nabla \ell_{\bar{y}_{s+1}}(\theta^{(s)}).
\end{equation}

We now generalize this procedure to a Riemannian manifold $\mathcal{M}$; let $\mathcal{L}:\mathcal{M}\rightarrow\mathbb{R}$ denote our objective function, $\mathcal{Q}:=\{q_\theta : \theta \in \mathcal{M}\}$ our family of densities, and $\theta^{(s)} \in \mathcal{M}$ the $s$-th algorithm iterate.
% In the following, we let $\mathcal{L}: \mathcal{M} \rightarrow \mathbb{R}$ denote a general objective function, $\{ q_\theta : \theta \in \mathcal{M}\}$ our family of densities with $\mathcal{M}$ a Riemannian manifold, and $\theta^{(s)}\in \mathcal{M}$ the $s$-th (current) iterate of the algorithm.
The algorithm consists of three repeated steps: transport the approximation to the inverse Fisher from $T_{\theta^{(s-1)}}\mathcal{M}$ to $T_{\theta^{(s)}}\mathcal{M}$, update it using score vector(s) from $q_{\theta^{(s)}}$, and then update the manifold iterate $\theta^{(s)}\rightarrow\theta^{(s+1)}$. We consider each step in more detail.

\subsection{Transporting Inverse Fisher Approximation}\label{sec:Transporting Inverse Fisher Approximation}

The main challenge in the Riemannian setting is that we cannot directly combine vectors from different tangent spaces; hence, we need a mechanism to move $\mathbf{H}^{-1}_s$ from $T_{\theta^{(s-1)}}\mathcal{M}$ to $T_{\theta^{(s)}}\mathcal{M}$. This is typically achieved using a vector transport operation, which generalises the notion of parallel transport along geodesics; see e.g. \citet[Section 8.1]{absil2009optimization}. 

Informally, a vector transport $\mathcal{T}$ associated with a retraction $\mathcal{R}$ is a smooth map which, given $(x,u) \in T\mathcal{M}$, defines a linear map $\mathcal{T}_{u_x}:T_x\mathcal{M} \rightarrow T_{\mathcal{R}_x(u)}\mathcal{M}$ where $\mathcal{T}_{0_x} := \mathrm{Id}_x$.

To aid clarity, for $x,y \in  \mathcal{M}$ where $\mathcal{R}^{-1}_x(y)$ is well defined, we denote
\begin{equation}
    \mathcal{T}_{x, y}:=\mathcal{T}_{\mathcal{R}^{-1}_x(y)} : T_x\mathcal{M} \rightarrow T_y\mathcal{M}, \qquad \mathcal{T}_{s,s\pm1}:=\mathcal{T}_{\theta^{(s)}, \theta^{(s\pm1)}}.
\end{equation}
Then, to move $H_s^{-1}$, we compose it with two transports
\begin{equation}
    H_{s+1/2}^{-1} := \mathcal{T}_{s,s-1}^* \circ H_s^{-1} \circ \mathcal{T}_{s,s-1} : T_{\theta^{(s)}}\mathcal{M} \rightarrow T_{\theta^{(s)}}\mathcal{M},
    \label{eq:fisher-transport}
\end{equation}
where $\mathcal{T}^*_{s,s-1}$ is the adjoint of $\mathcal{T}_{s,s-1}$ with respect to the baseline metric. It ensures that $\mathbf{H}_s^{-1}$ remains self-adjoint, which simplifies our analysis. In practice, one can replace the (adjoint) transport with any suitable linear map between the corresponding tangent spaces.

\begin{remark}
    %The implementation of \eqref{eq:fisher-transport} involves matrix multiplication; if the transport operators are dense matrices, then our per-iteration cost becomes $O(d^\kappa)$ where $2 < \kappa \leq 3$. For standard matrix manifolds, $\mathcal{T}$ can be chosen to have a convenient low-dimensional structure (e.g. a sum of Kronecker products). This results in a lower computation complexity, and can be substantially accelerated via parallel computing. We explore this in more detail in the experiments.
    % The implementation and computational cost of \eqref{eq:fisher-transport} are dictated by the underlying manifold structure. 
    % For standard matrix manifolds, the vector transport $\mathcal{T}$ typically possesses an exploitable algebraic structure (e.g., Kronecker product form). 
    % The presence of these structures circumvents dense matrix operations, resulting in a lower computational complexity that can be further accelerated via parallel computing. We explore this in more detail in the experiments.
    The implementation of \eqref{eq:fisher-transport} involves matrix multiplication; if the transport operators are dense matrices, then the cost is comparable to direct matrix inversion. However, one can often choose $\mathcal{T}$ with a convenient algebraic structure that reduces this cost. This is the case for most matrix manifolds of practical interest \citep{absil2009optimization}, where e.g. transports based on tangent-space projections typically decompose into a sum of Kronecker products. This results in a strictly lower computational complexity than dense matrix multiplication. Such transports are further amenable to evaluation using parallel computing.
\end{remark}

\subsection{Updating Inverse Fisher Approximation}

Following the transportation, our next task is to update $H_{s+\frac{1}{2}}^{-1}$ using score vectors from $q_{\theta^{(s)}}$
\begin{equation}
H_{s+1} = H_{s+\frac{1}{2}} + \phi_{s+1}\phi_{s+1}^\top G_{\theta^{(s)}}, \qquad \phi_{s+1} = \nabla_\theta \log{q_{\theta^{(s)}}(\bar{y}_{s+1})}, \qquad \bar{y}_{s+1} \sim q_{\theta^{(s)}}.
\end{equation}
Recall from the expression for $I_F(\theta)$ in \eqref{eq:Fisher-operator-def}, the additional $G_{\theta^{(s)}}$ term corresponds to the matrix representation of the baseline metric at $\theta^{(s)}$. To compute $H_{s+1}^{-1}$, we apply \eqref{eq:sherman-morrison}

\begin{equation} \label{eq:fisher-inversefree-update}
    H_{s+1}^{-1} = H_{s+1/2}^{-1} - (1 + \langle \phi_{s+1}, H_{s+1/2}^{-1}\phi_{s+1}\rangle_{\theta^{(s)}} )^{-1}\cdot H_{s+1/2}^{-1} \phi_{s+1} (G_{\theta^{(s)}}\phi_{s+1})^{\top} H_{s+1/2}^{-1}.
\end{equation}

Then $\mathbf{H}^{-1}_{s+1}:=(\tfrac{1}{s+1}H_{s+1})^{-1}$ is our updated approximation to $I_F^{-1}(\theta^{(s)})$. One can, of course, incorporate multiple score vectors per iteration via repeated applications of this operation.

\subsection{Natural Gradient Step} \label{sec:natural-gradient-step}

For the natural gradient step, let $\nabla \mathcal{L}$ denote the Riemannian gradient of the objective $\mathcal{L}$ and $\widehat{\nabla \mathcal{L}}$ its stochastic estimate. In our analysis, we assume these take the following form
\begin{equation} \label{eq:stochastic-gradient}
    \nabla \mathcal{L}(\theta^{(s)}) = \mathbb{E}_{Y \sim p}[g (Y; \theta^{(s)})], \qquad \widehat{\nabla \mathcal{L}}(\theta^{(s)}) = \frac{1}{B}\sum^B_{i=1} g(y_{s+1,i}; \theta^{(s)}),
\end{equation}
where $y_{s+1,i} \sim p$, and $p$ is some distribution which can possibly depend on $\theta^{(s)}$. This framework covers several important problems including the MLE problem in \eqref{eq:MLE loss} where $p$ is the empirical distribution and $g(y;\theta)=-\nabla_\theta \log{q_\theta(y)}$.
In the VB problem \eqref{eq:VB loss}, $p=q_{\theta^{(s)}}$ and 
\begin{align}\label{eq:VB g function}
    g(y;\theta)= \log{\frac{q_\theta(y)}{\bar\pi(y)}}\times \nabla_\theta \log{q_\theta (y)}.
\end{align}
% The gradient $\nabla \mathcal{L}(\theta^{(s)}) = \mathbb{E}_{Y \sim p}[g (Y; \theta^{(s)})]$ with $g(y;\theta)$ in \eqref{eq:VB g function} is a Riemannian generalization of the so-called (Euclidean) {\it score-function gradient} in the VB literature; the derivation is essentially the same as in the Euclidean setting \citep{godichon2024natural}.
% The reparameterization gradient \citep{kingma2013auto} also fits into this framework.

The gradient $\nabla \mathcal{L}(\theta^{(s)}) = \mathbb{E}_{Y \sim p}[g (Y; \theta^{(s)})]$ with $g(y;\theta)$ in \eqref{eq:VB g function} is a Riemannian generalization of the so-called (Euclidean) {\it score-function gradient} in the VB literature; see the derivation in Appendix \ref{appendix:score and reparameterization gradient}.
This appendix also presents a generalization of the {\it reparameterization-trick gradient} \citep{kingma2013auto}.

%is the variational density of the current parameter. Let $\pi(y)$ denote a (potentially un-normalized) target distribution, and $\mathcal{L}(\theta) = \mathbb{E}_{Y\sim q_\theta}[ \log{\tfrac{q_\theta(Y)}{\pi(Y)}}]$ the negative ELBO, its gradient is then
%\begin{equation}
%   \nabla \mathcal{L}(\theta) = \mathbb{E}_{Y\sim q_\theta}[\log{\tfrac{q_\theta(Y)}{\pi(Y)}}\times \nabla_\theta \log{q_\theta (Y)}].
%\end{equation}
%The details are identical to the Euclidean setting; see \citet{godichon2024natural}.

For a step size sequence $\tau_s$, the approximate natural gradient update is then
\begin{equation} \label{eq:approx-ngd-update}
    \theta^{(s+1)} := \mathcal{R}_{\theta^{(s)}}\left(v_{s+1}\right), \qquad v_{s+1} := -\tau_{s+1} \mathbf{H}^{-1}_{s+1} \widehat{\nabla \mathcal{L}}(\theta^{(s)}).
\end{equation}

\begin{algorithm}
    \caption{Riemannian Inverse Free Natural Gradient Descent} \label{alg:riemannian-if-ngd}
    \begin{algorithmic}
        \Require Initial parameter $\theta^{(0)}=\theta^{(-1)}$; $\epsilon>0$; step sizes $\tau_s$; transport $\mathcal{T}$; retraction $\mathcal{R}$.
        \State $H_0 = \epsilon I \in T_{\theta^{(0)}}\mathcal{M}$
        \For{$s= 0,1,...$}
            \State Transport $H_s^{-1}$ from $T_{\theta^{(s-1)}}\mathcal{M}$ to $T_{\theta^{(s)}}\mathcal{M}$ using \eqref{eq:fisher-transport}, yielding $H_{s+1/2}^{-1}$.
            \vspace{0.2cm}
            \State Sample $\bar{y}_{s+1} \sim q_{\theta^{(s)}}$ and let $\phi_{s+1} = \nabla_\theta \log{q_{\theta^{(s)}}(\bar{y}_{s+1})}\in T_{\theta^{(s)}}\mathcal{M}$.
            \vspace{0.2cm}
            \State Update $H_{s+1/2}^{-1}$ using $\phi_{s+1}$ via \eqref{eq:fisher-inversefree-update}, yielding $H_{s+1}^{-1}$; let $\mathbf{H}^{-1}_{s+1} = (s+1)H_{s+1}^{-1}$.
            \vspace{0.2cm}
            \State Compute stochastic gradient $\widehat{\nabla \mathcal{L}}(\theta^{(s)})$ using $y_{s+1,i} \sim p$ and \eqref{eq:stochastic-gradient}.
            \vspace{0.2cm}
            \State Compute update direction $v_{s+1} = -\tau_{s+1}\mathbf{H}_{s+1}^{-1}\widehat{\nabla \mathcal{L}}(\theta^{(s)})$.
            \vspace{0.2cm}
            \State Update manifold iterate $\theta^{(s+1)}=\mathcal{R}_{\theta^{(s)}}(v_{s+1})$.
            \vspace{0.2cm}
        \EndFor \newline
        \Return $\theta^{(s)}$
    \end{algorithmic}
\end{algorithm}

% % SHORTEST VERSION - Use this if we decide to add a third example?

\subsection{Limited memory approximation}\label{rem:limited-memory-approximation} For high dimensional problems, it can be infeasible to store $\mathbf{H}_s^{-1}$ as a dense $d\times d$ matrix. One alternative is to maintain a ``sliding window'' approximation, which only incorporates the $K \ll d$ most recent score vectors. Let $\theta \in \mathcal{M}$, and $\phi_1,...,\phi_K \in T_\theta \mathcal{M}$ denote these vectors after re-ordering the indices from newest to oldest. Then, by repeated application of the Sherman-Morrison formula to $H_K:=H_0+\sum^K_{s=1}\phi_s\phi_s^\top G_\theta$, we have the following vectorized representation of the inverse

\begin{equation} \label{eq:inv-moving-approximation}
    H^{-1}_K = \frac{1}{\epsilon}I-\sum^{K-1}_{s=0} c_s \psi_s \psi_s^\top G_\theta,
\end{equation}
where $\psi_s := H_s^{-1} \phi_{s+1}$ and $c_s:= (1+\langle \phi_{s+1}, \psi_{s}\rangle_\theta)^{-1}$. This representation requires storing $K$ $\psi$-vectors and scalars rather than a full matrix. In Appendix \ref{appendix:vectorized-fisher}, we show that transporting $H_K^{-1}$ reduces to transporting each $\psi_s$. Furthermore, the oldest score vector can be removed, and a new one incorporated via a recursive procedure with $O(dK)$ complexity. This approach resembles that of \citet[Remark 4.2]{godichon2024natural}, but tracks the exact inverse of the moving average rather than an approximation.

\section{Convergence Analysis}\label{sec:Convergence Analysis}

In this section we analyse the convergence of the Riemannian inversion-free natural gradient descent method described in the previous section (Algorithm \ref{alg:riemannian-if-ngd}). The results and their proofs are inspired by \citet{godichon2024natural}. We begin by presenting our assumptions.

\begin{assumption} \label{assumption:gradient}
    The Riemannian (stochastic) gradient of the objective takes the form
    \begin{equation}
    \nabla \mathcal{L}(\theta^{(s)}) = \mathbb{E}_{Y\sim p}[g(Y;\theta^{(s)})]\in T_{\theta^{(s)}} \mathcal{M}, \qquad \widehat{\nabla \mathcal{L}}(\theta^{(s)}) = \tfrac{1}{B}\sum^B_{i=1}g(y_{s+1,i};\theta^{(s)}),
\end{equation}
where $y_{s+1,i} \sim p$ and $p$ is some distribution which may optionally depend on $\theta$.
\end{assumption}

As mentioned earlier, this framework covers several important problems; for example, maximum likelihood estimation and variational Bayes. 

\begin{assumption} \label{assumption:retraction}
    The algorithm iterates are restricted to a compact set $\mathcal{X}\subset \mathcal{M}$ which is \textit{totally retractive} w.r.t. the second-order retraction $\mathcal{R}$. The transport is $\mathcal{T}_{x,y} := D\mathcal{R}_x(\mathcal{R}^{-1}_x(y))$.
\end{assumption}

The assumptions involving $\mathcal{X}$ are common in stochastic Riemannian optimisation methods involving retractions \citep{bonnabel2013stochastic, zhang2016first, sato2019riemannian}. The assumption of compactness is only used to ensure certain regularity properties of $\mathcal{T}$ (\Cref{lemma:vector-transport-bounds}, Appendix \ref{appx:helpful-results}), which are necessary for the proof of Theorem \ref{thm:global-convergence}, and can sometimes be omitted when $\mathcal{R}=\exp$ (\Cref{rem:no-compactness condition}, Appendix \ref{appx:helpful-results}). The notion of a totally retractive neighborhood is described in \citet{huang2015broyden}, and ensures that $\mathcal{T}$ and $\mathcal{R}^{-1}$ are well-defined wherever they appear. Second-order retractions are commonly available for many matrix manifolds \citep{absil2012projection}. To streamline our presentation, we assume that $\mathcal{T}=D\mathcal{R}$, which has been used in e.g. the literature on the Riemannian BFGS algorithm \citep{ring2012optimization, huang2015broyden}. However, this is not strictly necessary, but provides a neat sufficient condition for the more technical conclusions of Lemmas \ref{lemma:tripuraneni-averaging} - \ref{lemma:transport-pt} (Appendix \ref{appx:helpful-results}).

\begin{assumption} \label{assumption:step-size}
    The step size sequence is $\tau_k = \tfrac{c_\alpha}{(c_\alpha'+k)^\alpha}$ where $c_\alpha, c_\alpha' >0$ and $\alpha \in (\tfrac{1}{2},1)$.
\end{assumption}

\begin{assumption} \label{assumption:eigenvalues}
    The eigenvalues of $\mathbf{H}^{-1}_s$ are bounded above by $O(s^\beta)$ a.s., $\beta \in [0,\alpha-\tfrac{1}{2}).$
\end{assumption}

The above assumption is required to ensure that the iterates of the algorithm do not diverge. Such explicit eigenvalue bounds are common in the analysis of adaptive stochastic optimisation methods \citep{godichon2023non, boyer2023asymptotic, godichon2025adaptive}. In \citet{godichon2024natural}, the authors circumvent this assumption by incorporating a tapered noise sequence into the construction of $\mathbf{H}_s$, which prevents its eigenvalues from decreasing faster than $O(s^{-\beta})$. The strategy appears viable in the Riemannian setting, albeit with restrictions on $\mathcal{T}$, as repeated application of non-isometric vector transport can distort the decay rate of the noise sequence.

The following theorem concerns the global behavior of the iterates of the algorithm.
%demonstrates that the iterates of our algorithm converge almost-surely to a stationary point of the objective.

\begin{theorem} \label{thm:global-convergence}
    Given assumptions \ref{assumption:gradient}-\ref{assumption:eigenvalues}, and the following conditions on $\mathcal{X}$:
    \begin{enumerate}
        \item The objective $\mathcal{L}$ is differentiable and $L_0$-smooth\footnote{For all $x\in \mathcal{X}$ and $v\in T_x\mathcal{M}$, $t\rightarrow\mathcal{L}\circ\mathcal{R}_x(tv)$ is $L_0$-smooth for $t\in \mathbb{R}$ such that $\mathcal{R}(tv)\in \mathcal{X}$. The notions of ordinary/strict/strong convexity with respect to $\mathcal{R}$ are defined similarly.} with respect to $\mathcal{R}$.
        \item There exists a stationary point $\theta^*$ such that $\nabla \mathcal{L}(\theta^*) = 0$.
        \item There exist constants $C_0, C_1 \geq 0$ such that for all $\theta$ visited by the algorithm:
        \begin{equation} \label{assumption:g-2nd-moment}
            \mathbb{E}_{Y \sim p}\left[ \Vert g(Y; \theta)\Vert_\theta^2\right] \leq C_0 + C_1 (\mathcal{L}(\theta) - \mathcal{L}(\theta^*)).
        \end{equation}
        \item There exist constants $C_0',C_1'>0$ such that for all $\theta$ visited by the algorithm:
        \begin{equation} \label{assumption:q-4th-moment}
            \mathbb{E}_{Y \sim q_\theta}\left[ \Vert \nabla_\theta \log q_\theta(Y)\Vert^4_\theta\right] \leq C_0' + C_1' (\mathcal{L}(\theta) - \mathcal{L}(\theta^*))^2.
        \end{equation}
    \end{enumerate}
    Then the iterates $\theta^{(s)}$ generated by the algorithm satisfy:
    \begin{itemize}
        \item $\mathcal{L}(\theta^{(s)})-\mathcal{L}(\theta^*)$ converges almost surely to a finite random variable.
        \item $\min_{k=0}^s \Vert \nabla \mathcal{L}(\theta^{(k)})\Vert^2_{\theta^{(k)}} = o(s^{-(1-\alpha)})$ almost surely.
    \end{itemize}
\end{theorem}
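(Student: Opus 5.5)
The plan is a Robbins--Siegmund supermartingale argument applied to $V_s := \mathcal{L}(\theta^{(s)})-\mathcal{L}(\theta^*)$. Let $\mathcal{F}_s$ be the $\sigma$-field generated by all samples up to iteration $s$. Then $\theta^{(s)}$ and $\mathbf{H}_{s+1}^{-1}$ are measurable w.r.t. $\mathcal{F}_s \vee \sigma(\bar y_{s+1})$. In what follows I treat $\bar y_{s+1}$ as independent of the gradient samples $y_{s+1,i}$; this independence is what lets us condition on $\mathbf{H}^{-1}_{s+1}$.

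\textbf{Descent inequality.} By $L_0$-smoothness of $t\mapsto \mathcal{L}\circ\mathcal{R}_{\theta^{(s)}}(tv)$ along the retraction curve, and since $\tfrac{d}{dt}\mathcal{L}(\mathcal{R}_x(tv))|_{t=0}=\langle\nabla\mathcal{L}(x),v\rangle_x$ (because $D\mathcal{R}_x(0)=\mathrm{Id}$), we get
\[
V_{s+1}\le V_s-\tau_{s+1}\langle \nabla\mathcal{L}(\theta^{(s)}),\mathbf{H}^{-1}_{s+1}\widehat{\nabla\mathcal{L}}(\theta^{(s)})\rangle_{\theta^{(s)}}+\tfrac{L_0}{2}\tau_{s+1}^2\|\mathbf{H}^{-1}_{s+1}\widehat{\nabla\mathcal{L}}(\theta^{(s)})\|^2_{\theta^{(s)}}.
\]
Now take conditional expectations given $\mathcal{F}_s\vee\sigma(\bar y_{s+1})$.
\begin{itemize}
\item The linear term becomes $-\tau_{s+1}\langle\nabla\mathcal{L},\mathbf{H}^{-1}_{s+1}\nabla\mathcal{L}\rangle$, using Assumption \ref{assumption:gradient}.
\item The quadratic term is bounded by $\lambda_{\max}(\mathbf{H}^{-1}_{s+1})^2(C_0+C_1V_s)/B$. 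By Assumption \ref{assumption:eigenvalues} this is at most $C\,s^{2\beta}(C_0+C_1V_s)$ a.s.
\end{itemize}
Since $\alpha-\beta>\tfrac12$, we have $\sum_s \tau_{s+1}^2 s^{2\beta}<\infty$. This gives
\[
\mathbb{E}[V_{s+1}\mid \cdot\,]\le (1+a_s)V_s+b_s-\tau_{s+1}\lambda_{\min}(\mathbf{H}^{-1}_{s+1})\|\nabla\mathcal{L}(\theta^{(s)})\|^2,
\]
with $\sum a_s,\sum b_s<\infty$. Here I use that $\mathbf{H}^{-1}_{s+1}$ is self-adjoint positive definite w.r.t. the baseline metric; this is exactly why the adjoint transport \eqref{eq:fisher-transport} was chosen. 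Robbins--Siegmund then gives that $V_s$ converges a.s. to a finite limit, and also
\[
\sum_s\tau_{s+1}\lambda_{\min}(\mathbf{H}^{-1}_{s+1})\|\nabla\mathcal{L}(\theta^{(s)})\|^2<\infty \quad\text{a.s.}
\]
The first bullet of the theorem follows. The remaining work is a lower bound on $\lambda_{\min}(\mathbf{H}^{-1}_{s+1})$, equivalently an upper bound $\lambda_{\max}(\mathbf{H}_{s+1})=O(1)$ a.s.

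\textbf{Main obstacle: bounding $\mathbf{H}_{s+1}$.} Unrolling the recursion, $H_{s+1}$ is $H_0$ plus the outer products $\phi_{k+1}\phi_{k+1}^\top G$, each carried forward by products of the transports $\mathcal{T}_{j,j-1}$ (or their inverses/adjoints). Each transport is taken along a step of length $\|v_{j}\|=O(\tau_j s^{\beta}\|\widehat{\nabla\mathcal{L}}\|)$.

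\emph{Operator norm of the transports.} On the compact totally retractive $\mathcal{X}$, with $\mathcal{T}=D\mathcal{R}$, the vector-transport bounds (\Cref{lemma:vector-transport-bounds}) should give $\|\mathcal{T}_{x,y}\|\le 1+c\,d(x,y)$. A product of such factors is then bounded by $\exp(c\sum_j\|v_j\|)$. The sum $\sum_j\tau_j j^\beta\|\widehat{\nabla\mathcal{L}}_j\|$ need not converge, so I would not try to bound this product uniformly. Instead I would use the weaker fact that $\sum_j \|v_j\|^2<\infty$ a.s. (from the $L^2$ bound above), and estimate the growth of $\lambda_{\max}(H_{s+1})$ recursively:
\[
\lambda_{\max}(H_{s+1})\le(1+c\|v_s\|)^2\lambda_{\max}(H_s)+\|\phi_{s+1}\|^2 .
\]

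\emph{Averaging the score terms.} Divide by $s+1$. The fourth-moment condition \eqref{assumption:q-4th-moment}, combined with the a.s. boundedness of $V_s$ just established, makes $\|\phi_{k}\|^2$ have uniformly bounded conditional second moments. A martingale strong law then gives $\frac1{s}\sum_k\|\phi_k\|^2=O(1)$ a.s.

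\emph{Controlling the transport factor.} If the transport growth resists a direct argument, I would settle for a polynomial bound $\lambda_{\max}(\mathbf{H}_{s+1})=O(s^{\gamma})$ with small $\gamma$. I expect Lemmas \ref{lemma:tripuraneni-averaging}--\ref{lemma:transport-pt} to supply exactly this kind of near-isometry control: comparison to parallel transport, plus the second-order retraction error $O(\|v\|^2)$, which is summable. That would give $\lambda_{\min}(\mathbf{H}^{-1}_{s+1})\ge c>0$ eventually.

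\textbf{Rate.} With $\lambda_{\min}(\mathbf{H}^{-1}_{s+1})\ge c$, we have $\sum_s\tau_{s+1}\|\nabla\mathcal{L}(\theta^{(s)})\|^2<\infty$. Write $m_s:=\min_{k\le s}\|\nabla\mathcal{L}(\theta^{(k)})\|^2$. Then
\[
m_s\sum_{k=\lceil s/2\rceil}^{s}\tau_{k+1}\le\sum_{k\ge s/2}\tau_{k+1}\|\nabla\mathcal{L}(\theta^{(k)})\|^2\to0,
\]
since this is a tail of a convergent series. The left sum is $\asymp s^{1-\alpha}$, so $m_s=o(s^{-(1-\alpha)})$ a.s., which is the second bullet.
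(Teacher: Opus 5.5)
\textbf{Gap: the a.s.\ bound on $\lambda_{\max}(\mathbf{H}_{s+1})$.} This is the step you flag as the main obstacle, and it is left unresolved. You take the transport to deviate from an isometry \emph{linearly}, $\Vert\mathcal{T}_{x,y}\Vert\le 1+c\,d(x,y)$. With that, your recursion $\lambda_{\max}(H_{s+1})\le(1+c\Vert v_s\Vert)^2\lambda_{\max}(H_s)+\Vert\phi_{s+1}\Vert^2$ does not close: only $\sum_j\Vert v_j\Vert^2<\infty$ is available, so $\prod_j(1+c\Vert v_j\Vert)^2$ may diverge.

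Your concrete fallback does not prove the theorem as stated. A bound $\lambda_{\max}(\mathbf{H}_{s+1})=O(s^\gamma)$ gives only $\lambda_{\min}(\mathbf{H}^{-1}_{s+1})\gtrsim s^{-\gamma}$. That yields $\sum_s\tau_{s+1}s^{-\gamma}\Vert\nabla\mathcal{L}(\theta^{(s)})\Vert^2<\infty$, hence a rate $o(s^{-(1-\alpha-\gamma)})$ rather than $o(s^{-(1-\alpha)})$, and no constant lower bound.

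The decisive fact, which your last sentence only anticipates, is that the deviation is \emph{quadratic}. Since $\mathcal{T}=D\mathcal{R}$ with $\mathcal{R}$ second order, $G_x(v)=(\Gamma^{\mathcal{R}}_{v_x})^{-1}\circ\mathcal{T}_{v_x}$ satisfies $G_x(0)=\mathrm{Id}_x$ with vanishing first derivative. Parallel transport is isometric, so on the compact $\mathcal{X}$ one gets $(1-c\Vert v\Vert_x^2)\Vert w\Vert_x\le\Vert\mathcal{T}_{v_x}w\Vert\le(1+c\Vert v\Vert_x^2)\Vert w\Vert_x$; this is \Cref{lemma:vector-transport-bounds}.

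Because $H_{s+1/2}=\mathcal{T}^{-1}_{s,s-1}H_s\mathcal{T}^{-*}_{s,s-1}$, you need the \emph{lower} half of that lemma. It gives $\Vert\mathcal{T}^{-1}_{s,s-1}\Vert_{\mathrm{op}}\le 1+O(\Vert v_s\Vert^2)$. To get there, compare $\Vert\mathcal{R}^{-1}_{\theta^{(s)}}(\theta^{(s-1)})\Vert$ with $\Vert v_s\Vert$ via \Cref{lemma:retraction-ineq}, using uniform constants on $\mathcal{X}$. The lemma applies once the step is below $\delta$, which holds eventually a.s.; the finitely many earlier factors contribute a finite random constant. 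Then $\prod_j(1+c\Vert v_j\Vert^2)^2<\infty$ a.s.\ precisely because $\sum_j\Vert v_j\Vert^2<\infty$. This is how the paper closes the step.

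\textbf{Comparison once repaired.} With this fix your argument is correct, and it takes a somewhat more elementary route than the paper. You bound $\lambda_{\max}(H_{s+1})\le P\,(\epsilon+\sum_{k\le s}\Vert\phi_{k+1}\Vert^2)$ with $P<\infty$ a.s. You then apply a martingale strong law to $\Vert\phi_{k+1}\Vert^2$, whose conditional second moments are bounded by $C_0'+C_1'W_k^2$, finite since $W_k$ converges.

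The paper instead writes $\mathbf{H}_{s+1}=R_{s+1}+M_{s+1}$. It shows $\Vert M_{s+1}\Vert\to0$ a.s.\ by Robbins--Siegmund on the Hilbert--Schmidt norm. It bounds $R_{s+1}$ as an average of transported $I_F(\theta^{(k)})$, using $\Vert I_F(\theta^{(k)})\Vert^2\le C_0'+C_1'W_k^2$. That heavier decomposition is reused for the lower eigenvalue bound and for Fisher consistency, but for this theorem your scalar recursion suffices.

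Your descent inequality, the Robbins--Siegmund step for the first bullet, and your tail-sum derivation of $o(s^{-(1-\alpha)})$ all agree with the paper.
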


 Note, if e.g. $\mathcal{L}$ is strictly convex with respect to $\mathcal{R}$ and has bounded level sets, then $\theta^{(s)}$ converges a.s. to a unique minimizer.
 % or satisfies a Riemannian Polyak-Łojasiewicz inequality\footnote{That is, we have $\tfrac{1}{2}\Vert \nabla \mathcal{L}(\theta)\Vert^2_{\theta} \geq \mu(\mathcal{L}(\theta) - \mathcal{L}(\theta^*))$. This holds if, e.g. $\mathcal{L}$ is $\mu$-strongly convex along geodesics.}, then $\theta^{(s)}$ converges almost surely to a minimizer. 
 See Appendix \ref{appx:proof-global-convergence} for a proof of Theorem \ref{thm:global-convergence}. 
 
 The following theorem shows that our method attains the usual rate of convergence for stochastic gradient algorithms with polynomial step size when $\alpha >\tfrac{2}{3}$. 
 The proof strategy resembles the approach in \citet{godichon2024natural}, albeit involving the linearised iterates $\mathcal{R}^{-1}_{\theta^*}(\theta^{(s)})\in T_{\theta^*}\mathcal{M}$ in the tangent space of the limit point. This linearisation process introduces an additional error term in our analysis, which becomes dominant when $\alpha \leq \tfrac{2}{3}$, leading to a sub-optimal convergence rate in this regime.
 % The reduced convergence rate for $\alpha \in (\tfrac{1}{2}, \tfrac{2}{3}]$ is a consequence of the curvature of the manifold.

\begin{theorem} \label{thm:convergence-rate}
    Given assumptions \ref{assumption:gradient}-\ref{assumption:eigenvalues}, conditions (3) and (4) from Theorem \ref{thm:global-convergence}, and:
    \begin{enumerate}
        \item The sequence of iterates $\theta^{(s)}$ converges to $\theta^*$ almost surely.
        \item The objective $\mathcal{L}$ is twice-differentiable in a neighborhood of $\theta^*$ with $\nabla \mathcal{L}(\theta^*) = 0$.
        \item The Riemannian Hessian\footnote{With respect to the Levi-Civita connection of the base metric.} $\nabla^2 \mathcal{L}(\theta^*)$ and $I_F(\theta^*)$ are positive-definite.
        \item There exists a constant $\eta > \tfrac{1}{\alpha} -1$ and $C_{\eta,0}, C_{\eta, 1}>0$ such that for all $\theta \in \mathcal{X}$:
        \begin{equation}
            \mathbb{E}_{Y \sim p}[\Vert g(Y;\theta)\Vert^{2+2\eta}_{\theta}] \leq C_{\eta, 0} + C_{\eta, 1}(\mathcal{L}(\theta) - \mathcal{L}(\theta^*))^{1+\eta}.
        \end{equation}
        \vspace{-1.5cm}
    \end{enumerate}
    Let $d(\cdot, \cdot)$ denote the Riemannian distance with respect to the baseline metric, then for $\delta>0$
        \begin{equation}
            d(\theta^{(s)},\theta^*)^2 = O\left(\max{\left(\frac{\log{s}}{s^\alpha}, \frac{\log{s}^{4+\delta}}{s^{4\alpha-2}}\right)} \right), \qquad \text{a.s.}
        \end{equation}
    In particular, we have $d(\theta^{(s)},\theta^*)^2 = O(\tfrac{\log{s}}{s^\alpha})$ when $\alpha > \tfrac{2}{3}$.
\end{theorem}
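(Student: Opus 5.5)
We linearise the iterates at the limit point and run a Euclidean-style recursion in the fixed vector space $T_{\theta^*}\mathcal{M}$, following the strategy of \citet{godichon2024natural}. Since $\theta^{(s)}\to\theta^*$ a.s., for $s$ large the iterates lie in a totally retractive neighbourhood of $\theta^*$, so we may set $\Delta_s := \mathcal{R}^{-1}_{\theta^*}(\theta^{(s)})\in T_{\theta^*}\mathcal{M}$. Because $\mathcal{R}$ is second order and $\mathcal{X}$ is compact, $d(\theta^{(s)},\theta^*)$ and $\Vert\Delta_s\Vert_{\theta^*}$ are equivalent. It therefore suffices to bound $\Vert \Delta_s\Vert^2$.

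\textbf{Step 1 (linearised recursion).} We expand $\Delta_{s+1}=\mathcal{R}^{-1}_{\theta^*}(\mathcal{R}_{\theta^{(s)}}(v_{s+1}))$ to second order in $v_{s+1}$. We pull $v_{s+1}$ back to $T_{\theta^*}\mathcal{M}$ using the transport $\mathcal{T}_{\theta^{(s)},\theta^*}$ (or its inverse) and compare it with parallel transport via the results of Appendix \ref{appx:helpful-results} (Lemmas \ref{lemma:vector-transport-bounds}--\ref{lemma:transport-pt}). We aim for
\[
\Delta_{s+1}=\Delta_s-\tau_{s+1}\widetilde{\mathbf{H}}_{s+1}^{-1}\big(\nabla^2\mathcal{L}(\theta^*)\Delta_s+\xi_{s+1}+r_s\big)+\rho_{s+1}.
\]
Here $\widetilde{\mathbf{H}}_{s+1}^{-1}$ is the pulled-back preconditioner, and $\xi_{s+1}$ is the transported gradient noise, which is a martingale difference. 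The remainder $r_s=O(\Vert\Delta_s\Vert^2)$ comes from the Taylor expansion of $\nabla\mathcal{L}$ around $\theta^*$. The geometric error $\rho_{s+1}=O(\tau_{s+1}\Vert v\Vert\,\Vert\Delta_s\Vert+\Vert v_{s+1}\Vert^2)$ collects the retraction, transport and curvature effects. We use Assumption \ref{assumption:eigenvalues} and condition (4) to control $\Vert v_{s+1}\Vert$ almost surely.

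\textbf{Step 2 (preconditioner convergence).} We show that $\widetilde{\mathbf{H}}_{s+1}^{-1}\to I_F^{-1}(\theta^*)$ a.s. We write $\mathbf{H}_{s+1}$, pulled back, as an average of transported score outer products plus $H_0/(s+1)$. Each summand differs from $\phi\phi^\top G$ at $\theta^*$ by $O(d(\theta^{(k)},\theta^*))$. The martingale part, which has bounded fourth moments by condition (4) of Theorem \ref{thm:global-convergence}, is $O(\sqrt{\log s/s})$ a.s. The averaged bias part vanishes by a Cesàro argument (Lemma \ref{lemma:tripuraneni-averaging}). Positive definiteness of $I_F(\theta^*)$ then gives eventual uniform bounds on the eigenvalues of $\widetilde{\mathbf{H}}_{s+1}^{-1}$. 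Consequently $\widetilde{\mathbf{H}}^{-1}\nabla^2\mathcal{L}(\theta^*)$ has spectrum bounded below by some $\lambda_{\min}>0$, because it is similar to a positive definite matrix.

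\textbf{Step 3 (rates).} We first obtain a crude rate $\Vert\Delta_s\Vert^2=O(\log s^{1+\delta}/s^{\alpha})$, or at least $o(1)$ with a polynomial rate. We get it from a Robbins--Siegmund argument on $\Vert\Delta_s\Vert^2$, weighted so that the linear term contracts, using the higher moment $2+2\eta$ with $\eta>1/\alpha-1$ to apply martingale laws of large numbers. We then refine by splitting $\Delta_s$ as in \citet{godichon2024natural} into three parts: a noise part, a deterministic contracted part, and a remainder driven by $r_s+\rho_{s+1}/\tau$. The noise part $\sum_k \prod_{j>k}(I-\tau_j A)\tau_k\xi_{k+1}$ gives $O(\log s/s^\alpha)$ via the standard martingale bound. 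The deterministic contracted part decays exponentially. The remainder is dominated by the $\Vert v_{k}\Vert^2$ and $\Vert\Delta_k\Vert^2$ terms. Inserting the crude rate gives a contribution of order $\tau_s^{-1}\cdot\tau_s^2\cdot(\text{polylog})$ squared, which produces the $\log s^{4+\delta}/s^{4\alpha-2}$ term. The maximum of the two contributions is the claim, and the second is negligible exactly when $4\alpha-2>\alpha$, that is, when $\alpha>2/3$.

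\textbf{Main obstacle.} The main obstacle is Step 1. We must control the geometric error $\rho_{s+1}$ uniformly and relate the successive transports $\mathcal{T}_{s,s-1}$, which are accumulated inside $\mathbf{H}_s^{-1}$, to a single transport into $T_{\theta^*}\mathcal{M}$. For this we plan to use that $\mathcal{T}=D\mathcal{R}$ agrees with parallel transport to first order, so compositions around small loops deviate from the identity by $O(\text{step}\times\text{distance})$. This deviation must be shown to be summable against $\tau_k$.
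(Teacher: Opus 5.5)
Your gap is in Step 2, and Step 3 depends on it. You cannot establish $\tilde{\mathbf{H}}^{-1}_{s+1}\to I_F^{-1}(\theta^*)$ at this point by the argument you sketch. After pulling back to $T_{\theta^*}\mathcal{M}$, the $k$-th bias summand of $\mathbf{H}_{s+1}$ is $(\Gamma_{s,*}\mathcal{T}^{-1}_{[s,k]}\Gamma_{*,k})\,(\Gamma_{k,*}I_F(\theta^{(k)})\Gamma_{*,k})\,(\Gamma_{k,*}\mathcal{T}^{-*}_{[s,k]}\Gamma_{*,s})$. This depends on the whole path $\theta^{(k)}\to\cdots\to\theta^{(s)}$, and your $O(d(\theta^{(k)},\theta^*))$ estimate covers only the middle factor.

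The outer factors carry accumulated holonomy. By Corollary~\ref{lemma:triangle-transport}, their deviation from $\mathrm{Id}$ is bounded by $\sum_{n=k}^{s-1}\bigl[d(\theta^{(n)},\theta^{(n+1)})\,d(\theta^{(n)},\theta^*)+d(\theta^{(n)},\theta^{(n+1)})^2\bigr]$. Here $d(\theta^{(n)},\theta^{(n+1)})$ is typically of order $\tau_n$, which is not summable. So knowing only $d(\theta^{(n)},\theta^*)\to0$ does not make this small.

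Ces\`aro averaging does not rescue it. Even for the indices $k\ge s/2$, the available bound is of order $s^{1-\alpha}\sup_{n\ge s/2}d(\theta^{(n)},\theta^*)$, which again needs a rate. Lemma~\ref{lemma:tripuraneni-averaging} does not help either: it is a local expansion of $[\Gamma^{\mathcal{R}}]^{-1}\circ\mathcal{T}$, not an averaging result.

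This is exactly the obstacle you flag at the end. It is also why the paper proves consistency of $\mathbf{H}_s$ only afterwards, in Theorem~\ref{thm:fisher-consistency}. There, the rate from the present theorem and $\alpha>\tfrac{2}{3}$ are what make the holonomy sum converge. So Step 2 is circular.

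As a sanity check: if Step 2 worked as written, it would give consistency of the Fisher approximation for every $\alpha\in(\tfrac{1}{2},1)$, directly from $\theta^{(s)}\to\theta^*$. The paper does not obtain this. Without Step 2 you also lose the fixed matrix $A$ and the deterministic weights in your noise term $\sum_k\prod_{j>k}(I-\tau_jA)\tau_k\xi_{k+1}$.

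The paper avoids Step 2 entirely. It keeps the random, time-varying factors $J_{k+1}=\mathrm{Id}-\tau_{k+1}\tilde{\mathbf{H}}^{-1}_{k+1}\nabla^2\mathcal{L}(\theta^*)$ in the recursion. It uses only the eventual two-sided eigenvalue bounds of Lemma~\ref{lemma:eigenvalue-lower-bound}. Their proof needs just near-isometry of composite transports, from Lemma~\ref{lemma:vector-transport-bounds} together with $\sum_s\Vert v_{s+1}\Vert^2<\infty$, and no holonomy control. The paper then handles $M^2$ by adapting Theorem~6.1 of \citet{cenac2020efficient} through the bound $\Vert\beta_{n,k}\Vert_{\mathrm{op}}\le\prod_j(1-c\tau_j)$.

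If you adopt this route, be careful with the contraction. Write $A=\nabla^2\mathcal{L}(\theta^*)$. A positive spectrum for each $\tilde{\mathbf{H}}^{-1}_{k+1}A$ does not control products of time-varying factors. The unweighted bound $\Vert J_{k+1}\Vert_{\mathrm{op}}\le 1-c\tau_{k+1}$ requires $\tilde{\mathbf{H}}^{-1}_{k+1}A+A\tilde{\mathbf{H}}^{-1}_{k+1}$ to be uniformly positive definite, and eigenvalue bounds alone do not give this. The paper's $J_{k+1}^*J_{k+1}$ line takes it for granted. The weighted norm $x\mapsto\langle x,\tilde{\mathbf{H}}_{k+1}x\rangle^{1/2}$ does give a genuine one-step contraction, but you must then control how the weight changes between steps.

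Finally, Step 3 needs no preliminary crude rate, and your heuristic for the second term is off: $(\tau_s^{-1}\tau_s^2)^2=\tau_s^2\asymp s^{-2\alpha}$, not $s^{-(4\alpha-2)}$. In the paper the second term arises as follows:
\begin{itemize}
\item $\Vert\Delta_s\Vert^2\lesssim\sum_i\Vert M^i_s\Vert^2$, and $\Vert M^3_s\Vert^2$ is absorbed into the contraction using $\Vert M^3_s\Vert\to0$.
\item This gives $\mathbb{E}[\Vert M^3_{s+1}\Vert\mid\mathcal{F}_s]\le(1-\tilde c s^{-\alpha})\Vert M^3_s\Vert+O(\log s/s^{2\alpha})$.
\item Robbins--Siegmund with weight $s^{2\alpha-1}/\log^{2+\delta}s$ then yields $\Vert M^3_s\Vert=O(\log^{2+\delta}s/s^{2\alpha-1})$.
\item The square of this bound is the $\log^{4+\delta}s/s^{4\alpha-2}$ term.
\end{itemize}
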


See Appendix \ref{appx:convergence-rate-proof} for a proof of \Cref{thm:convergence-rate}. Finally, we can prove that our approximation to the Fisher operator is asymptotically exact. In \citet{godichon2024natural} this follows from consistency $\theta^{(s)} \rightarrow \theta^{*}$. Here, our strategy involves formulating a recursive expression for $\mathbf{H}_s$ localized in $T_{\theta^*}\mathcal{M}$, which requires transportation along geodesic triangles $\theta^* \rightarrow \theta^{(n)} \rightarrow \theta^{(n+1)} \rightarrow \theta^*$. This produces errors due to holonomy; loosely, the failure of parallel transport around a loop to return a vector to its original state. The errors are proportional to the ``area'' of these triangles, and controlling them requires that $\alpha > \tfrac{2}{3}$ and the utilisation of the preceding theorem. It is intriguing to consider whether a non-localised analysis is possible, and whether this can yield convergence rates in the $\alpha \leq \tfrac{2}{3}$ regime.

% These errors are a consequence of holonomy; loosely, the failure of parallel transport along a closed loop to return a vector to its original state. The condition ensures the iterate trajectory is sufficiently regular for the resulting errors to remain negligible.

\begin{theorem} \label{thm:fisher-consistency}
    Assuming conditions of Theorem \ref{thm:convergence-rate}, and $I_F(\theta)$ is locally Lipschitz\footnote{We take this to mean that there exists an open neighborhood $U \subseteq \mathcal{M}$ around $\theta^*$ and a constant $K>0$ s.t. $\forall \theta \in U$ we have $\Vert \Gamma_{\theta,\theta^*} \circ I_F(\theta) \circ \Gamma_{\theta^*,\theta} -I_F(\theta^*)\Vert_{\mathrm{op}}\leq K d(\theta,\theta^*)$. See e.g. Section 10.4 in \citet{boumal2023introduction}} at $\theta^*$
    \begin{equation}
        \Vert\Gamma_{\theta^{(s)}, \theta^*} \circ \mathbf{H}_{s+1} \circ \Gamma_{\theta^*,\theta^{(s)}} - I_F(\theta^*)\Vert_{\text{op}} = O\left(\frac{\log{s}^{1+\delta}}{s^{3\alpha/2-1}}\right), \qquad \text{a.s.}
    \end{equation}
    For $\alpha \in (\tfrac{2}{3}, 1)$, where $\delta>0$ and $\Gamma_{x,y}$ denotes geodesic parallel transport from $x\rightarrow y$.
\end{theorem}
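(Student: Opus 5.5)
Our approach is to pull every object back to the fixed tangent space $T_{\theta^*}\mathcal{M}$ and mimic the Euclidean martingale argument of \citet{godichon2024natural}. Write $\Gamma_s := \Gamma_{\theta^{(s)},\theta^*}$ and define the localised operator $\widetilde{H}_{s+1} := \Gamma_s \circ H_{s+1} \circ \Gamma_s^{-1}$ on $T_{\theta^*}\mathcal{M}$. Note that $\Gamma_s^{-1}=\Gamma_{\theta^*,\theta^{(s)}}$. Inverting the transport step \eqref{eq:fisher-transport} gives $H_{s+1/2} = \mathcal{T}_{s,s-1}^{-1} \circ H_s \circ (\mathcal{T}^*_{s,s-1})^{-1}$. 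The target is therefore the recursion
\begin{equation*}
\widetilde{H}_{s+1} = \widetilde{H}_s + \Gamma_s\big(\phi_{s+1}\phi_{s+1}^\top G_{\theta^{(s)}}\big)\Gamma_s^{-1} + E_s,
\end{equation*}
where $E_s$ collects the mismatch between $\mathcal{T}_{s,s-1}^{-1}(\cdot)(\mathcal{T}^*_{s,s-1})^{-1}$ and the parallel-transport conjugation around the triangle $\theta^*\to\theta^{(s-1)}\to\theta^{(s)}\to\theta^*$. Telescoping gives $\mathbf{H}_{s+1}$ localised as $\frac{1}{s+1}\big(\widetilde H_0+\sum_k \Gamma_k\phi_{k+1}\phi_{k+1}^\top G\,\Gamma_k^{-1} + \sum_k E_k\big)$.

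The sum then splits into three parts.

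First, a martingale term $\frac{1}{s+1}\sum_k \Gamma_k\big(\phi_{k+1}\phi_{k+1}^\top G - I_F(\theta^{(k)})\big)\Gamma_k^{-1}$. The fourth-moment condition (4) of Theorem \ref{thm:global-convergence} holds, and $\mathcal{L}(\theta^{(k)})$ is a.s. bounded by Theorem \ref{thm:global-convergence}. A standard martingale law of the iterated logarithm / Robbins--Siegmund type bound then gives $O(\log s^{1+\delta}/\sqrt{s})$ a.s.

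Second, a bias term $\frac{1}{s+1}\sum_k \big(\Gamma_k I_F(\theta^{(k)})\Gamma_k^{-1}-I_F(\theta^*)\big)$. The local Lipschitz hypothesis bounds this by $\frac{K}{s}\sum_k d(\theta^{(k)},\theta^*)$. By Theorem \ref{thm:convergence-rate} with $\alpha>\tfrac23$, $d(\theta^{(k)},\theta^*)=O(\sqrt{\log k}\,k^{-\alpha/2})$, so this term is $O(\sqrt{\log s}\,s^{-\alpha/2})$.

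Third, the initial term $\widetilde H_0/(s+1)$, which is trivially $O(1/s)$.

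The main obstacle is the holonomy error $\frac{1}{s+1}\sum_k E_k$. We will use Lemmas \ref{lemma:tripuraneni-averaging}--\ref{lemma:transport-pt} (Appendix \ref{appx:helpful-results}), together with $\mathcal{T}=D\mathcal{R}$ for a second-order retraction, to show two facts on the compact set $\mathcal{X}$. One is that $\mathcal{T}_{x,y}$ agrees with $\Gamma_{x,y}$ up to $O(d(x,y)^2)$ in operator norm. The other is that the composition $\Gamma_{\theta^{(k)},\theta^*}\Gamma_{\theta^{(k-1)},\theta^{(k)}}\Gamma_{\theta^*,\theta^{(k-1)}}$ differs from the identity by $O\big(d(\theta^{(k-1)},\theta^{(k)})\,d(\theta^{(k-1)},\theta^*)\big)$, which is the curvature times the triangle area. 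Since $\Vert H_k\Vert = O(k)$ (again by the fourth-moment bound and averaging), this gives
\begin{equation*}
\Vert E_k\Vert \lesssim k\big(d_{k}d_{k,k-1}+d_{k,k-1}^2\big),
\end{equation*}
with $d_k:=d(\theta^{(k)},\theta^*)$ and $d_{k,k-1}:=d(\theta^{(k)},\theta^{(k-1)})$. The step satisfies $d_{k,k-1}\le \tau_k \Vert\mathbf{H}_k^{-1}\Vert\,\Vert\widehat{\nabla\mathcal{L}}\Vert$. Near $\theta^*$ the gradient estimate splits into a mean part of order $d_k$ and noise of order $1$, so a naive bound is $d_{k,k-1}=O(k^{-\alpha}\log k^{\delta})$. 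Here we use condition (4) of Theorem \ref{thm:convergence-rate} with Borel--Cantelli to control the maximum of the noise, and the eventual boundedness of $\mathbf{H}_k^{-1}$.

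Plugging in, the cumulative holonomy error is $\frac{1}{s}\sum_k k\cdot k^{-\alpha}\log k^{\delta}\cdot k^{-\alpha/2}\sqrt{\log k}=O\big(\log s^{1+\delta}\, s^{1-3\alpha/2}\big)$. This is the dominant rate, and it is $o(1)$ exactly when $\alpha>\tfrac23$. Getting this product bound on the triangle area, rather than a crude $O(d_k)$ bound which would not close, is where I expect the real work.

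Combining the three terms, the holonomy error dominates the $s^{-1/2}$ and $s^{-\alpha/2}$ rates, which yields the stated bound.
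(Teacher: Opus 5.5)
\textbf{Gap: the pointwise step bound.} Your claim $d_{k,k-1}=O(k^{-\alpha}(\log k)^{\delta})$ a.s. does not follow from the hypotheses, and it is false in general.

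\begin{itemize}
\item Condition (4) of \Cref{thm:convergence-rate} gives only one finite moment, of order $2+2\eta$. Conditional Markov plus Borel--Cantelli then yields $\Vert\widehat{\nabla\mathcal{L}}(\theta^{(k)})\Vert\le t_k$ eventually only when $\sum_k t_k^{-(2+2\eta)}<\infty$.
\item No polylogarithmic $t_k$ satisfies this. For independent noise with polynomial tail index $p$, the second Borel--Cantelli lemma even gives exceedances of $k^{1/p}$ infinitely often.
\item The honest pointwise bound is therefore $d_{k,k-1}=O(k^{-\alpha+1/(2+2\eta)+\epsilon})$. Plugged into your sum, this makes the holonomy error $O(s^{1-3\alpha/2+1/(2+2\eta)+\epsilon})$.
\item Since $\eta$ is only assumed to exceed $1/\alpha-1$, this exponent can be positive. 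For $\alpha=0.7$ and $\eta$ near $1/\alpha-1$ it is about $+0.3$.
\end{itemize}
So, as written, the argument gives neither the stated rate nor, in general, consistency. The triangle-area estimate you flag as the hard part is immediate from \Cref{lemma:triangle-transport}; the real work is in summing it.

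\textbf{The repair.} Control the steps only in a weighted $\ell^2$ sense, which needs just second moments. Eventual boundedness of $\mathbf{H}^{-1}_{s+1}$ and condition (3) give $\mathbb{E}[\Vert v_{s+1}\Vert^2_{\theta^{(s)}}\mid\mathcal{F}_s]=O(s^{-2\alpha})$. Robbins--Siegmund then gives $\sum_s c_s\Vert v_s\Vert^2_{\theta^{(s-1)}}<\infty$ a.s., with $c_s=s^{2\alpha-1}/(\log s)^{1+\delta}$. Combining $d_{k,k-1}=O(\Vert v_k\Vert)$, $d_k^2=O(\log k/k^{\alpha})$ and Cauchy--Schwarz,
\[
\sum_{k\le s}k\,d_kd_{k,k-1}\le\Big(\sum_{k\le s}c_kd_{k,k-1}^2\Big)^{1/2}\Big(\sum_{k\le s}k^2c_k^{-1}d_k^2\Big)^{1/2}=O\big(s^{2-3\alpha/2}(\log s)^{1+\delta/2}\big).
\]
The remaining term satisfies $\sum_{k\le s}k\,d_{k,k-1}^2=O(s^{2-2\alpha}(\log s)^{1+\delta})$. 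Dividing by $s$ recovers the claimed rate.

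\textbf{Comparison with the paper.} This weighted summability is exactly the device in the paper's proof, but the paper organises the argument differently.
\begin{itemize}
\item It keeps the split $\mathbf{H}_{s+1}=R_{s+1}+M_{s+1}$ from \Cref{lemma:eigenvalue-lower-bound}.
\item It proves $\sup_{s\ge k}\Vert\Gamma_{s,*}\mathcal{T}^{-1}_{[s,k]}\Gamma_{*,k}-\mathrm{Id}\Vert_{\mathrm{op}}=O((\log k)^{1+\delta}/k^{3\alpha/2-1})$ using products of one-step triangle errors and tail sums.
\item It then averages this bound over $k$.
\end{itemize}
Once repaired, your additive recursion in $T_{\theta^*}\mathcal{M}$, with $\Vert E_k\Vert\le O(k)\,\Vert\Gamma_{k,*}\mathcal{T}^{-1}_{k,k-1}\Gamma_{*,k-1}-\mathrm{Id}\Vert_{\mathrm{op}}$, is a valid summation-by-parts version of the same bookkeeping, since $\sum_{k\le s}\sum_{j\ge k}a_j\approx\sum_{j\le s}j\,a_j$. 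It has a mild advantage: your martingale term lives in the fixed space $T_{\theta^*}\mathcal{M}$ and needs no transport estimates.
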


% Finally, we prove that the averaged iterate $\bar{\theta}^{(s)}$ is asymptotically efficient, and that the averaged iterates linearized in the tangent space of the stationary point $\mathcal{R}^{-1}_{\theta^*}(\bar{\theta}^{(s)})$ satisfy a central limit theorem.

% \begin{theorem} \label{thm:averaged-convergence-rate}
%     Assume the following conditions hold:
%     \begin{itemize}
%         \item The conditions stated in Theorem \ref{thm:convergence-rate}.
%         \item The function $\theta \rightarrow \Sigma(\theta) := \mathbb{E}_{Y \sim q_\theta}\left[\langle g(Y;\theta), \cdot \rangle_\theta \langle g(Y;\theta), \cdot\rangle_\theta \right]$ is continuous at $\theta^*$.
%         \item We have $\alpha \in (3/4,1)$.
%     \end{itemize}
%     Then, letting $\bar{\theta}^{(s)} = R_{\theta^{(s)}}(w_s)$
%     \begin{equation}
%         d(R_{\theta^{(s)}}(w_s), \theta^*)^2 = O\big( \frac{\log{s}}{s}\big), \qquad \text{a.s.}
%     \end{equation}
%     Furthermore, in an orthornomal basis on $T_{\theta^*} \mathcal{M}$
%     \begin{equation}
%         \sqrt{Bs} R_{\theta^*}^{-1}(\bar{\theta}^{(s)}) \underset{\text{law}}{\rightarrow} \mathcal{N}(0, \nabla^2 \mathcal{L}(\theta^*)^{-1} \Sigma (\theta^*) \nabla^2 \mathcal{L}(\theta^*)^{-1})
%     \end{equation}
% \end{theorem}

See Appendix \ref{appx:fisher-consistency-proof} for a proof of \Cref{thm:fisher-consistency}.

\begin{remark}
    In \cite{godichon2024natural}, they incorporate a Polyak-Ruppert averaging step, and demonstrate that the averaged iterate $\bar{\theta}^{(s)}$ converges to $\theta^*$ in the squared Euclidean distance as $O(\log{s}/s)$ almost surely, and that $\sqrt{s}(\bar{\theta}^{(s)}-\theta^*)$ satisfies a central limit theorem. Polyak-Ruppert averaging has been studied in the manifold setting; see e.g. \citet{tripuraneni2018averaging}. However, we leave this direction to future work.
\end{remark}

\section{Example: Gaussian Variational Inference}\label{sec:Example: Bures-Wasserstein Natural Gradient}

In this section we study Gaussian variational inference for a Bayesian logistic regression model. The natural gradient has a simple closed expression for Gaussian distributions, so we can assess the quality of the inverse-free updates relative to their ground truth, and gauge the effect of the geometry. Specifically, we compare exact Fisher-preconditioned natural gradient methods against their approximate (i.e., inverse-free) counterparts in both a Euclidean and a Riemannian baseline geometry. Full details for this section are provided in Appendix \ref{appx:bw-nat-grad}.

\paragraph{Model \& Objective:} Let $\mathcal{D}=\{x_i,y_i\}^n_{i=1}$ denote a classification dataset with features $x_i \in \mathbb{R}^d$ and binary responses $y_i \in \{0,1\}$. Let $\sigma^2>0$ be the prior variance; we consider
\begin{equation} \label{eq:logistic-regression}
    y_i \mid x_i, \beta \sim \mathrm{Bernoulli}\left((1+\exp(-\beta^\top x_i))^{-1}\right), \qquad \beta \sim \mathcal{N}(0, \sigma^2I_d).
\end{equation}
We approximate the posterior $\pi(\beta \mid \mathcal{D})$ with a Gaussian variational density $q_{\mu,\Sigma}$ by minimising the KL divergence (via the negative ELBO); see \eqref{eq:VB loss}.%; see Section \ref{sec:Background} for more details.

% \begin{equation} \label{eq:gauss-kl-minimize}
%     (\mu^*, \Sigma^*)=\argmin_{(\mu,\Sigma) \in \mathbb{R}^d \times S_d^{++}}\big\{\mathcal{L}(\mu,\Sigma) := \mathbb{E}_{\beta \sim q_{\mu,\Sigma}}[\log{q}_{\mu,\Sigma}(\beta)-\log p(\beta, \mathcal{D})]\big\}.
% \end{equation}

\paragraph{Baseline Manifold:} For the Riemannian inverse-free algorithm, we employ the Bures-Wasserstein manifold $\mathrm{BW}(\mathbb{R}^d)$ as the baseline Riemannian structure. This corresponds to the 2-Wasserstein space on $\mathbb{R}^d$ restricted to Gaussian distributions \citep{takatsu2011wasserstein}. \citet{lambert2022variational} and \citet{diao2023forward} studied variational inference on $\mathrm{BW}(\mathbb{R}^d)$ via Riemannian (proximal) gradient descent methods. The KL divergence is geodesically convex on $\mathrm{BW}(\mathbb{R}^d)$ when the $\pi$ is log-concave\footnote{That is, $\pi \propto \exp(-V)$ where $V:\mathbb{R}^d\rightarrow \mathbb{R}$ is convex. This is true for the model in \eqref{eq:logistic-regression}}. However, this is not guaranteed for the Fisher geometry, or the Euclidean covariance parameterisation \citep{challis2013gaussian}.

\paragraph{Algorithms:} The comparison follows a $2 \times 3$ factorial design based on the following update rule, with step size $\tau_s$, and $\widehat{\nabla \mathcal{L}}$ a stochastic gradient associated with the baseline geometry 
\begin{equation}
\theta_{s+1} = \mathcal{R}_{\theta_s} \left(-\tau_s \cdot  P_s[\widehat{\nabla \mathcal{L}}(\theta_s)]\right), \quad \mathrm{where} \quad \theta_s:=(\mu_s, \Sigma_s).
\end{equation}
The first choice concerns the baseline geometry and retraction $\mathcal{R}$; here we have:
\begin{itemize}[nosep, leftmargin=*]
    \item \textbf{Euclidean (``Euc'')}: additive updates in $(\mu,\Sigma)$ using full covariance parametrisation.
    \item \textbf{Bures-Wasserstein (``BW'')}: updates to $(\mu,\Sigma)$ use the $\mathrm{BW}(\mathbb{R}^d)$ exponential map.
\end{itemize}
The second choice is the linear operator $P_s$ (preconditioner):
\begin{itemize}[nosep, leftmargin=*]
    \item \textbf{Identity (``GD'')}: $P_s=\text{Id}$, Riemannian gradient in baseline geometry.
    \item \textbf{Exact (``NGD'')}: $P_s = I^{-1}_F(\theta^{(s)})$, natural gradient direction with exact Fisher.
    \item \textbf{Inverse-Free (``NGD Approx.'')}: $P_s = \mathbf{H}^{-1}_{s+1}$, Algorithm \ref{alg:riemannian-if-ngd} in baseline geometry.
\end{itemize}
The methods are indexed as (Geometry)-(Preconditioner).

\paragraph{Implementation:} For each method, the update direction is based on a stochastic estimate of the \textit{score function gradient} \eqref{eq:VB g function} with a Monte-Carlo sample of $100$. The step size takes the form $\tau_s = c_0/(100+s)^{\alpha}$, where $c_0,\alpha$ are selected via grid search for each method and dataset. Following covariance updates, we clip eigenvalues to $[10^{-8},\infty)$.
% ; this is also necessary on $\mathrm{BW}(\mathbb{R}^d)$ since the manifold is not complete. 
For the inverse-free method on $\mathrm{BW}(\mathbb{R}^d)$ we employ the differentiated exponential map as the transport.

% % ------------------------------------------------------------------------
% % The following code shows the figures in a 3 x 2 block arrangement.
% % ------------------------------------------------------------------------

% \vspace{0.5em}
% \begin{figure}[h]
%     \centering
%     % Row 1
%     \begin{subfigure}{0.4\textwidth}
%         \includegraphics[width=\linewidth]{logreg_ionosphere_cov_2.pdf}
%     \end{subfigure} 
%     \hspace{1em}
%     \begin{subfigure}{0.4\textwidth}
%         \includegraphics[width=\linewidth]{logreg_ionosphere_bw_2.pdf}
%     \end{subfigure}

%     \vspace{0.5em}

%     % Row 2
%     \begin{subfigure}{0.4\textwidth}
%         \includegraphics[width=\linewidth]{logreg_breastcancer_cov_2.pdf}
%     \end{subfigure}
%     \hspace{1em}
%     \begin{subfigure}{0.4\textwidth}
%         \includegraphics[width=\linewidth]{logreg_breastcancer_bw_2.pdf}
%     \end{subfigure}
    
%     \vspace{0.5em}
    
%     % Row 3
%     \begin{subfigure}{0.4\textwidth}
%         \includegraphics[width=\linewidth]{logreg_sonar_cov_2.pdf}
%     \end{subfigure}
%     \hspace{1em}
%     \begin{subfigure}{0.4\textwidth}
%         \includegraphics[width=\linewidth]{logreg_sonar_bw_2.pdf}
%     \end{subfigure}

%     \caption{NELBO versus iteration for the Bayesian logistic regression model. Left column: Euclidean covariance parameterisation. Right column: Bures-Wasserstein manifold. Datasets from top to bottom: Ionosphere, Breast Cancer, Sonar.}
%     \label{fig:logreg-small}
% \end{figure}

\vspace{0.5em}
\begin{figure}[h]
    \centering
    % Row 1
    \begin{subfigure}{0.32\textwidth}
        \includegraphics[width=\linewidth]{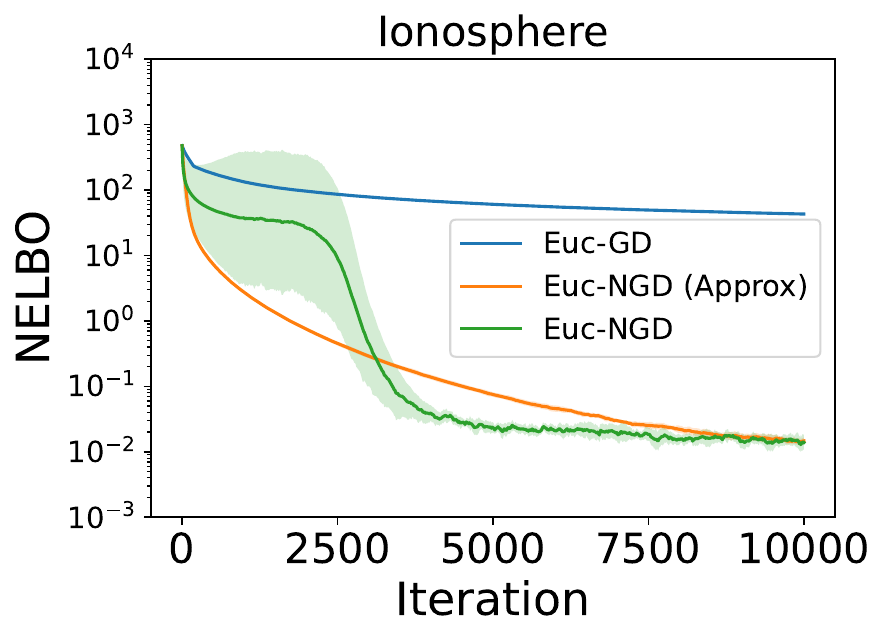}
    \end{subfigure}
    \hfill
    \begin{subfigure}{0.32\textwidth}
        \includegraphics[width=\linewidth]{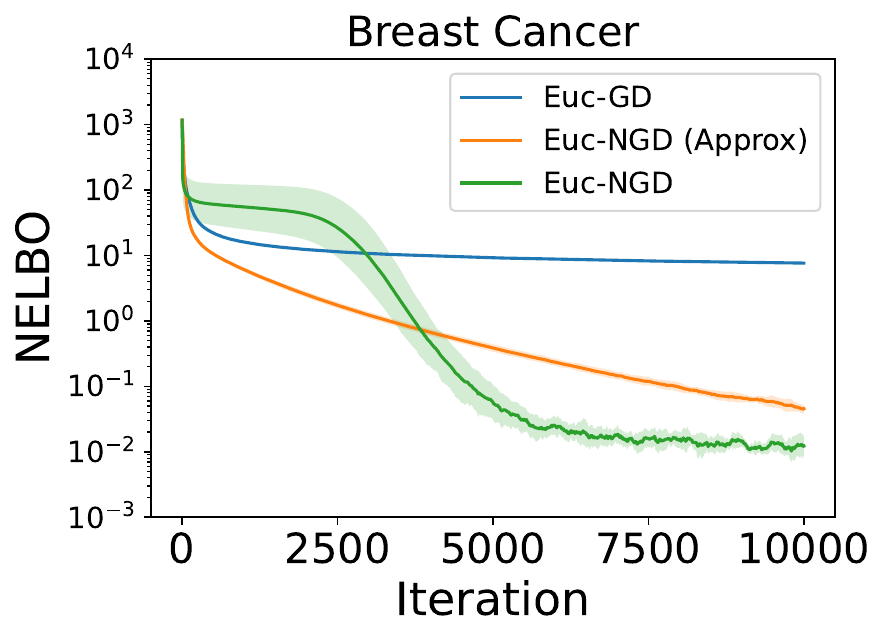}
    \end{subfigure}
    \hfill
    \begin{subfigure}{0.32\textwidth}
        \includegraphics[width=\linewidth]{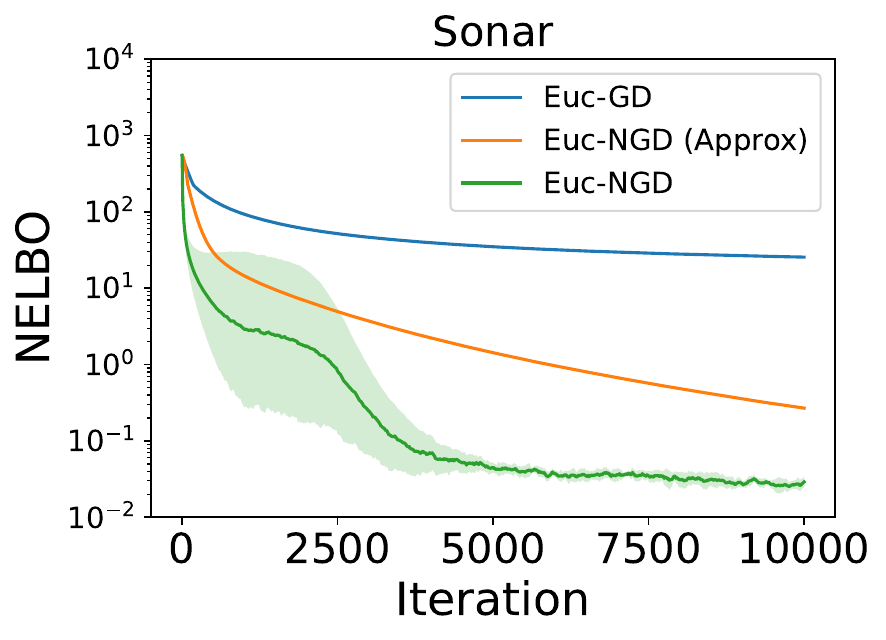}
    \end{subfigure}

    \vspace{0.5em} % vertical space between rows

    % Row 2
    \begin{subfigure}{0.32\textwidth}
        \includegraphics[width=\linewidth]{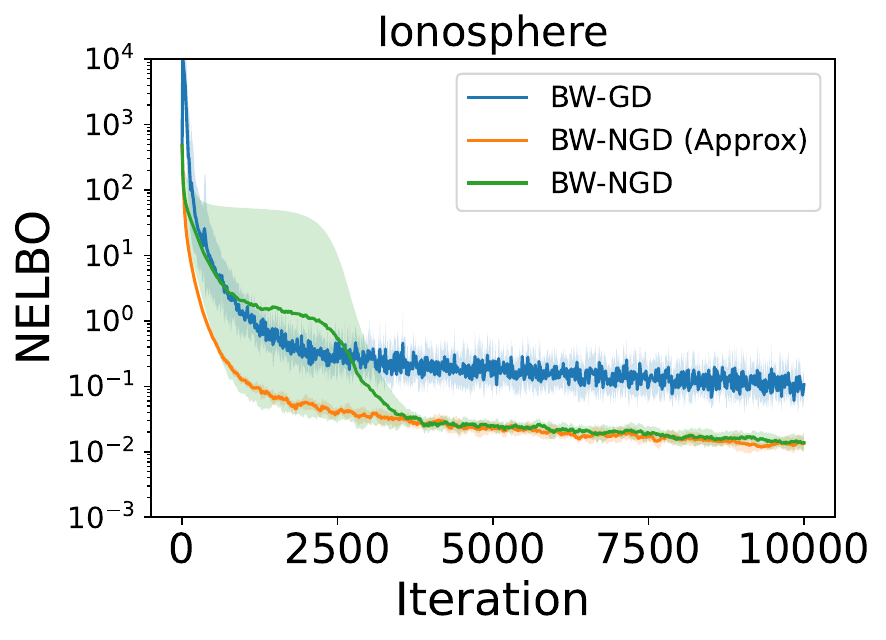}
    \end{subfigure}
    \hfill
    \begin{subfigure}{0.32\textwidth}
        \includegraphics[width=\linewidth]{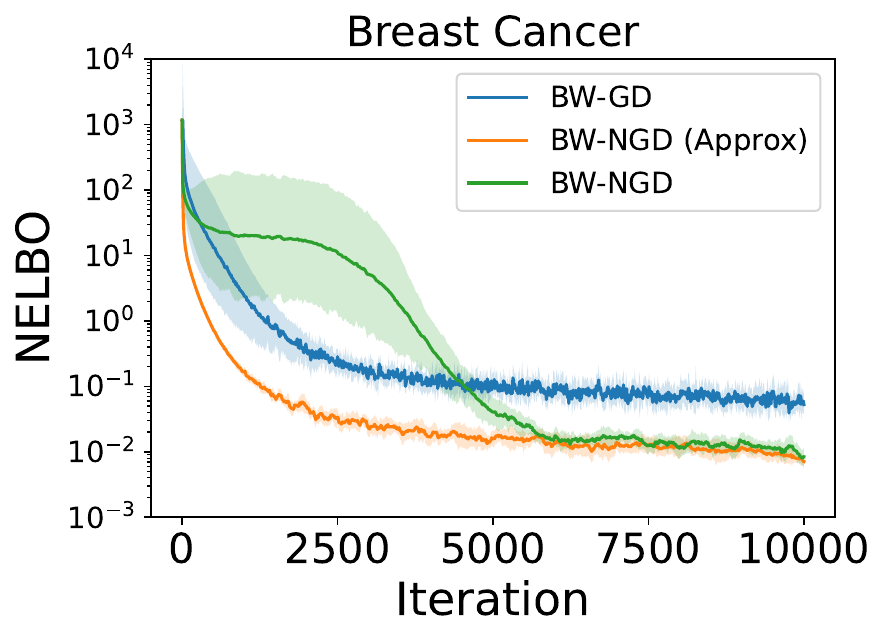}
    \end{subfigure}
    \hfill
    \begin{subfigure}{0.32\textwidth}
        \includegraphics[width=\linewidth]{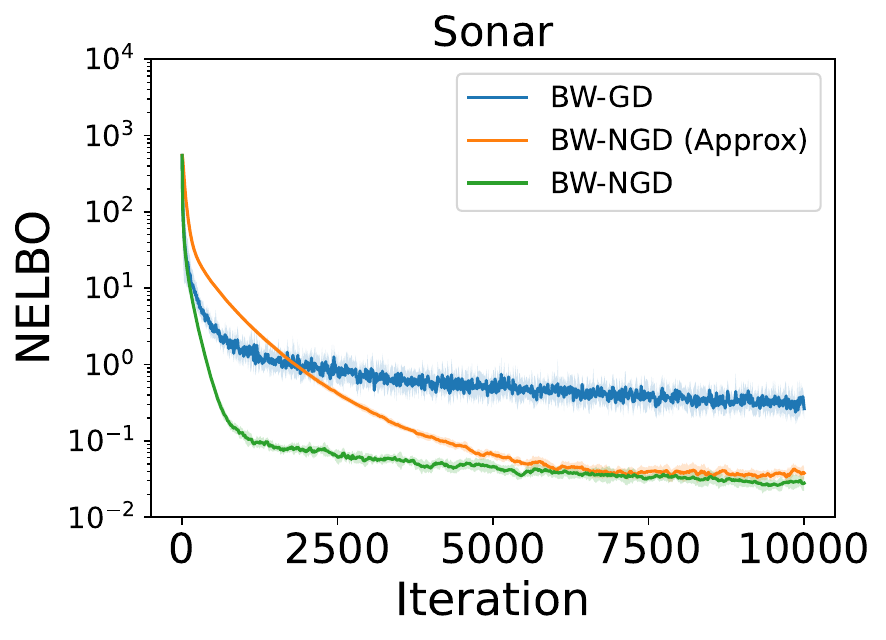}
    \end{subfigure}

    \caption{NELBO versus iteration for the Bayesian logistic regression model. Top row: Euclidean covariance parameterisation. Bottom row: Bures-Wasserstein manifold. Datasets from left to right: Ionosphere, Breast Cancer, Sonar.}
    \label{fig:logreg-small}
\end{figure}

\vspace{-0.5cm}
\paragraph{Smaller Datasets:} In \Cref{fig:logreg-small}, we consider three standard datasets from the UCI repository\footnote{\url{https://archive.ics.uci.edu/ml/index.php}}:  Ionosphere $(n=351, p=34)$, Breast Cancer Wisconsin Diagnostic $(n=569, p=30)$, and Sonar, Mines vs. Rocks $(n=208, p=60)$. The first row depicts the Euclidean methods, while the second row employs $\mathrm{BW}(\mathbb{R}^d)$ as the baseline Riemannian structure. Each figure reports the mean NELBO and standard error across ten runs initialized at $(\mu,\Sigma)=(0,I)$. The values are relative to a long run of exact natural gradient descent. 
%On all three datasets, BW-NGD Approx. converges more quickly than Cov-NGD Approx. and achieves a lower final NELBO, which is comparable to the exact methods. Both exact natural gradient methods (Cov-NGD and BW-NGD) exhibited considerable cross-run variation in early iterations but eventually converged to the same NELBO value.
We draw several conclusions.
First, regardless of the baseline metric, incorporating the Fisher information improves training efficiency. This is evidenced by the superior performance of Euc-NGD relative to Euc-GD and of BW-NGD relative to BW-GD.
Second, the approximate natural gradient performs comparably to its exact counterpart, albeit with a slightly longer initial stabilization phase.
Third, BW-NGD (Approx.) converges faster than Euc-NGD (Approx.) and attains a better final ELBO, supporting the effectiveness of the BW baseline metric.

\begin{figure}[H]
  \centering
  \begin{subfigure}[b]{0.524\textwidth}
    \centering
    \includegraphics[width=\linewidth]{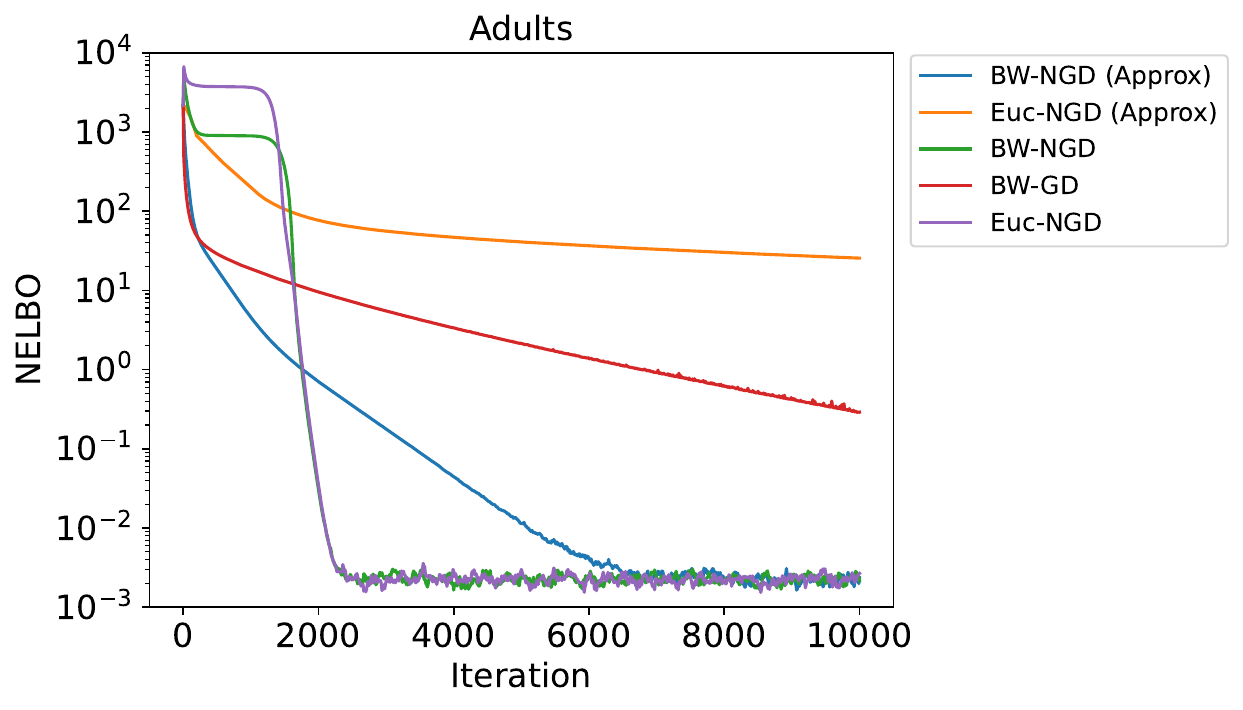}
    \label{fig:logreg-mushrooms}
  \end{subfigure}
  \begin{subfigure}[b]{0.4\textwidth}
    \centering
    \includegraphics[width=\linewidth]{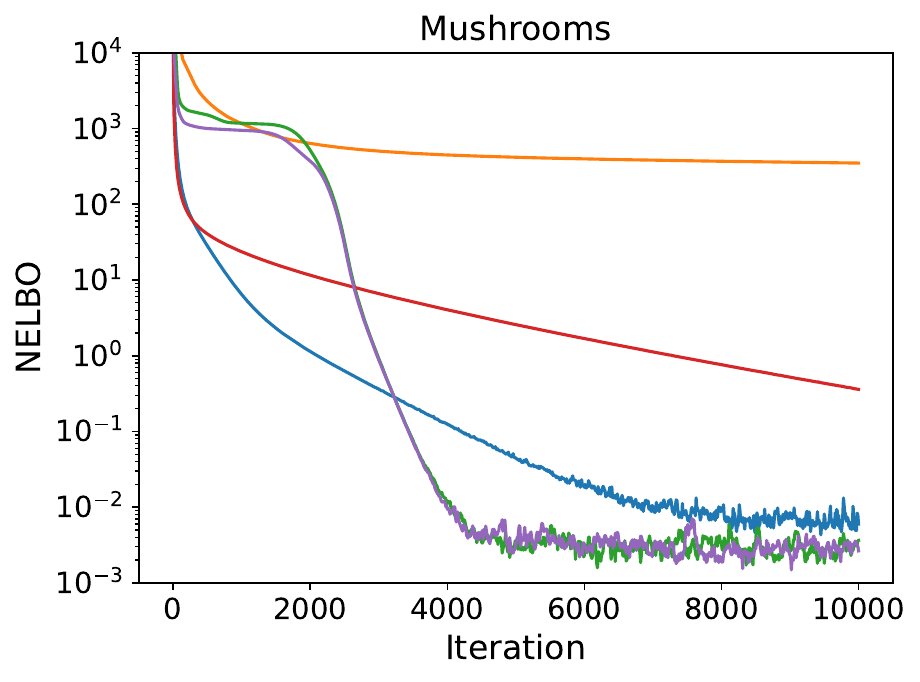}
    \label{fig:logreg-adults}
  \end{subfigure}
  \vspace{-0.5cm}
  \caption{Comparison of approximate/exact natural gradient algorithms on larger datasets.}
  \label{fig:logreg-large}
\end{figure}

\paragraph{Larger Datasets:} The discrepancy between the Riemannian and Euclidean inverse-free approaches becomes even more apparent in higher dimensions. In \Cref{fig:logreg-large}, we apply these methods on two LIBSVM datasets: Mushrooms $(n=8124, p=112)$, and the training subset of Adult ``a1a'' $(n=1605, p=123)$. Here Euc-NGD (Approx.)~was considerably more unstable, and would diverge unless the initial step size was set to a small value ($\tau_0 <10^{-4}$). Conversely, BW-NGD (Approx.)~remained stable at larger step sizes ($\tau_0 \approx 10^{-2}$), and yielded a final ELBO value comparable to the exact methods.
This again demonstrates the effectiveness of the BW baseline metric compared to the Euclidean metric.

\section{Example: Stiefel Manifold Natural Gradient}\label{sec:stiefel-experiment}

In this section we consider variational Bayes inference with a normalising flow as the variational family \citep{rezende2015variational}. Normalizing flows are highly flexible, making them well-suited for approximating complex posterior distributions. We employ the Sylvester flow variant proposed by \citet{berg2018sylvester}, which employs weight matrices constrained to have orthonormal columns. This yields a variational family with no closed-form Fisher information, and parameters which lie on the Stiefel manifold. Details are in Appendix~\ref{appx:Steifel-manifold-experiment}.

\paragraph{Model \& Objective:} We consider a Bayesian neural network for binary classification,
\begin{equation*}
    y_i\mid w, x_i \sim \mathrm{Bernoulli}(\pi_w(x_i)), \qquad w \sim \mathcal{N}(0,cI).
\end{equation*}
where $\pi_w(x_i) \in [0,1]$ is the output of a single-hidden-layer network with $10$ hidden units, input $x_i$, weights $w$, and with a ridge-type regularization prior on $w$. The task is to approximate the posterior distribution of $w$ using variational Bayes. 

The variational distribution $q_\theta$ is based on a two-layer neural network,
\begin{equation}\label{eq:SylvesterNF}
    \epsilon \sim \mathcal{N}_d(0,I), \quad Z = \sigma(W_1\epsilon + b_1), \quad Y = W_2 Z + b_2,
\end{equation}
where $W_1, W_2 \in \mathbb{R}^{d \times d}$, $b_1, b_2 \in \mathbb{R}^d$, $\sigma(\cdot)$ is the sigmoid activation, and we impose the orthogonality constraints $W_1^\top W_1 = W_2^\top W_2=I_d$. Following \citet{berg2018sylvester}, this simplifies the Jacobian determinants in the density of $q_\theta$, enabling efficient computation of the ELBO gradient. The variational parameter $\theta = (W_1, W_2, b_1, b_2)$ thus belongs to the product manifold $\mathcal{M} = \mathrm{St}(d,d) \times \mathrm{St}(d,d) \times \mathbb{R}^d \times \mathbb{R}^d$, where $\mathrm{St}(d,d)$ is the Stiefel manifold.

\paragraph{Stiefel Manifold:} The Stiefel manifold $\mathrm{St}(p,n) = \{ W \in \mathbb{R}^{n\times p} : W^\top W = I_p\}$ is embedded in $\mathbb{R}^{n\times p}$; therefore, tangent spaces can be identified with linear subspaces in the ambient space.
% \begin{equation}
%     T_W\mathrm{St}(p,n) = \{ Z \in \mathbb{R}^{n\times p} : Z^\top W + W^\top Z = 0_{p\times p}\}.
% \end{equation}
For a differentiable function $F:\mathbb{R}^{n\times p} \rightarrow \mathbb{R}$, the Riemannian gradient can be obtained by projecting the Euclidean gradient $\nabla^\mathcal{E}_WF$ onto the tangent space \citep{absil2009optimization}
\begin{equation}\label{eq:stiefel-rie-grad}
    \nabla_W F(W) = \nabla_W^\mathcal{E} F(W) - W\,\mathrm{sym}(W^\top \nabla_W^\mathcal{E} F(W)) \in T_W \mathrm{St}(p,n).
\end{equation}
The Riemannian score function and ELBO gradient can thus be obtained from their Euclidean counterparts via \eqref{eq:stiefel-rie-grad}; closed-form expressions for the latter are available in \citet{godichon2024natural}. For the retraction, we use the Cayley retraction of \citet{zhu2017riemannian}, together with its associated isometric vector transport (see their Lemma~3).

\paragraph{Algorithms:} We implement our Inverse-Free Riemannian Natural Gradient Descent (Inverse-Free RNGD) from Algorithm~\ref{alg:riemannian-if-ngd} with the inverse Fisher approximation based on a sliding window of $K=200$ $\psi$-vectors and $\epsilon=1000$ in \eqref{eq:inv-moving-approximation}; see \Cref{rem:limited-memory-approximation}. 
The sliding window update procedure is described in Appendix~\ref{appendix:vectorized-fisher}, where 10 new score vectors are generated at each current iterate to compute the new $\psi$-vectors, with the rest $K-10$ vectors transported from the previous tangent space. 
We compare the Inverse-Free RNGD with the \textit{Riemannian stochastic gradient descent} (RSGD) that computes the Riemannian gradient of the ELBO via \eqref{eq:stiefel-rie-grad} and updates the Stiefel components using the Cayley retraction. Further details are in Appendix~\ref{appx:Steifel-manifold-experiment}.

%\paragraph{Algorithms:} We compare three methods. \textit{Riemannian stochastic gradient descent} (RSGD) computes the Riemannian gradient of the ELBO via \eqref{eq:stiefel-rie-grad} and updates the Stiefel components using the Cayley retraction. The \textit{Riemannian natural gradient} (Riemannian NGD) method is Algorithm~\ref{alg:riemannian-if-ngd}, where the inverse Fisher approximation is based on a sliding window of $K=500$ score vectors; see Remark~\ref{rem:limited-memory-approximation}. 
%The sliding window update procedure is described in Appendix~\ref{appendix:vectorized-fisher}. The \textit{extrinsic natural gradient} method (extrinsic IFVB) applies the inversion-free approximation of \citet{godichon2024natural} in the ambient Euclidean space. To enforce the orthogonality constraint, the resulting update direction is projected onto the tangent space, and the manifold parameters are updated using the retraction. Further details are in Appendix~\ref{appx:Steifel-manifold-experiment}.

\begin{figure}
    \centering    
    \includegraphics[width=1\linewidth]{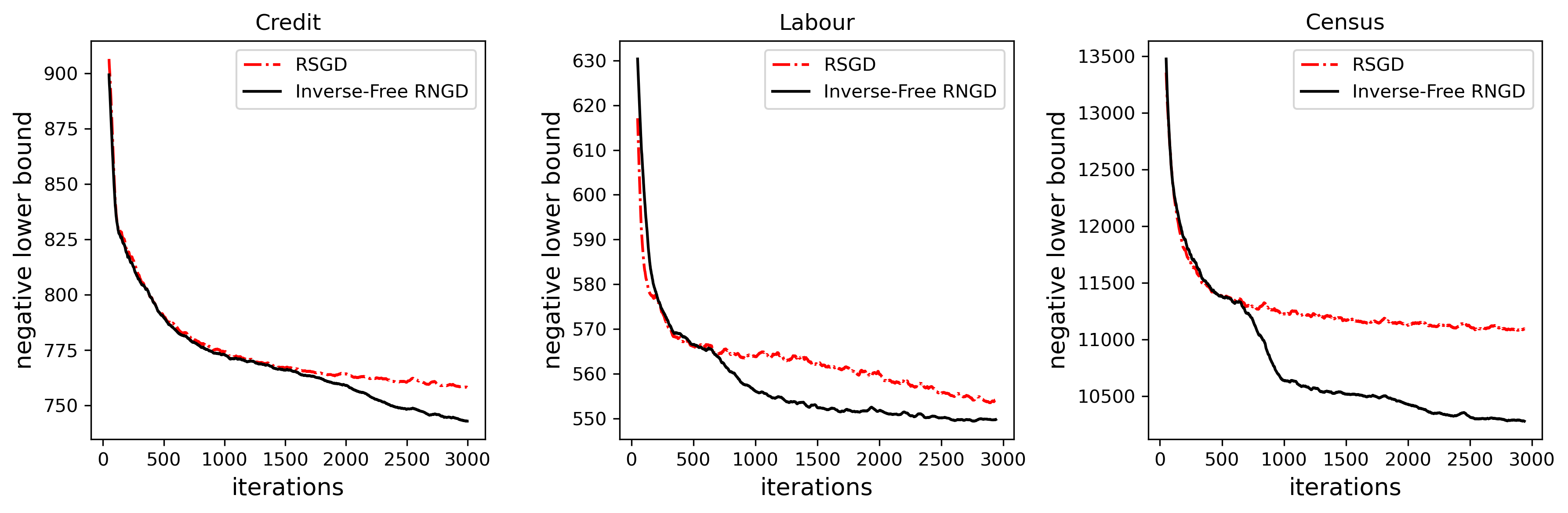}
    \caption{Negative lower bound plots of the VB methods}
    \label{fig:normalizing_flow_VB}
\end{figure}

\paragraph{Results:} We use several datasets from the UCI Machine Learning Repository: the German Credit, Woman Labor Force, and Census datasets. Figure~\ref{fig:normalizing_flow_VB} plots the negative ELBO over the course of training for each method. 
The Inverse-Free RNGD method attains the lowest negative lower bound across all datasets.

\section{Example: Fixed-Rank Manifold Natural Gradient}\label{sec:reduced-rank-regression}

In this section we examine reduced-rank multinomial logistic regression, where the matrix of regression coefficients is constrained to have fixed rank. Reduced-rank regression has a long history in statistics, with the classical formulation for multivariate linear models presented by \citet{anderson1951estimating} and \citet{izenman1975reduced}. Later, \citet{yee2003reduced} extended the idea to vector generalised linear models. The rank constraint can be implicitly enforced by working on the manifold of fixed-rank matrices \citep{meyer2011linear, mishra2014fixed}. Further details for this section are provided in Appendix~\ref{appx:Fixed-rank-manifold-experiment}.

\paragraph{Model \& Objective:} Let $\mathcal{D}=\{(x_i, y_i)\}^n_{i=1}$ denote a classification dataset with features $x_i \in \mathbb{R}^d$ and labels $y_i \in \{1,...,K\}$. The multinomial logistic regression model specifies
\begin{equation}
    P(y=j \mid x, B, \alpha) = \frac{\exp(\alpha_{j}+B_j^\top x)}{1+\sum^{K-1}_{k=1}\exp(\alpha_k+B^\top_kx)}, \qquad j=1,...,K-1,
\end{equation}
where $B \in \mathbb{R}^{d\times K-1}$ is the coefficient matrix and $\alpha \in \mathbb{R}^{K-1}$ the intercept. The fixed-rank constraint $\mathrm{rank}(B)=r<\min(d,K-1)$ reduces the number of free parameters from $d(K-1)$ to $r(d+K-1-r)$, and can yield a more parsimonious model when the class structure depends on a low-dimensional subspace of the features. The objective is the negative log-likelihood.

\paragraph{Baseline Manifold:} Let $\mathcal{M}_r = \{B \in \mathbb{R}^{d\times (K-1)} : \mathrm{rank}(B) = r\}$ denote the set of rank-$r$ matrices. This is an embedded submanifold of $\mathbb{R}^{d\times (K-1)}$; we briefly review its main properties, and refer to \citet{vandereycken2013low} for more information. Let $B \in \mathcal{M}_r$, with SVD $B=U\Sigma V^\top$ where $U \in \mathrm{St}(r,d)$, $V \in \mathrm{St}(r,K-1)$, and $\Sigma = \mathbb{R}^{r\times r}$ is diagonal with non-increasing entries. The tangent space $T_B \mathcal{M}_r$ can be identified with matrices $\xi \in \mathbb{R}^{d\times (K-1)}$ such that $(I_d - UU^\top)\xi(I_K-VV^\top)=0$.
% Similar to \Cref{sec:stiefel-experiment}, the Riemannian gradient of a function is obtained via orthogonal projection onto this subspace. 
For the retraction, we employ the metric projection $\mathcal{R}_B(\xi) = \mathrm{trunc}_r(B+\xi)$, which truncates the SVD of $B+\xi$ to its largest $r$ singular values. The vector transport is the orthogonal projection onto the new tangent space $\mathcal{T}_{B,B'}(\xi) = \mathrm{Proj}_{B'}(\xi)$.

\paragraph{Algorithms:} The choice of baseline algorithms is similar to \Cref{sec:stiefel-experiment}, although we do not employ the low-rank approximation to the inverse Fisher. \textit{Riemannian stochastic gradient descent} (RSGD) projects the (stochastic) Euclidean gradient of the log-likelihood onto $T_B\mathcal{M}_r$ and performs a retraction step. For each algorithm, the objective gradient is computed using a minibatch of $128$ observations. The \textit{Inverse-Free Riemannian Natural Gradient Descent} method (IF-RNGD) follows Algorithm 1, where the inverse Fisher approximation operates on the tangent space of $\mathcal{M}_r$. The \textit{Extrinsic Inverse-Free Natural Gradient Descent} method (Extrinsic IF-NGD) follows the Euclidean version of \Cref{alg:riemannian-if-ngd} in \citet[Example 3]{godichon2024natural}. Here, the low-rank condition is enforced by projecting the update direction onto the tangent space of $\mathcal{M}_r$, and performing the update using a retraction operation.

\begin{figure}[h]
    \centering
    % Row 1
    \begin{subfigure}{0.32\textwidth}
        \includegraphics[width=\linewidth]{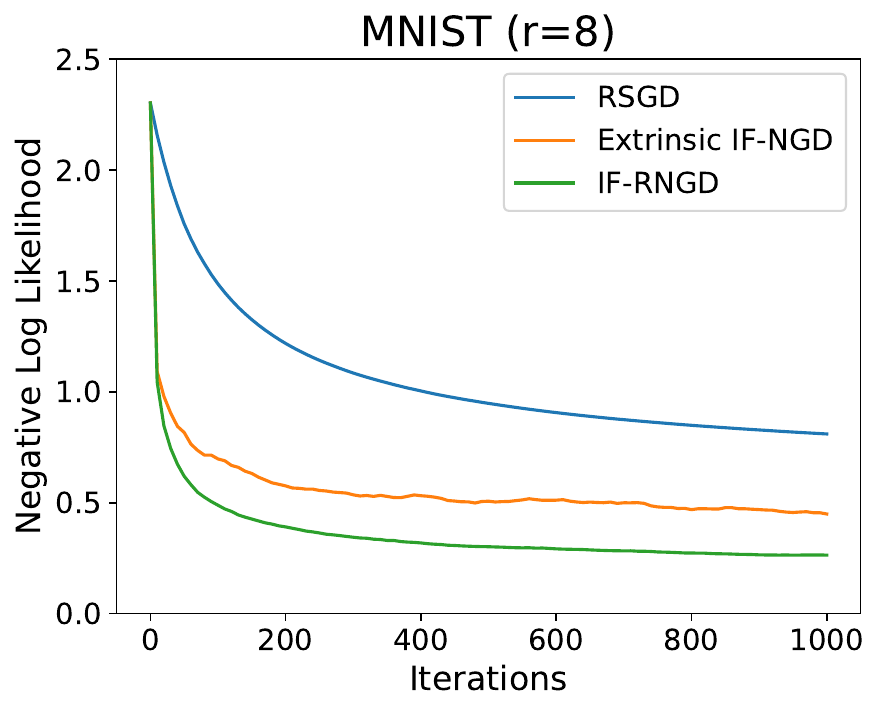}
    \end{subfigure}
    \hfill
    \begin{subfigure}{0.32\textwidth}
        \includegraphics[width=\linewidth]{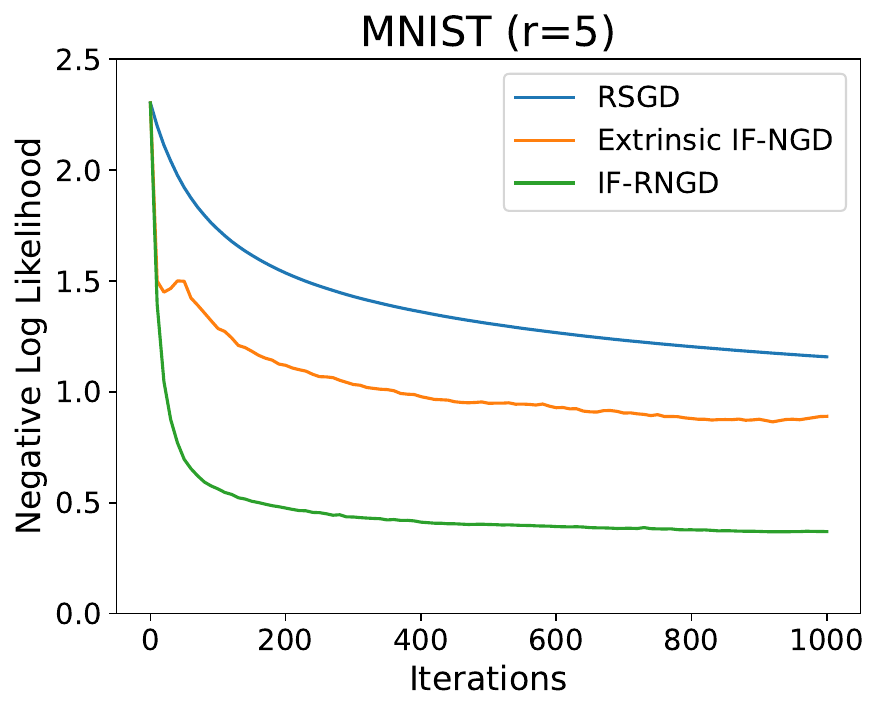}
    \end{subfigure}
    \hfill
    \begin{subfigure}{0.32\textwidth}
        \includegraphics[width=\linewidth]{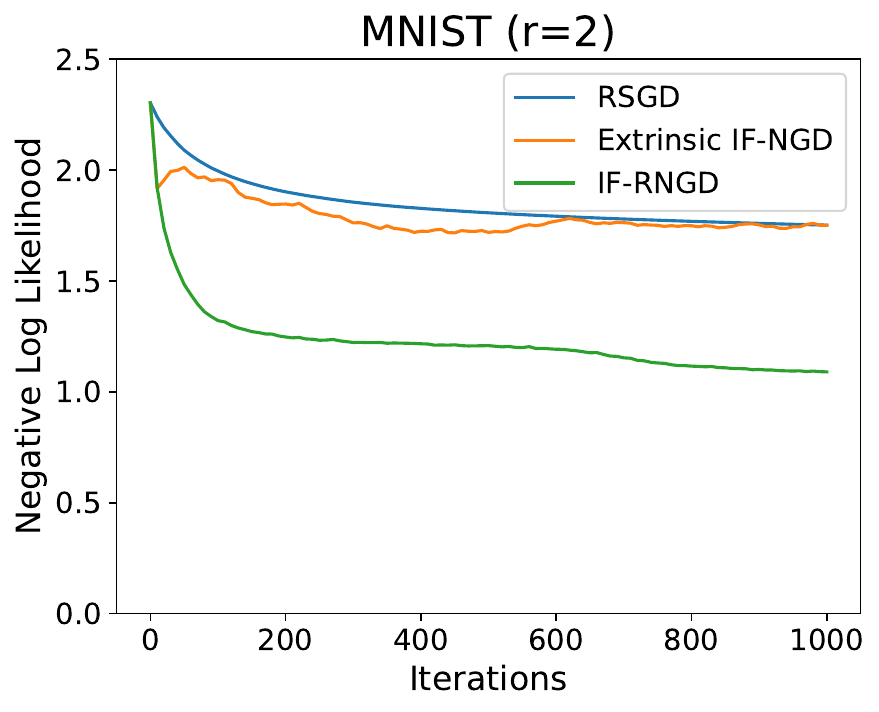}
    \end{subfigure}

    \vspace{0.5em} % vertical space between rows

    % Row 2
    \begin{subfigure}{0.32\textwidth}
        \includegraphics[width=\linewidth]{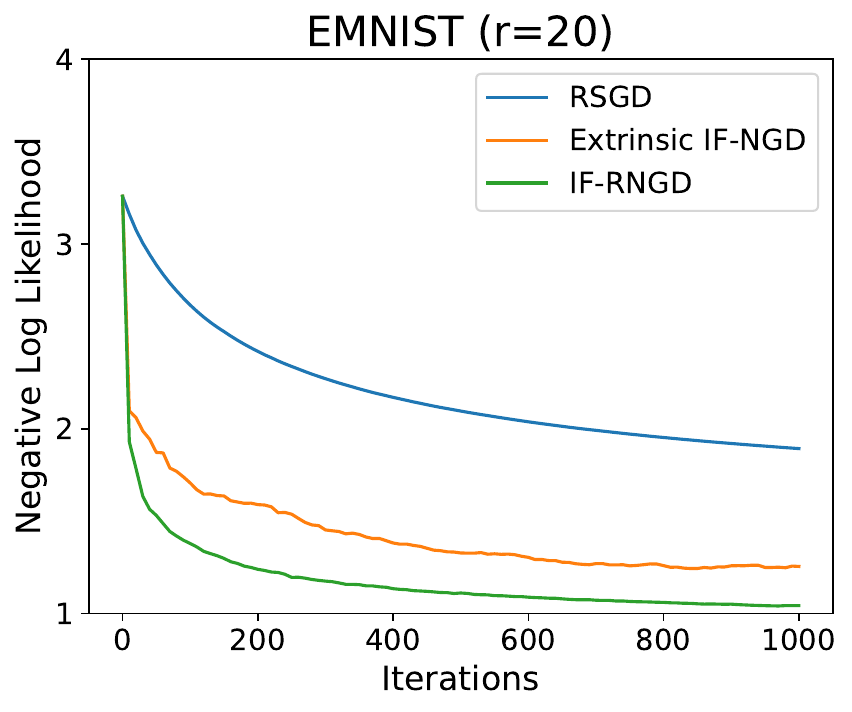}
    \end{subfigure}
    \hfill
    \begin{subfigure}{0.32\textwidth}
        \includegraphics[width=\linewidth]{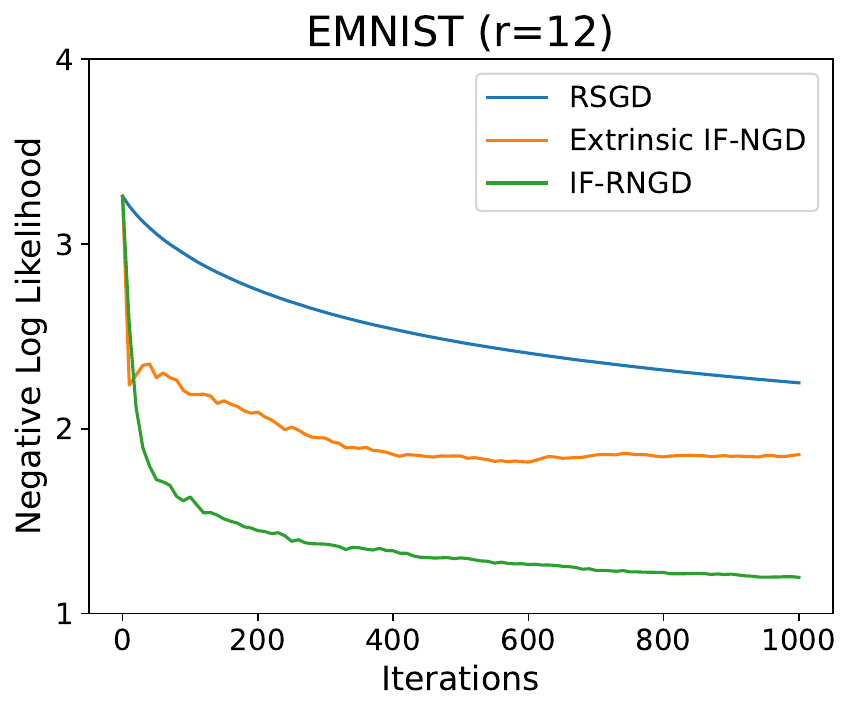}
    \end{subfigure}
    \hfill
    \begin{subfigure}{0.32\textwidth}
        \includegraphics[width=\linewidth]{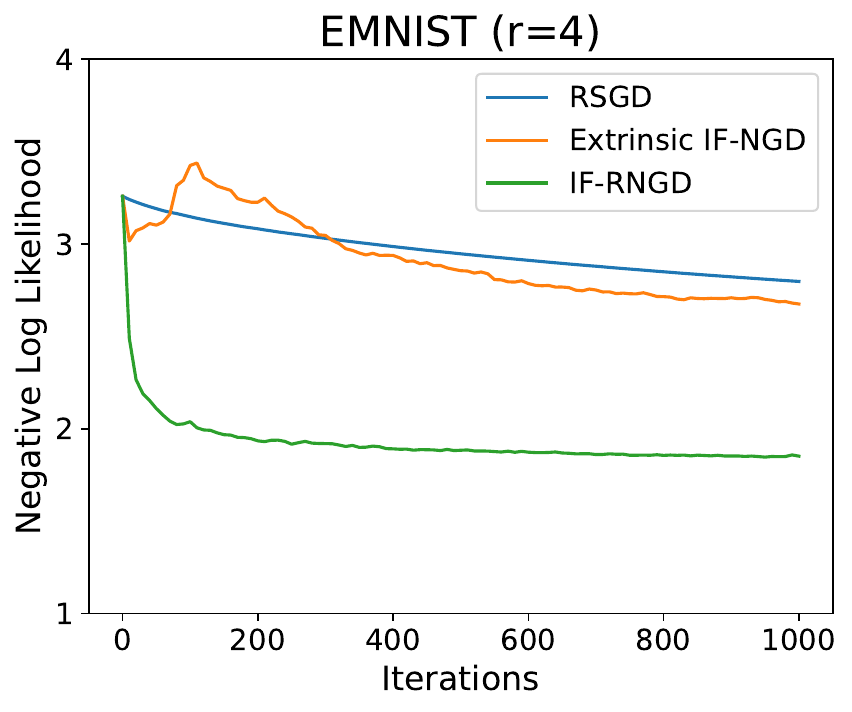}
    \end{subfigure}

    \caption{Negative log-likelihood over iteration for different rank constraints. Top row: MNIST dataset, $r \in \{8, 5, 2\}$. Bottom row: EMNIST dataset, $r \in \{20, 12, 4\}$.}
    \label{fig:nll-mnist}
\end{figure}
\paragraph{Results:} We apply our methods to the MNIST $(n=60,000, d=784,K=10)$ and EMNIST $(n=145,000, d=784, K=26)$ datasets, which contain a large number of handwritten digits and letters, respectively. We consider coefficient matrices with ranks $r \in \{8, 5, 2\}$ for MNIST, and $r \in \{20, 12, 4\}$ for EMNIST. \Cref{fig:nll-mnist} plots the (mean) negative log-likelihood over the training set for each dataset, method, and rank. The IF-RNGD method attains the best log-likelihood across all scenarios. The extrinsic IF-NGD method performs slightly worse when $r\approx K$, and the performance gap widens as $r$ becomes smaller; eventually, its performance becomes comparable to RSGD.

\section{Conclusion}\label{sec:Conclusion}

% % Tightened version of the conclusion should we require the space (this may be too terse).

This paper developed a stochastic natural gradient method for optimisation over probability distributions whose parameters lie on a Riemannian manifold.
This formulation accommodates several constrained parameter spaces which commonly arise in statistics, including SPD matrices, the Stiefel manifold, and the Grassmann manifold.
We proposed a Riemannian extension of an inversion-free approximate natural gradient method \citep{amari2000adaptive, godichon2024natural}, which streamlines the adaptive estimation of the inverse Fisher information matrix. We also proposed a limited-memory variant which reduces storage complexity from quadratic to sub-quadratic in the manifold dimension.
In contrast to previous work \citep{tran2021variational, hu2024riemannian}, our approach is purely intrinsic, and yields almost-sure convergence rates in the stochastic setting for both the parameter iterates and approximate Fisher information.
% Finally, we presented three examples which demonstrated the efficacy of this geometric approach relative to its Euclidean counterparts.
An interesting continuation of our research concerns its application beyond finite-dimensional models. In particular, the Wasserstein space of probability measures possesses a formal Riemannian structure \citep{villani2008optimal}, which provides notions of gradient, geodesics, and parallel transport. This suggests a possible route toward adapting our methodology to the non-parametric setting.

\newpage

\section*{Competing interests}
% JASA requires a disclosure/competing interests statement.
The authors report there are no competing interests to declare.
% Adjust as appropriate.

\section*{Acknowledgments}
% Acknowledge colleagues, referees, temporary affiliations, etc.

Draca's research was funded by an Australian Research Training Program scholarship, and a Data61 top-up scholarship. Draca also thanks Associate Professor John Ormerod for helpful feedback on an earlier draft of this work. Tran thanks Associate Professor Duy Nguyen for fruitful discussions during the early stages of this work. 

% --------------------------------------------------
% References
% --------------------------------------------------

\bibliographystyle{agsm}
\bibliography{ref}

% --------------------------------------------------
% Supplementary material (Appendices)
% --------------------------------------------------

\newpage
\section*{Supplementary Material}
\addcontentsline{toc}{section}{Supplementary material}
\renewcommand{\thesubsection}{\Alph{subsection}}

This supplementary material contains proofs for all theoretical results, complementary knowledge omitted from the main text, and further experimental details.
\Cref{appx:helpful-results} presents intermediate lemmas that facilitate the subsequent proofs.
\Cref{appx:eigenbound-fisherapprox} proves limiting bounds for the spectrum of the approximate Fisher information matrix used in the main proofs.
\Cref{appx:proof-global-convergence,appx:convergence-rate-proof,appx:fisher-consistency-proof} provide the proofs of the theoretical results presented in the main text.
% \Cref{appendix:kl-fisher-manifold,appendix:score and reparameterization gradient,appendix:vectorized-fisher} contains complementary material referred to in the main text.
\Cref{appendix:kl-fisher-manifold,appendix:score and reparameterization gradient,appendix:vectorized-fisher} contains complementary material referred to in the main text.
\Cref{appx:bw-nat-grad,appx:Steifel-manifold-experiment} offer an in-depth description of our experimental setup.

\subsection{Helpful Results} \label{appx:helpful-results}

The Robbins-Siegmund theorem is a basic tool for working with stochastic sequences, and will be used repeatedly in our proofs. Here, it is lightly paraphrased from \citet{duflo2013random}.

\begin{theorem}[Robbins-Siegmund Theorem] Suppose that $(V_n), (\beta_n), (\chi_n)$, and $(\eta_n)$ are four non-negative sequences adapted to the filtration $\mathbb{F}=(\mathcal{F}_n)$ and that:
\begin{equation}
    \mathbb{E}[V_{n+1}\vert \mathcal{F}_n] \leq V_n (1 + \beta_n) + \chi_n - \eta_n.
\end{equation}
Then, on $\left\{ \sum\beta_n < \infty \text{ and } \sum\chi_n < \infty\right\}$, almost surely $(V_n)$ converges to a finite random variable $V_\infty$ and the series $\sum\eta_n$ converges.
\end{theorem}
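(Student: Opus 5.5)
We would prove the Robbins--Siegmund theorem by turning the almost-supermartingale inequality into a genuine nonnegative supermartingale. This takes two steps: first normalise away the multiplicative factors $(1+\beta_n)$, then localise the additive perturbations $\chi_n$ with a stopping time.

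\emph{Step 1 (normalisation).} Set $\pi_0=1$ and $\pi_n=\prod_{k<n}(1+\beta_k)$, which is $\mathcal{F}_{n-1}$-measurable and at least $1$. Define
\begin{equation*}
V_n'=\frac{V_n}{\pi_n},\qquad \chi_n'=\frac{\chi_n}{\pi_{n+1}},\qquad \eta_n'=\frac{\eta_n}{\pi_{n+1}}.
\end{equation*}
Dividing the hypothesis by the $\mathcal{F}_n$-measurable quantity $\pi_{n+1}$ gives
\begin{equation*}
\mathbb{E}[V'_{n+1}\mid\mathcal{F}_n]\le V_n'+\chi_n'-\eta_n'.
\end{equation*}
Because $\chi_n'\le\chi_n$, we also have $\sum\chi'_n<\infty$ on $\{\sum\chi_n<\infty\}$. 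On $\{\sum\beta_n<\infty\}$ the products $\pi_n$ increase to a finite limit $\pi_\infty\in[1,\infty)$, since $\log\pi_n\le\sum\beta_k$.

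\emph{Step 2 (supermartingale and localisation).} Let
\begin{equation*}
U_n=V_n'+\sum_{k<n}\eta_k'-\sum_{k<n}\chi_k'.
\end{equation*}
The displayed inequality gives $\mathbb{E}[U_{n+1}\mid\mathcal{F}_n]\le U_n$. However, $U_n$ need not be bounded below or integrable. For $a>0$ we therefore introduce
\begin{equation*}
T_a=\inf\Big\{n:\textstyle\sum_{k\le n}\chi_k'>a\Big\},
\end{equation*}
which is a stopping time because $\chi'$ is adapted. For $m=n\wedge T_a$ we have $\sum_{k<m}\chi_k'\le a$, so $U_{n\wedge T_a}+a\ge 0$. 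The stopped process is then a nonnegative (generalised) supermartingale, and it converges almost surely to a finite limit.

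This is the step we expect to be the main obstacle. No integrability of $V_n$ is assumed, so conditional expectations must be understood in the extended sense for nonnegative variables. Our first attempt would be to invoke the convergence theorem for nonnegative supermartingales in that generalised form. Failing that, we would localise further on the events $\{V_0'\le b\}\in\mathcal{F}_0$. On such an event, induction shows $\mathbb{E}[U_{n\wedge T_a}+a]\le b+a$, and hence integrability.

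\emph{Step 3 (conclusion).} On $\{T_a=\infty\}$ the unstopped $U_n$ converges. Since $\{\sum\chi_n<\infty\}\subseteq\bigcup_{a\in\mathbb{N}}\{T_a=\infty\}$, the sequence $U_n$ converges a.s. on that event. There $\sum_{k<n}\chi_k'$ also converges, so $V_n'+\sum_{k<n}\eta_k'$ converges to a finite limit. Both summands are nonnegative and the second is nondecreasing, so $\sum\eta_k'<\infty$ and consequently $V'_n$ converges. Finally, on $\{\sum\beta_n<\infty\}$ we have $V_n=\pi_nV_n'\to\pi_\infty V'_\infty<\infty$ and $\eta_n\le\pi_\infty\eta_n'$. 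This yields convergence of $V_n$ and of $\sum\eta_n$ on the stated event.
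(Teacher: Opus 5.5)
Your proof is correct. The paper gives no proof of its own (it quotes the result from Duflo (2013)), and your argument is exactly the standard proof found there and in Robbins and Siegmund's original paper: normalise by the predictable products $\prod_{k<n}(1+\beta_k)$, compensate to obtain a supermartingale, stop at the first time the normalised cumulative $\chi'$ exceeds $a$, and localise on $\{V_0'\le b\}\in\mathcal{F}_0$ to handle the lack of integrability. The only detail to write out explicitly is that the localised bound is $\mathbb{E}\bigl[(U_{n\wedge T_a}+a)\mathbf{1}_{\{V_0'\le b\}}\bigr]\le b+a$, obtained from the tower property for nonnegative variables.
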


Throughout the following lemmas, we will assume that $\mathcal{M}$ is a Riemannian manifold, and $\mathcal{R}$ is a second-order retraction on $\mathcal{M}$. For $(x,v) \in T\mathcal{M}$ we let $\mathcal{T}_{v_x}[w] := D\mathcal{R}_x(v)[w]$ denote the transport corresponding to the differentiated retraction. To simplify our presentation, we will assume that $\mathcal{R}_x$ and $\mathcal{R}^{-1}_x$ are well-defined\footnote{This is guaranteed within a small neighborhood of $x$ using similar reasoning as for the exponential map.} wherever they appear.

\begin{lemma} \label{lemma:retraction-ineq}
% Let $\mathcal{M}$ be a Riemannian manifold and $\mathcal{R}$ a retraction. For any $x \in \mathcal{M}$ there exists a neighborhood $\mathcal{U}\ni x$ and constants $c_1, c_2, \delta>0$ such that $\forall v \in T_x \mathcal{M}$ with $\Vert v \Vert_x < \delta$
For any $x \in \mathcal{M}$ there exist $c_1, c_2, \delta>0$ such that $\forall v \in T_x\mathcal{M}$ with $\Vert v \Vert_x <\delta$
    \begin{equation}
        c_1 \Vert v \Vert_x \leq d(x, \mathcal{R}_x(v)) \leq c_2 \Vert v \Vert_x.
    \end{equation}
\end{lemma}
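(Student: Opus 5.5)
We prove the two inequalities separately, working in normal coordinates around $x$. The basic idea is to compare the retraction curve $t\mapsto \mathcal{R}_x(tv)$ with the geodesic $t\mapsto \exp_x(tv)$, using $D\mathcal{R}_x(0)=\mathrm{Id}_x$ together with smoothness of $\mathcal{R}_x$ near $0_x$.

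\textbf{Upper bound.} Fix $\delta_0>0$ such that $\mathcal{R}_x$ is smooth on the closed ball $\bar B_{\delta_0}(0_x)\subset T_x\mathcal{M}$. The map $w\mapsto D\mathcal{R}_x(w)$ is continuous, and its operator norm is measured from $\Vert\cdot\Vert_x$ to $\Vert\cdot\Vert_{\mathcal{R}_x(w)}$. By compactness of the ball, this norm is bounded by some $c_2$ there. Now consider the curve $\gamma(t)=\mathcal{R}_x(tv)$ for $t\in[0,1]$. It joins $x$ to $\mathcal{R}_x(v)$, and its length satisfies
\[
\mathrm{len}(\gamma)=\int_0^1\Vert D\mathcal{R}_x(tv)[v]\Vert\,dt\le c_2\Vert v\Vert_x .
\]
Since the distance is at most the length of any connecting curve, $d(x,\mathcal{R}_x(v))\le c_2\Vert v\Vert_x$.

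\textbf{Lower bound.} Choose $r>0$ below the injectivity radius at $x$. On the ball of radius $r$, $\exp_x$ is a diffeomorphism and $d(x,\exp_x(w))=\Vert w\Vert_x$. By the upper bound, shrinking $\delta$ ensures $\mathcal{R}_x(v)$ lies in $\exp_x(B_r(0_x))$ whenever $\Vert v\Vert_x<\delta$. We then define
\[
\Phi:=\exp_x^{-1}\circ\mathcal{R}_x ,
\]
which is a smooth map from a neighbourhood of $0_x$ into $T_x\mathcal{M}$. It satisfies $\Phi(0)=0$ and $D\Phi(0)=D\exp_x(0)^{-1}D\mathcal{R}_x(0)=\mathrm{Id}$. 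Taylor's theorem in the vector space $T_x\mathcal{M}$ gives a constant $C$ with
\[
\Vert\Phi(v)-v\Vert_x\le C\Vert v\Vert_x^2 .
\]
Hence
\[
d(x,\mathcal{R}_x(v))=\Vert\Phi(v)\Vert_x\ge \Vert v\Vert_x-C\Vert v\Vert_x^2\ge \tfrac12\Vert v\Vert_x
\]
once $\Vert v\Vert_x<\min(\delta,1/(2C))$. This gives the lower bound with $c_1=\tfrac12$. Equivalently, the inverse function theorem applied to $\Phi$ yields a local Lipschitz inverse, which gives the same conclusion.

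\textbf{Main point requiring care.} The only delicate step is the identification $d(x,\exp_x(w))=\Vert w\Vert_x$ inside the injectivity radius (the Gauss lemma), combined with keeping $\mathcal{R}_x(v)$ inside that normal neighbourhood. The latter is handled by the upper bound, which is why we prove it first. Second-order retraction is not needed here: $D\mathcal{R}_x(0)=\mathrm{Id}_x$ suffices.
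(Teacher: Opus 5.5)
Your proof is correct. The paper gives no argument of its own for this lemma, only pointers to Absil et al.\ and Huang, so your proposal supplies a self-contained, intrinsic proof. You get the upper bound from the length of the retraction curve $t\mapsto\mathcal{R}_x(tv)$, together with a compactness bound on $\Vert D\mathcal{R}_x(w)\Vert_{\mathrm{op}}$. You get the lower bound from $d(x,\exp_x w)=\Vert w\Vert_x$ inside the injectivity radius, combined with $D(\exp_x^{-1}\circ\mathcal{R}_x)(0_x)=\mathrm{Id}_x$. You are right that only the first-order property $D\mathcal{R}_x(0_x)=\mathrm{Id}_x$ is needed.

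One minor remark: deducing $\mathcal{R}_x(v)\in\exp_x(B_r(0_x))$ from $d(x,\mathcal{R}_x(v))<r$ implicitly uses that metric and geodesic balls of radius $r$ coincide below the injectivity radius. That is true, but continuity of $\mathcal{R}_x$ at $0_x$ gives the containment more directly.

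What the usual textbook form of this estimate provides, and what the paper actually needs, is uniformity in the base point. In the proof of Theorem~\ref{thm:fisher-consistency} the lemma is invoked as $d(\theta^{(s)},\theta^{(s+1)})=O(\Vert v_{s+1}\Vert_{\theta^{(s)}})$. There the base point $\theta^{(s)}$ moves (towards $\theta^*$), so constants depending on a single $x$ do not suffice.

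Your argument upgrades to the uniform version with no new idea. Fix a compact neighbourhood $K$ of $x$, and make each of your three constants uniform over $K$:
\begin{itemize}
\item bound $\Vert D\mathcal{R}_y(w)\Vert_{\mathrm{op}}$ for $y\in K$, $\Vert w\Vert_y\le\delta_0$, by compactness of that subset of $T\mathcal{M}$;
\item use a positive lower bound for the injectivity radius on $K$ (e.g.\ via a totally normal neighbourhood);
\item use joint smoothness of $(y,w)\mapsto\exp_y^{-1}(\mathcal{R}_y(w))$ near the zero section to obtain a Taylor constant $C$ independent of $y\in K$.
\end{itemize}
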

\begin{proof}
    See e.g., Proposition 7.1.3 in \citet{absil2009optimization}, or Lemma 3.3.3 in \citet{Huang2013optimizationAO}.
\end{proof}
The following result is paraphrased from Lemma 6 in \citet{tripuraneni2018averaging}. Here, the big-O notation bounds the norm of a hidden linear operator.

\begin{lemma} \label{lemma:tripuraneni-averaging}
    % Let $\mathcal{M}$ be a Riemannian manifold, $\mathcal{R}$ a second-order retraction, $\mathcal{T}_{v_x}[w] := D\mathcal{R}_x(v)[w]$, and $\Gamma^\mathcal{R}_{v_x}$ parallel transport along the curve $\gamma(t) = \mathcal{R}_x(tv)$ from $t=0$ to $t=1$.
    % Then,
        Let $\Gamma^\mathcal{R}_{v_x}$ denote parallel transport along $\gamma(t) = \mathcal{R}_x(tv)$ for $t \in [0,1]$, then
    \begin{equation}
        G_x(v) := [\Gamma^\mathcal{R}_{v_x}]^{-1}\circ \mathcal{T}_{v_x} = \mathrm{Id}_x + \frac{1}{2}K_x[v, v, \cdot] + O(\Vert v \Vert^3_{x}),
    \end{equation}
    where $K_x[v,v,\cdot]: =\tfrac{d^2}{dt^2}G_x(tv)\vert_{t=0}$ is a trilinear map $T_x\mathcal{M} \times T_x \mathcal{M} \times T_x\mathcal{M} \rightarrow T_x \mathcal{M}$.
\end{lemma}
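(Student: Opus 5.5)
The plan is to work along the single curve $\gamma(t)=\mathcal{R}_x(tv)$, $t\in[0,1]$, trivialise everything by parallel transport back to $T_x\mathcal{M}$, and then Taylor expand in $t$. Define the operator-valued map
\begin{equation*}
G(t) := [\Gamma^{\mathcal{R}}_{(tv)_x}]^{-1}\circ \mathcal{T}_{(tv)_x} : T_x\mathcal{M}\rightarrow T_x\mathcal{M}.
\end{equation*}
Here $\Gamma^{\mathcal{R}}_{(tv)_x}$ is parallel transport along $\gamma$ restricted to $[0,t]$. This gives a smooth curve in the finite-dimensional space $\mathrm{End}(T_x\mathcal{M})$, so Taylor's theorem with remainder yields
\begin{equation*}
G(1)=G(0)+G'(0)+\tfrac12 G''(0)+O\big(\sup_t\Vert G'''(t)\Vert\big).
\end{equation*}
By homogeneity, each $t$-derivative of $G$ at $0$ of order $k$ is a homogeneous polynomial of degree $k$ in $v$, and $G'''$ is $O(\Vert v\Vert^3)$ uniformly for $\Vert v\Vert_x$ small. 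For that uniformity I would rely on smoothness of $\mathcal{R}$ and of $D\mathcal{R}$ on a compact neighbourhood of the zero section, uniform in $x$ over a compact set such as $\mathcal{X}$.

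The zeroth order term is immediate: $\mathcal{T}_{0_x}=D\mathcal{R}_x(0)=\mathrm{Id}_x$ and $\Gamma$ over a trivial curve is the identity, so $G(0)=\mathrm{Id}_x$. The second-order term is then, by definition, $\tfrac12 K_x[v,v,\cdot]$ with $K_x[v,v,\cdot]=\tfrac{d^2}{dt^2}G_x(tv)\vert_{t=0}$. The lemma's content therefore reduces to showing that the first-order term vanishes: $G'(0)=0$.

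To compute $G'(0)$, fix $w\in T_x\mathcal{M}$ and consider the vector field along $\gamma$
\begin{equation*}
W(t)=\mathcal{T}_{(tv)_x}[w]=D\mathcal{R}_x(tv)[w].
\end{equation*}
Since parallel transport intertwines covariant differentiation along $\gamma$ with ordinary differentiation in $T_x\mathcal{M}$, we have $\tfrac{d}{dt}G(t)w = [\Gamma]^{-1}\tfrac{D}{dt}W(t)$. Hence $G'(0)w=\tfrac{D}{dt}W(0)$. Now $W(t)=\partial_s \mathcal{R}_x(tv+sw)\vert_{s=0}$, so by the symmetry lemma for the Levi-Civita connection,
\begin{equation*}
\tfrac{D}{dt}W(0)=\tfrac{D}{ds}\partial_t \mathcal{R}_x(tv+sw)\big\vert_{t=s=0}.
\end{equation*}
At $t=0$, $\partial_t\mathcal{R}_x(tv+sw)=D\mathcal{R}_x(sw)[v]$. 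Its covariant derivative in $s$ at $0$ is the second covariant derivative of $\mathcal{R}_x$ at $0$ applied to $(w,v)$, which is symmetric bilinear. Polarising, it is determined by $\tfrac{D^2}{dt^2}\mathcal{R}_x(tu)\vert_{t=0}$, and this vanishes exactly when $\mathcal{R}$ is second order (zero initial acceleration, equivalent to $d(\mathcal{R}_x(v),\exp_x v)=O(\Vert v\Vert^3)$). Thus $G'(0)=0$.

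I expect the main obstacle to be this step. It needs the correct use of the symmetry lemma on the two-parameter map $(t,s)\mapsto\mathcal{R}_x(tv+sw)$, plus the fact that the Hessian of $\mathcal{R}_x$ at the origin is a well-defined symmetric bilinear map whose diagonal is the initial acceleration. I would verify it in normal coordinates at $x$, where Christoffel symbols vanish at $x$ and the covariant second derivative at $0$ equals the ordinary second derivative of $\mathcal{R}_x$ in coordinates. Second-orderness then forces that second derivative's diagonal, and hence (by symmetry) the whole bilinear map, to vanish. The remaining work is bookkeeping to make the $O(\Vert v\Vert^3)$ constant uniform, which follows from compactness as above; alternatively, one can simply cite Lemma~6 of \citet{tripuraneni2018averaging}.
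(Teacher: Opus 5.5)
Your proposal is correct and follows essentially the route the paper relies on. The paper itself only cites Lemma~6 of \citet{tripuraneni2018averaging}, but its own sketches (the footnote in the proof of \Cref{lemma:vector-transport-bounds} and the argument in \Cref{lemma:DR-Dexp-equality}) use exactly your mechanism: Taylor-expand $G_x(tv)$ and show the first-order term vanishes, because $(v,w)\mapsto \frac{D}{dt}D\mathcal{R}_x(tv)[w]\vert_{t=0}$ is a symmetric bilinear form with zero diagonal when $\mathcal{R}$ is second order.

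Applying the symmetry lemma to $(t,s)\mapsto\mathcal{R}_x(tv+sw)$ is a clean way to get the symmetry the paper calls tedious in coordinates. One small addition: the uniform $O(\Vert v\Vert^3)$ remainder also needs smooth dependence of $[\Gamma^{\mathcal{R}}_{v_x}]^{-1}$ on $v$, not only smoothness of $\mathcal{R}$ and $D\mathcal{R}$.
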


The following related lemma bounds the operator norms of $\mathcal{T}$ and its inverse.

\begin{lemma} \label{lemma:vector-transport-bounds}
        % Let $\mathcal{M}$ be a Riemannian manifold, $\mathcal{R}$ a second-order retraction, and $\mathcal{T}_{v_x}[w] := DR_x(v)[w]$. If $\mathcal{C}\subseteq\mathcal{M}$ is compact $\exists\delta,c> 0$ such that $\forall (x,v) \in T\mathcal{C}$ with $\Vert v\Vert_x < \delta$
        If $\mathcal{X}\subseteq\mathcal{M}$ is compact then $\exists\delta,c> 0$ such that $\forall (x,v) \in T\mathcal{X}$ with $\Vert v\Vert_x < \delta$
        \begin{equation}
        (1 - c \Vert v \Vert^2_x)\Vert w \Vert_x \leq \Vert \mathcal{T}_{v_x}[w]\Vert_{R_x (v)} \leq (1 + c \Vert v \Vert_x^2) \Vert w \Vert_x, \ \ \ \ \forall w \in T_x \mathcal{M}.
    \end{equation}
\end{lemma}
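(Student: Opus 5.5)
Starting from \Cref{lemma:tripuraneni-averaging}, I would write $\mathcal{T}_{v_x} = \Gamma^{\mathcal{R}}_{v_x}\circ G_x(v)$. Parallel transport is an isometry, so $\Vert \mathcal{T}_{v_x}[w]\Vert_{\mathcal{R}_x(v)} = \Vert G_x(v)[w]\Vert_x$. The lemma therefore reduces to showing that the operator $G_x(v)$ on $T_x\mathcal{M}$ satisfies
\[
\Vert G_x(v) - \mathrm{Id}_x\Vert_{\mathrm{op}} \le c\Vert v\Vert_x^2
\]
uniformly over $(x,v)\in T\mathcal{X}$ with $\Vert v\Vert_x<\delta$. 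The triangle inequality then gives both sides: $\Vert G_x(v)w\Vert_x \le (1+c\Vert v\Vert_x^2)\Vert w\Vert_x$ and $\Vert G_x(v)w\Vert_x \ge (1-c\Vert v\Vert_x^2)\Vert w\Vert_x$.

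The key point is that the expansion has no first-order term. This comes from $D\mathcal{R}_x(0)=\mathrm{Id}$ together with the second-order property of $\mathcal{R}$, which makes $t\mapsto \mathcal{R}_x(tv)$ have zero initial acceleration. Consequently $G_x(tv)$ is a smooth family with $G_x(0)=\mathrm{Id}$ and $\tfrac{d}{dt}G_x(tv)\vert_{t=0}=0$. I take this from \Cref{lemma:tripuraneni-averaging} rather than rederiving it.

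To obtain uniform constants, I would regard $(x,v)\mapsto G_x(v)$ as a smooth section of the endomorphism bundle over a neighbourhood of the zero section in $T\mathcal{M}$. Fix $\delta_0>0$ such that $\mathcal{R}$ is smooth and the curves $\mathcal{R}_x(tv)$, $t\in[0,1]$, stay inside a compact neighbourhood for all $x\in\mathcal{X}$ and $\Vert v\Vert_x\le\delta_0$. The set $\{(x,v): x\in\mathcal{X},\ \Vert v\Vert_x\le\delta_0\}$ is compact, because $\mathcal{X}$ is compact and the fibres are closed balls. Taylor's theorem with integral remainder in $t$ gives
\[
G_x(v)-\mathrm{Id} = \int_0^1 (1-t)\,\tfrac{d^2}{dt^2}G_x(tv)\,dt .
\]
By the chain rule, $\tfrac{d^2}{dt^2}G_x(tv)$ is the second fibre derivative of $G$ evaluated at $(x,tv)$ in the direction $(v,v)$. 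It is therefore bounded in operator norm by $M\Vert v\Vert_x^2$, where $M$ is the supremum of the second fibre derivative over the compact set. That supremum is finite by continuity, and it is measured in local trivialisations using finitely many charts covering the compact set. This yields $c=M/2$, and I take $\delta=\min(\delta_0, c^{-1/2})$ so that the lower bound factor remains positive.

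I expect the main obstacle to be making the uniformity rigorous. \Cref{lemma:tripuraneni-averaging} is stated pointwise, and its $O(\Vert v\Vert^3)$ constant may depend on $x$. I would handle this by working directly with the smoothness of $G$ on the compact set, as above, rather than with the pointwise statement. This requires checking that parallel transport along retraction curves depends smoothly on $(x,v)$, which holds because it solves a linear ODE whose coefficients depend smoothly on the parameters, combined with smooth dependence of ODE solutions on parameters.
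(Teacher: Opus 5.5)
Your proposal is correct and follows essentially the same route as the paper: write $\mathcal{T}_{v_x}=\Gamma^{\mathcal{R}}_{v_x}\circ G_x(v)$ and use the isometry of parallel transport. Like the paper, you take the vanishing of the first-order term from \Cref{lemma:tripuraneni-averaging}, bound the second-order remainder uniformly by a supremum of $\Vert D^2 G_x(u)\Vert_{\mathrm{op}}$ over the compact set $\{(x,u)\in T\mathcal{X} : \Vert u\Vert_x\le\delta\}$, and finish with the triangle inequality. Your integral form of the remainder is slightly more careful than the paper's mean-value form $D^2G_x(t_*v)$, which is not literally valid for operator-valued maps, though both give the same bound.
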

\begin{proof}
    Let $G_x(v)$ be defined as in \Cref{lemma:tripuraneni-averaging}. 
    Because $\mathcal{R}$ is a second-order retraction, the first derivative of $G_x$ evaluated at $v=0$ vanishes\footnote{This is explained in more detail in the proof of Lemma 6 in \citet{tripuraneni2018averaging}.}. 
    We have the truncated Taylor expansion:
    \begin{equation}
        G_x(v)=\text{Id}_{x}+\frac{1}{2}D^2G_x(t_*v)[v,v,\cdot], \quad \quad t_*\in[0,1].
    \end{equation}
    Hence
    \begin{equation}
        \Vert \mathcal{T}_{v_x}[w] - \Gamma^{\mathcal{R}}_{v_x}[w]\Vert_{\mathcal{R}_x(v)} = \Vert G_x(v)[w]-w\Vert_x \leq \Vert D^2G_x(t_*v)\Vert_{\text{op}}\Vert v\Vert^2_x \Vert w\Vert_x.
    \end{equation}
    %By the smoothness of $\Gamma^\mathcal{R}$ and $\mathcal{R}$, and since $T\mathcal{C}$ is compact, let $c<\infty$ denote the supremum of $\Vert D^2G_x(u)\Vert_{\text{op}}$ over $(x,u) \in T\mathcal{C}$ with $\Vert u\Vert_x < \delta$. 
    By the smoothness of $\Gamma^\mathcal{R}$ and $\mathcal{R}$, let $c<\infty$ denote the supremum of $\Vert D^2G_x(u)\Vert_{\text{op}}$ over the compact set $\{(x,u) \in T\mathcal{X} \mid \Vert u\Vert_x \leq \delta\}$.
    Since $\Gamma^\mathcal{R}$ is isometric, we can evaluate the norm as
    \begin{align}
        \Vert \mathcal{T}_{v_x}[w]\Vert_{\mathcal{R}_x(v)} = \Vert G_x(v)[w]\Vert_x = \Vert (G_x(v)[w]-w)+w\Vert_x.
    \end{align}
    %$\Vert \mathcal{T}_{v_x}[w]\Vert_{\mathcal{R}_x(v)} = \Vert G_x(v)[w]-w+w\Vert_x$. 
    The result follows via the ordinary and reversed triangle inequalities.
\end{proof}

\begin{remark} \label{rem:no-compactness condition}
     If $\mathcal{M}$ has bounded sectional curvature, non-zero injectivity radius, and $\mathcal{R}=\exp$,  \Cref{lemma:vector-transport-bounds} holds on $\mathcal{M}$ without compactness \citep[Proposition A.3]{criscitiello2023accelerated}.
\end{remark}

% \DD{ It is known that when $\mathcal{R}=\exp$, we have $D^2G_x(0)[v,v] = K_x(v,v,\cdot)=\tfrac{1}{3}R_x(v, \cdot)v$, where $R_x$ is the Riemannian curvature tensor. In this case I suspect it is possible to bound $\Vert D^2G_x(u)\Vert_{\text{op}}$ globally w.r.t $x \in \mathcal{M}$ for $u \in T_x \mathcal{M}$ sufficiently small when $\mathcal{M}$ has finite sectional curvature, and the injectivity radius has a lower bound. This should allow us to drop the compactness assumption on $\mathcal{M}$, since this is the only place where it is used.}

The next lemma shows that $\mathcal{T}$ agrees with the differentiated exponential map up to second-order terms. The assumption that $\mathcal{X} \subseteq \mathcal{M}$ is a normal neighborhood\footnote{This terminology is standard in Riemannian geometry; see e.g. \citet{do1992riemannian}.} of $x \in \mathcal{X}$ ensures that there is a unique minimising geodesic from $x$ to each $y \in \mathcal{X}$.

\begin{lemma} \label{lemma:DR-Dexp-equality}
     % Let $\mathcal{M}$ be a Riemannian manifold, $\mathcal{R}$ a second-order retraction, and $\mathcal{T}_{x,y}[w] := D\mathcal{R}_x(\mathcal{R}^{-1}_x(y))[w]$. Let $\mathcal{X} \subset \mathcal{M}$ with $x \in \mathcal{X}$ so $\exp_x^{-1}$ and $\mathcal{R}^{-1}_x$ are well-defined on $\mathcal{X}$. Then
     Let $\mathcal{X}\subseteq \mathcal{M}$ be a compact normal neighborhood of $x \in \mathcal{M}$, then
     \begin{equation}
         \mathcal{T}^{-1}_{x,y} \circ D \exp_x(\exp_x^{-1}(y)) = \mathrm{Id}_{x} + O(d(x,y)^2), \qquad \forall y \in \mathcal{X}.
     \end{equation}
\end{lemma}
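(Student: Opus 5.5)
We will compare both operators with parallel transport along the same curves and show the two paths agree to second order in $v:=\exp_x^{-1}(y)$. By the definition of a normal neighborhood, $\exp_x^{-1}$ is well defined on $\mathcal{X}$ and $\Vert v\Vert_x = d(x,y)$. Set $u:=\mathcal{R}^{-1}_x(y)$, so that $\mathcal{T}_{x,y}=\mathcal{T}_{u_x}$. The first step is to relate $u$ and $v$. Because $\mathcal{R}$ is second order, $d(\mathcal{R}_x(v),\exp_x(v))=O(\Vert v\Vert^3_x)$. Combining this with smoothness of $\mathcal{R}^{-1}_x$ on the compact set $\mathcal{X}$ (locally Lipschitz, with $D\mathcal{R}^{-1}_x(x)=\mathrm{Id}$) gives $\Vert u-v\Vert_x = O(d(x,y)^3)$. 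In particular $\Vert u\Vert_x\asymp d(x,y)$, and by \Cref{lemma:retraction-ineq} all constants can be taken uniform over $\mathcal{X}$.

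Next, apply \Cref{lemma:tripuraneni-averaging} twice. Applied to $\mathcal{R}$, it gives $\mathcal{T}_{u_x}=\Gamma^{\mathcal{R}}_{u_x}\circ(\mathrm{Id}_x+O(\Vert u\Vert^2_x))$. Applied to the retraction $\exp$, which is itself second order, it gives $D\exp_x(v)=\Gamma_{x,y}\circ(\mathrm{Id}_x+O(\Vert v\Vert_x^2))$, where $\Gamma_{x,y}$ is parallel transport along the geodesic $t\mapsto\exp_x(tv)$. Since $\mathrm{Id}+O(\Vert u\Vert^2)$ is invertible with inverse $\mathrm{Id}+O(\Vert u\Vert^2)$ for small $u$, we obtain
\begin{equation*}
\mathcal{T}^{-1}_{x,y}\circ D\exp_x(v) = (\mathrm{Id}_x+O(d^2))\circ[\Gamma^{\mathcal{R}}_{u_x}]^{-1}\circ\Gamma_{x,y}\circ(\mathrm{Id}_x+O(d^2)).
\end{equation*}
The claim therefore reduces to showing $[\Gamma^{\mathcal{R}}_{u_x}]^{-1}\circ\Gamma_{x,y}=\mathrm{Id}_x+O(d(x,y)^2)$. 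This is the holonomy of the loop formed by the geodesic $\exp_x(tv)$ followed by the reversed retraction curve $\mathcal{R}_x(tu)$.

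This holonomy estimate is the main obstacle. Both curves start at $x$ with the same initial velocity up to $O(d^3)$, since $u=v+O(d^3)$. Both also have the same second-order behaviour: $\mathcal{R}_x(tu)$ has zero initial acceleration because $\mathcal{R}$ is second order. Hence the two curves deviate by $O(d^3)$ uniformly in $t$, even in normal coordinates centred at $x$.

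To turn this into a bound on the transports, we would write parallel transport in normal coordinates as the solution of the linear ODE $\dot W = -\Gamma(\gamma)[\dot\gamma,W]$, where the Christoffel symbols vanish at $x$ and are $O(|z|)$ nearby. For each curve the transport matrix is then $\mathrm{Id}+O(d^2)$ in these coordinates. The coordinate differences of the two endpoints, and the bound $\Vert\Gamma_{x,y}-\Gamma^{\mathcal{R}}_{u_x}\Vert=O(d^3)$ from Gronwall, are easily absorbed into an $O(d^2)$ error. The metric at $y$ differs from the identity only by $O(d^2)$, so operator norms in coordinates and intrinsic operator norms agree to the required order.

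An alternative, if the ODE comparison becomes messy, is a holonomy/Ambrose–Singer area bound: the holonomy error is at most $C\cdot\mathrm{area}$ of the thin loop, and this area is $O(d\cdot d^3)$. Either route gives the uniform $O(d(x,y)^2)$ bound, with constants uniform by compactness of $\mathcal{X}$.
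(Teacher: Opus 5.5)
Your argument is correct, but it takes a different route from the paper's.

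\textbf{The paper's route.} The paper works directly with $G(v)=D(\mathcal{R}^{-1}_x\circ\exp_x)(v)$, which by the chain rule is exactly $\mathcal{T}^{-1}_{x,y}\circ D\exp_x(\exp_x^{-1}(y))$. It notes $G(0)=\mathrm{Id}_x$ and proves $DG(0)=0$ by a second-covariant-derivative computation. That computation reduces to the vanishing of $(v,w)\mapsto\nabla_v(D\mathcal{R}_x[w])$ and $(v,w)\mapsto\nabla_v(D\exp_x[w])$. The vanishing uses zero initial acceleration together with a symmetry property (via the second fundamental form) that the paper only sketches. Taylor's theorem then finishes.

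\textbf{Your route.} You instead sandwich $\mathcal{T}_{x,y}$ and $D\exp_x(v)$ between parallel transports, using \Cref{lemma:tripuraneni-averaging} once for $\mathcal{R}$ and once for $\exp$. Everything then reduces to a holonomy bound for the loop formed by the geodesic and the retraction curve.
\begin{itemize}
\item It avoids the symmetry claim and needs only the cited lemma plus elementary ODE estimates.
\item It yields $\mathcal{T}^{-1}_{x,y}\circ\Gamma_{x,y}=\mathrm{Id}_x+O(d(x,y)^2)$, i.e.\ \Cref{lemma:transport-pt}, as an intermediate step. This reverses the paper's order of deduction.
\item The price is that the second-order content is outsourced to \Cref{lemma:tripuraneni-averaging}, and the argument is longer.
\end{itemize}

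\textbf{Two simplifications.}

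First, the holonomy step is not an obstacle at this order. Both curves have length $O(d(x,y))$ and stay in a normal-coordinate ball of radius $O(d(x,y))$, where the Christoffel symbols are $O(|z|)$. So each transport matrix is already $\mathrm{Id}+O(d(x,y)^2)$. The composite $[\Gamma^{\mathcal{R}}_{u_x}]^{-1}\circ\Gamma_{x,y}$ can then be read off in the orthonormal coordinate basis at $x$. You do not need the $O(d(x,y)^3)$ closeness of the curves, Gronwall, or the area bound. Also, both curves end at $y$, so there is no endpoint discrepancy to absorb.

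Second, your first paragraph already proves the lemma. Set $\phi:=\mathcal{R}^{-1}_x\circ\exp_x$. Your bound $\|u-v\|_x=O(d(x,y)^3)$ says exactly that $\phi(v)=v+O(\|v\|_x^3)$. Taylor's theorem on the vector space $T_x\mathcal{M}$ then forces $D\phi(0)=\mathrm{Id}_x$ and $D^2\phi(0)=0$, hence $D\phi(v)=\mathrm{Id}_x+O(\|v\|_x^2)$. By the chain rule, $D\phi(v)=\mathcal{T}^{-1}_{x,y}\circ D\exp_x(v)$. Since the paper's $G$ is precisely $D\phi$, this observation gives $DG(0)=0$ with no covariant-derivative computation and no parallel transport at all.
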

\begin{proof} 
    The proof relies on the notion of a second covariant derivative of a smooth function $F:\mathcal{M} \rightarrow \mathbb{R}^d$; see e.g. \citet[Section 5.6]{absil2009optimization}. 
    For vector fields $X, Y$ on $\mathcal{M}$:
    \begin{equation} \label{eq:second-cov-derivative}
        (\nabla D F)[X,Y]:=X(DF[Y])-DF[\nabla_XY].
    \end{equation}
    Here $\nabla_XY$ is the Levi-Civita covariant derivative on $\mathcal{M}$, and $X(DF[Y])$ means apply $X$ to the components of the $\mathbb{R}^d$-valued function $DF[Y]$. 
    Let $v = \exp_x^{-1}(y)$ and define:
    \begin{align}
        G(v) & := D(\mathcal{R}_x^{-1} \circ \exp_x)(v) = D\mathcal{R}_x^{-1}(\exp_x(v)) \circ D\exp_x(v) \\
        & = [D\mathcal{R}_x(\mathcal{R}_x^{-1} \circ \exp_x(v))]^{-1} \circ D\exp_x(v) = \mathcal{T}^{-1}_{x,y} \circ D\exp_x(\exp_x^{-1}(y)).
    \end{align}
    Clearly $G(0) = \text{Id}$. 
    Next, we show $DG(0)=0$. The main result then follows by a Taylor series argument.
    Differentiating $G$ along $v$, we have:
    \begin{align}
        \frac{d}{dt}G(tv)[w]\vert_{t=0} & = (\nabla D\mathcal{R}^{-1}_x)[v,D\exp_x(0)[w]] + D\mathcal{R}^{-1}_x(x)[\nabla_v(D\exp_x[w])] \\
        & = (\nabla D \mathcal{R}_x^{-1})[v,w]+\nabla_v(D\exp_x[w]).
    \end{align}
    Here $\nabla_v(D\exp_x[w])$ is the ordinary covariant derivative of $y\rightarrow D\exp_x(\exp^{-1}_x(y))[w]$ viewed as a vector field. Consider $w = D\mathcal{R}^{-1}_x(\mathcal{R}_x(tv))[D\mathcal{R}_x(tv)[w]]$; differentiating both sides of this equation at $t=0$ yields that
    \begin{align}
        0 & = (\nabla D\mathcal{R}^{-1}_x)[v,D\mathcal{R}_x(0)[w]]+D\mathcal{R}_x^{-1}(x)[\nabla_v(D\mathcal{R}_x[w])] \\
        & = (\nabla D\mathcal{R}^{-1}_x)[v,w]+\nabla_v(D\mathcal{R}_x[w]).
    \end{align}
    Combining these equations, it suffices to show that $\nabla_v (D \mathcal{R}_x[w]) = \nabla_v(D\exp_x[w])=0$ for $w,v \in T_x\mathcal{M}$. Evidently, $(v,w) \rightarrow \nabla_v (D\mathcal{R}_x[w])$ is bilinear, and furthermore, we have
    \begin{equation}
        \frac{D^2}{dt^2}\mathcal{R}_x(tv)\vert_{t=0} := \nabla_v (D\mathcal{R}_x[v]) = 0. \label{eq:R_x_quandratic}
    \end{equation}    
    % which is often taken as the defining property of a second order retraction. That $(v,w) \rightarrow \nabla_v (D\mathcal{R}_x[w])$ is symmetric can be shown by relating this map to the \textit{second fundamental form} of $\mathcal{R}_x$, which generalises \eqref{eq:second-cov-derivative} to smooth maps between Riemannian manifolds. 
    This ``zero acceleration'' condition is often taken as the defining property of a second order retraction \citep{absil2009optimization}. 
    % That $(v,w) \rightarrow \nabla_v (D\mathcal{R}_x[w])$ is symmetric follows from the torsion-free property of the Levi-Civita connection\footnote{Specifically, we can relate this map to the \textit{second fundamental form} of $\mathcal{R}_x$, which generalises \eqref{eq:second-cov-derivative} to smooth maps between Riemannian manifolds. The second fundamental form is known to be symmetric; see e.g. the preliminary material in \citet{akyol2019conformal}.}. 
    The symmetry of $(v,w) \rightarrow \nabla_v (D\mathcal{R}_x[w])$ can be verified e.g. using local coordinates, but the derivation is tedious. A concise explanation is possible\footnote{For example, we can relate the map $(v,w) \rightarrow \nabla_v (D\mathcal{R}_x[w])$ to the \textit{second fundamental form} of $\mathcal{R}_x$, which generalises \eqref{eq:second-cov-derivative} to smooth maps between Riemannian manifolds; see e.g. the preliminary material in \citet{akyol2019conformal}. The symmetry of this map follows from the symmetry of the second fundamental form.}, but requires some more sophisticated geometric machinery.
    Provided $(v,w) \rightarrow \nabla_v(D\mathcal{R}_x[w])$ is a symmetric bilinear form, then by \eqref{eq:R_x_quandratic} and polarization it vanishes everywhere. The same argument applies to the exponential map, yielding $DG(0)=0$. The Taylor expansion is thus $G(v) = \text{Id}_x + O( \Vert v \Vert_x^2)$, where $\Vert v \Vert_x = \Vert \exp_x^{-1}(y)\Vert_x = d(x,y)$, which concludes the proof.
    % In particular, the second fundamental form is known to be symmetric \citep{akyol2019conformal}.
    % Because the map is a symmetric bilinear form, the fact that its quadratic form in \eqref{eq:R_x_quandratic} vanishes implies by polarization that it vanishes identically for all $v,w$. The same argument applies to the exponential map.
    % Therefore, $DG(0) = 0$.
    % By the Taylor series expansion, $G(v) = \mathrm{Id}_x + DG(0) + O( \Vert v \Vert_x^2)$.
    % Noting that $\Vert v \Vert_x = d(x,y)$ concludes the proof.
\end{proof}

An immediate consequence of the previous lemma is that $\mathcal{T}$ agrees with parallel transport up to second order terms. In the following lemmas, we define $\mathcal{T}_{x,y}[w] := D\mathcal{R}_x(\mathcal{R}^{-1}_x(y))[w]$, and let $\Gamma_{x,y}$ denote the parallel transport along the geodesic connecting $x,y \in \mathcal{M}$.

% In the following lemmas, we assume that $\mathcal{X} \subseteq \mathcal{M}$ is a uniquely geodesic subset, and $\Gamma_{x,y}$ denotes the parallel transport operation along the geodesic connecting $x,y \in \mathcal{X}$.

\begin{lemma} \label{lemma:transport-pt}
    % Let $\mathcal{M}$ be a Riemannian manifold, $\mathcal{R}$ a second-order retraction, and $\mathcal{T}_{x,y}[w]=D\mathcal{R}_x(\mathcal{R}^{-1}_x(y))[w]$. Let $\Gamma_{x,y}$ denote the parallel transport along the geodesic connecting $x,y\in \mathcal{M}$. Let $\mathcal{X}\subset \mathcal{M}$ with $x \in \mathcal{X}$ so that $\mathcal{R}^{-1}_x, \exp^{-1}_x$ are well-defined on $\mathcal{X}$. Then
     Let $\mathcal{X}\subseteq \mathcal{M}$ be a compact normal neighborhood of $x \in \mathcal{M}$, then
    \begin{equation}
        \mathcal{T}^{-1}_{x,y} \circ \Gamma_{x,y} = \mathrm{Id}_{x} + O(d(x,y)^2), \qquad \forall y \in \mathcal{X}.
    \end{equation}
\end{lemma}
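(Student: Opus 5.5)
We factor the composition through the differentiated exponential map and use \Cref{lemma:DR-Dexp-equality} for one factor. Set $v=\exp_x^{-1}(y)$, which is well defined because $\mathcal{X}$ is a normal neighborhood of $x$. Write
\begin{equation*}
    \mathcal{T}^{-1}_{x,y}\circ\Gamma_{x,y} = \big[\mathcal{T}^{-1}_{x,y}\circ D\exp_x(v)\big]\circ\big[D\exp_x(v)^{-1}\circ \Gamma_{x,y}\big].
\end{equation*}
\Cref{lemma:DR-Dexp-equality} gives that the first factor is $\mathrm{Id}_x+O(d(x,y)^2)$. It therefore suffices to show that
\begin{equation*}
    D\exp_x(v)^{-1}\circ\Gamma_{x,y}=\mathrm{Id}_x+O(\Vert v\Vert_x^2).
\end{equation*}
Once both factors are of this form and uniformly bounded, their product is $\mathrm{Id}_x+O(d(x,y)^2)$, since $\Vert v\Vert_x=d(x,y)$.

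For the second factor, observe that $\Gamma_{x,y}$ is exactly $\Gamma^{\mathcal{R}}_{v_x}$ from \Cref{lemma:tripuraneni-averaging} in the special case $\mathcal{R}=\exp$. This is because $t\mapsto \exp_x(tv)$ is the geodesic from $x$ to $y$. The exponential map is trivially a second-order retraction, so \Cref{lemma:tripuraneni-averaging} applied with $\mathcal{R}=\exp$ gives
\begin{equation*}
    \Gamma_{x,y}^{-1}\circ D\exp_x(v)=\mathrm{Id}_x+\tfrac12 K_x[v,v,\cdot]+O(\Vert v\Vert^3_x)=\mathrm{Id}_x+O(\Vert v\Vert_x^2).
\end{equation*}
Alternatively, one can argue directly as in the proof of \Cref{lemma:vector-transport-bounds}: the map $v\mapsto \Gamma_{x,y}^{-1}\circ D\exp_x(v)$ equals the identity at $v=0$, and its first derivative vanishes there. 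Indeed, the covariant derivative of the Jacobi field $t\mapsto D\exp_x(tv)[tw]$ at $t=0$ is $w$, and this is the only first-order term, so the linear correction vanishes. The quadratic term is the familiar curvature term $\tfrac16 R(v,\cdot)v$.

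Because the operator is $\mathrm{Id}_x+O(\Vert v\Vert^2)$ and $\Vert v\Vert$ is bounded on the compact set $\mathcal{X}$ (shrinking $\mathcal{X}$ if necessary so the Taylor remainder is controlled), it is invertible with inverse $\mathrm{Id}_x+O(\Vert v\Vert^2)$. This follows from a Neumann series for small $\Vert v\Vert$, and from compactness together with continuity of $D\exp_x$ on the normal neighborhood, where $D\exp_x$ is nonsingular, for the remaining range.

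The main obstacle is making the $O(\cdot)$ constants uniform over $y\in\mathcal{X}$. The Taylor statements above are local in $v$. To handle this, the plan is to split into two regimes. When $d(x,y)<\delta$, the Taylor bounds apply directly. When $d(x,y)\ge\delta$, all the operators involved are continuous on the compact set $\{v:\ \exp_x(v)\in\mathcal{X}\}$ and hence bounded. So $\Vert \mathcal{T}^{-1}_{x,y}\circ\Gamma_{x,y}-\mathrm{Id}_x\Vert_{\mathrm{op}}\le C\le (C/\delta^2)\,d(x,y)^2$. Here the invertibility of $\mathcal{T}_{x,y}$ is guaranteed by total retractivity, as assumed in \Cref{assumption:retraction}.
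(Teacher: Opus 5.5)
Your proposal is correct and follows the paper's proof. It uses the same factorization through $D\exp_x(\exp_x^{-1}(y))$, \Cref{lemma:DR-Dexp-equality} for the first factor, and \Cref{lemma:tripuraneni-averaging} with $\mathcal{R}=\exp$ for the second; your Neumann-series inversion and compactness argument for uniform constants spell out steps the paper's two-line proof leaves implicit.
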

\begin{proof} 
    We have
    \begin{equation}
        \mathcal{T}^{-1}_{x,y} \circ \Gamma_{x,y} = \underbrace{\mathcal{T}^{-1}_{x,y} \circ D\exp_x(\exp_x^{-1}(y))}_{=(a)} \circ \underbrace{[D\exp_x(\exp_x^{-1}(y))]^{-1} \circ \Gamma_{x,y}}_{=(b)} .
    \end{equation}
    From \Cref{lemma:DR-Dexp-equality} $(a) = \text{Id}_x + O(d(x,y)^2)$, and by \Cref{lemma:tripuraneni-averaging} the same is true for $(b)$.
\end{proof}

The following result has been used in various works; here it is paraphrased from \citet[Lemma 2]{han2023riemannian}. The idea is attributed to \citet{karcher1977riemannian}. 

\begin{lemma} \label{lemma:karcher-pt}
    % Let $\mathcal{M}$ be a Riemannian manifold, and $\mathcal{X}\subseteq\mathcal{M}$ a compact subset with unique geodesics. There exists a curvature and diameter-dependent constant $C$, such that
    Let $\mathcal{X}\subseteq\mathcal{M}$ be a compact set where each pair $x,y \in \mathcal{X}$ is connected by a unique geodesic. There exists a curvature and diameter-dependent constant $C>0$ such that
    \begin{equation}
        \Vert \Gamma_{y,z} \Gamma_{x,y} u - \Gamma_{x,z} u \Vert_z \leq C d(x,y) d(y,z) \Vert u\Vert_x, \ \ \ \ \forall x,y,z \in \mathcal{X}, \forall u \in T_x\mathcal{M}.
    \end{equation}
\end{lemma}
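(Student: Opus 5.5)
We would prove the holonomy estimate of \Cref{lemma:karcher-pt} by reducing it to a comparison of parallel transport around the geodesic triangle with vertices $x,y,z$. Since $\Gamma_{x,z}$ is an isometry with inverse $\Gamma_{z,x}$, it suffices to bound
\begin{equation*}
\Vert \Gamma_{z,x}\Gamma_{y,z}\Gamma_{x,y}u - u\Vert_x ,
\end{equation*}
which measures the failure of transport around the closed loop $x\to y\to z\to x$ to return $u$ to itself. Our aim is to show that this quantity is at most $C\,d(x,y)\,d(y,z)\Vert u\Vert_x$. Heuristically, the loop encloses an area of order $d(x,y)d(y,z)$, and the holonomy is the integral of the curvature over that area.

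\textbf{Step 1: a ruled surface.} We fill the triangle with a smooth two-parameter family. Let $\gamma(s)$, $s\in[0,1]$, be the geodesic from $y$ to $z$. For each $s$, let $c_s(t)$, $t\in[0,1]$, be the unique geodesic from $x$ to $\gamma(s)$; uniqueness holds by hypothesis. This defines a map
\begin{equation*}
\sigma(s,t)=c_s(t)=\exp_x\!\big(t\exp_x^{-1}(\gamma(s))\big).
\end{equation*}
Note that $\partial_s\sigma(s,t)=t\,D\exp_x\big(t\exp_x^{-1}\gamma(s)\big)\big[D\exp_x^{-1}(\gamma(s))[\dot\gamma(s)]\big]$.

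\textbf{Step 2: a differential identity.} Define $U(s)$ as follows: transport $u$ from $x$ to $\gamma(s)$ along $c_s$, and then pull it back to $y$ along $\gamma$ in reverse. At $s=0$ this gives $\Gamma_{x,y}u$, and at $s=1$ it gives $\Gamma_{y,z}^{-1}\Gamma_{x,z}u$. Let $W(s,t)$ be the parallel field along $c_s$ with $W(s,0)=u$. The standard identity $\tfrac{D}{ds}\tfrac{D}{dt}W-\tfrac{D}{dt}\tfrac{D}{ds}W=R(\partial_s\sigma,\partial_t\sigma)W$, combined with $\tfrac{D}{dt}W=0$ and $\tfrac{D}{ds}W(s,0)=0$, yields after integration in $t$
\begin{equation*}
\Big\Vert\tfrac{D}{ds}W(s,1)\Big\Vert\le \int_0^1 \Vert R\Vert\,\Vert\partial_s\sigma\Vert\,\Vert\partial_t\sigma\Vert\,\Vert u\Vert\,dt .
\end{equation*}
Since parallel transport along $\gamma$ is isometric, $\Vert U'(s)\Vert=\Vert \tfrac{D}{ds}W(s,1)\Vert$. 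Integrating over $s\in[0,1]$ then bounds $\Vert \Gamma_{y,z}\Gamma_{x,y}u-\Gamma_{x,z}u\Vert_z$ by
\begin{equation*}
\sup\Vert R\Vert\cdot\sup\Vert\partial_t\sigma\Vert\cdot\int_0^1\sup_t\Vert\partial_s\sigma\Vert\,ds\cdot\Vert u\Vert .
\end{equation*}

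\textbf{Step 3: bounding the factors.} Two of the factors are easy. First, $\Vert\partial_t\sigma\Vert=d(x,\gamma(s))\le \mathrm{diam}(\mathcal{X})$. Second, $\sup\Vert R\Vert$ is finite by compactness. The main obstacle is the factor $\Vert\partial_s\sigma\Vert$. It is at most a constant times $\Vert\dot\gamma\Vert=d(y,z)$, since $D\exp_x$ and $D\exp_x^{-1}$ are uniformly bounded on the compact set $\exp_x^{-1}(\mathcal{X})$; this follows from unique geodesics and from the fact that points lie within the injectivity domain. This gives only
\begin{equation*}
C\,\mathrm{diam}\cdot d(y,z)\Vert u\Vert ,
\end{equation*}
which is missing the factor $d(x,y)$. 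To recover it, we use a symmetric argument: also fill the triangle from the vertex $z$ instead of $x$. This gives a bound proportional to $d(y,z)$ times the other side, and we must then choose the filling to obtain the product $d(x,y)\,d(y,z)$.

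More precisely, we plan to fill from the vertex $y$, using the family of geodesics from $y$ to points of the segment $xz$. The resulting estimate is area-like, with the Jacobi field along each geodesic bounded by $t\cdot\Vert\text{variation}\Vert$ via Rauch comparison. We will also use the geometric fact that the distance from $y$ to the segment $[x,z]$ and the length of that segment can be traded for $d(x,y)$ and $d(y,z)$ up to constants; this follows from the triangle inequality, since $d(x,z)\le d(x,y)+d(y,z)$. The remaining gap is handled by a case split. If $d(x,y)\ge\epsilon_0$ is bounded below, the crude bound $C\,\mathrm{diam}\cdot d(y,z)$ is already of the required form $C'd(x,y)d(y,z)$. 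In the small case, a local Jacobi-field estimate $\Vert\partial_s\sigma\Vert\lesssim t\,d(\cdot)$ supplies the missing factor. Since the details follow \citet{karcher1977riemannian}, we will cite this as in \citet[Lemma 2]{han2023riemannian} and check that the constant depends only on curvature bounds and the diameter of $\mathcal{X}$.
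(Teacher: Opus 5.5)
The paper gives no proof of this lemma; it quotes it from \citet[Lemma 2]{han2023riemannian} and credits \citet{karcher1977riemannian}, so your closing citation is what the paper does. The self-contained argument you sketch, however, fails at exactly the step meant to produce the product $d(x,y)\,d(y,z)$.

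Your Steps 1--2 are sound. The trouble is that you bound the curvature term pointwise by $\Vert R\Vert\,\Vert\partial_s\sigma\Vert\,\Vert\partial_t\sigma\Vert\,\Vert u\Vert$. With that estimate, filling from the apex $y$ over the base $[x,z]$ gives a bound of order $d(x,z)\int_0^1 d(y,\beta(s))\,ds$. The lengths $d(y,\beta(s))$ of the rulings run from $d(x,y)$ to $d(y,z)$; they are not the height $d(y,[x,z])$. For $x=y$ this bound is of order $d(y,z)^2$, yet the left-hand side vanishes, so this filling cannot give the product. The height would only enter if you controlled the area element $\Vert\partial_s\sigma\wedge\partial_t\sigma\Vert$, i.e.\ angles, which you do not.

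Your ``small case'' has the same defect. With apex $x$, the Jacobi bound $\Vert\partial_s\sigma\Vert\lesssim t\,d(y,z)$ controls the wrong factor. The other factor, $\Vert\partial_t\sigma\Vert=d(x,\gamma(s))$, can reach $d(x,y)+d(y,z)$, which leaves a term of order $d(y,z)^2$. That term is not $O(d(x,y)d(y,z))$ when $d(x,y)\ll d(y,z)$.

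Your framework is easily repaired as follows.
\begin{itemize}
\item Replace $\mathrm{diam}(\mathcal{X})$ by the sharper bound $\Vert\partial_t\sigma\Vert\le d(x,y)+d(y,z)$. Apex $x$ over $[y,z]$ then gives $C\,d(y,z)\,(d(x,y)+d(y,z))\Vert u\Vert$.
\item Fill from the apex $z$ over $[x,y]$. Let $\alpha$ be the geodesic from $x$ to $y$. Transport $u$ parallelly along $\alpha$ to $\alpha(s)$, then along the geodesic from $\alpha(s)$ to $z$. At $s=0$ and $s=1$ this gives $\Gamma_{x,z}u$ and $\Gamma_{y,z}\Gamma_{x,y}u$, both already in $T_z\mathcal{M}$. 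The initial field is parallel in $s$, and all rulings end at the fixed point $z$. The same identity then gives $C\,d(x,y)\,(d(x,y)+d(y,z))\Vert u\Vert$.
\item Split on whether $d(x,y)\le d(y,z)$, not on $d(x,y)\ge\epsilon_0$, and take the shorter side at $y$ as the base. With $a=d(x,y)$ and $b=d(y,z)$, this gives $C\min\{a,b\}(a+b)\le 2C\,ab$, which is the lemma.
\end{itemize}
The only other input is the uniform bound on $D\exp$ and $D\exp^{-1}$ that you already invoke. This requires the relevant pairs to lie uniformly inside a region where $\exp$ is a diffeomorphism, which is more than bare uniqueness of geodesics. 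It is, however, the setting in which the paper uses the lemma: a compact totally normal neighbourhood in \Cref{lemma:triangle-transport}.
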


The following will be crucial for proving the consistency of our Fisher approximation.

\begin{corollary} \label{lemma:triangle-transport}
    % Let $\mathcal{M}$ be a Riemannian manifold, $\mathcal{R}$ a second-order retraction, and $\mathcal{T}_{x,y}[w]=D\mathcal{R}_x(\mathcal{R}^{-1}_x(y))[w]$. Let $\Gamma_{x,y}$ denote the parallel transport along the geodesic connecting $x,y\in \mathcal{M}$. Then for each $x \in \mathcal{M}$ there exists some open neighborhood $\mathcal{X}$ such that
    For each $x \in \mathcal{M}$ there exists some neighborhood $\mathcal{X}\subseteq \mathcal{M}$ such that
    \begin{equation}
        \Gamma_{z,x} \circ \mathcal{T}^{-1}_{z,y} \circ \Gamma_{x,y} = \text{Id}_x + O(d(x,y)d(y,z) + d(y,z)^2), \qquad \forall y,z \in \mathcal{X}.
    \end{equation}
\end{corollary}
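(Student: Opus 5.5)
The plan is to factor the operator so that \Cref{lemma:transport-pt} and \Cref{lemma:karcher-pt} each handle one piece, and then collect the error terms. First choose $\mathcal{X}$: take a compact neighborhood of $x$ small enough that three conditions hold. It should be a normal neighborhood of every one of its points, so that \Cref{lemma:transport-pt} applies with any base point $z\in\mathcal{X}$. Each pair of points should be joined by a unique minimising geodesic, so that \Cref{lemma:karcher-pt} applies. Finally, $\mathcal{R}^{-1}$ should be well defined on it. Such a neighborhood exists, for instance a small geodesic ball of radius below the convexity radius at $x$, shrunk further so that it is totally retractive. Because the $O(\cdot)$ constants in \Cref{lemma:transport-pt} come from smooth quantities on a compact set, they can be taken uniform in the base point; this uniformity needs a brief justification but is routine.

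Next, insert $\Gamma_{y,z}\Gamma_{z,y}=\mathrm{Id}_y$ to write
\begin{equation*}
\Gamma_{z,x}\circ\mathcal{T}^{-1}_{z,y}\circ\Gamma_{x,y}
=\Gamma_{z,x}\circ\big(\mathcal{T}^{-1}_{z,y}\circ\Gamma_{z,y}\big)\circ\Gamma_{y,z}\circ\Gamma_{x,y}.
\end{equation*}
By \Cref{lemma:transport-pt} with base point $z$, the middle factor is $\mathrm{Id}_z+E_1$ with $\Vert E_1\Vert_{\mathrm{op}}=O(d(y,z)^2)$. By \Cref{lemma:karcher-pt}, $\Gamma_{y,z}\Gamma_{x,y}=\Gamma_{x,z}+E_2$ with $\Vert E_2\Vert_{\mathrm{op}}\le C\,d(x,y)d(y,z)$. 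Using $\Gamma_{z,x}\Gamma_{x,z}=\mathrm{Id}_x$, the product expands to
\begin{equation*}
\mathrm{Id}_x+\Gamma_{z,x}E_2+\Gamma_{z,x}E_1\Gamma_{x,z}+\Gamma_{z,x}E_1E_2 .
\end{equation*}
Parallel transports are isometries, so the operator norms of these terms are bounded by $C d(x,y)d(y,z)$, $O(d(y,z)^2)$, and $O(d(y,z)^2)\cdot C\,d(x,y)d(y,z)$ respectively. The last term is absorbed into the $d(y,z)^2$ term, since $d(x,y)d(y,z)$ is bounded by the squared diameter of $\mathcal{X}$.

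The main obstacle is the uniformity of the constant in \Cref{lemma:transport-pt} as the base point $z$ ranges over $\mathcal{X}$; the lemma is stated only for a fixed $x$. To settle this, I would go back to its proof. There, the remainder comes from second-order Taylor expansions of $G_z(v)$ from \Cref{lemma:tripuraneni-averaging} and of the analogous map in \Cref{lemma:DR-Dexp-equality}. Both maps are smooth jointly in $(z,v)$ on the compact set $\{(z,v)\in T\mathcal{X}:\Vert v\Vert_z\le\delta\}$, so the second derivatives are bounded uniformly, exactly as in the proof of \Cref{lemma:vector-transport-bounds}. This gives the required uniform constant.
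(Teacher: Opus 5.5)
Your proposal is correct and takes the same route as the paper. On a compact totally normal neighborhood, you use \Cref{lemma:transport-pt} with base point $z$ to replace $\mathcal{T}^{-1}_{z,y}$ by $\Gamma_{y,z}$ up to $O(d(y,z)^2)$, and then \Cref{lemma:karcher-pt} to reduce $\Gamma_{z,x}\Gamma_{y,z}\Gamma_{x,y}$ to $\mathrm{Id}_x$ up to $O(d(x,y)d(y,z))$; your handling of the cross term and of the uniformity of the constant as $z$ varies over $\mathcal{X}$ makes explicit what the paper's proof leaves implicit.
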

\begin{proof}
    We can take $\mathcal{X}$ to be a compact totally normal\footnote{Loosely, $\mathcal{X}$ is a normal neighborhood for each point in $\mathcal{X}$. Such a set is guaranteed to exist around any point; see e.g. section 3 in \citet{do1992riemannian}.} neighborhood of $x \in \mathcal{M}$, which ensures the assumptions of \Cref{lemma:karcher-pt}. From \Cref{lemma:transport-pt} we have $\Gamma_{z,x} \circ \mathcal{T}^{-1}_{z,y} \circ \Gamma_{x,y} = \Gamma_{z,x} \circ \Gamma_{y,z} \circ \Gamma_{x,y} + O(d(y,z)^2)$. Then from \Cref{lemma:karcher-pt}, we have $\Gamma_{z,x} \circ \Gamma_{y,z} \circ \Gamma_{x,y} = \text{Id}_x + O(d(x,y)d(y,z))$. Combining these yields the result.
\end{proof}

\newpage
\subsection{Eigenvalue Bounds of Fisher Approximation} \label{appx:eigenbound-fisherapprox}

In the following lemma, the upper bound can be shown using weaker assumptions; see the proof of \Cref{thm:global-convergence}. It is included here for completeness.

\begin{lemma} \label{lemma:eigenvalue-lower-bound}
    % Suppose that assumption \ref{assumption:eigenvalues}, and conditions (3) and (4) of Theorem \ref{thm:global-convergence} hold.
    Suppose that the conditions of \Cref{thm:global-convergence} hold, the iterates generated by \Cref{alg:riemannian-if-ngd} converge $\theta^{(k)} \rightarrow \theta^*$ almost-surely, and that $I_F(\theta)$ is continuous in a neighborhood of $\theta^*$ with $\lambda_{\min}(I_F(\theta^*)) > 0$. It follows that:
    \begin{equation}
        0 < \liminf_{s\rightarrow \infty} \lambda_{min}(\mathbf{H}_s), \qquad \limsup_{s\rightarrow \infty} \lambda_{\max}(\mathbf{H}_s) < \infty, \qquad \text{a.s.}
    \end{equation}
\end{lemma}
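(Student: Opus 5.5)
We need to prove liminf of the smallest eigenvalue of $\mathbf{H}_s$ is positive and limsup of the largest eigenvalue is finite. The plan is to unroll the recursion for $H_s$ into a transported sum of rank-one terms, split each term into its conditional mean (a transported Fisher operator) plus a martingale increment, and control each piece.

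Since each $H_{s+1/2}$ is a congruence of $H_s$ by the transport, we can write
\[
H_{s} = \mathcal{S}_{0\to s}^*\, H_0\, \mathcal{S}_{0\to s} + \sum_{k=1}^{s} \mathcal{S}_{k\to s}^*\big(\phi_k\phi_k^\top G_{\theta^{(k-1)}}\big)\mathcal{S}_{k\to s},
\]
where $\mathcal{S}_{k\to s}$ is the appropriate composition of the one-step transports $\mathcal{T}_{j,j-1}$ (inverted/adjointed as dictated by \eqref{eq:fisher-transport}).

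\textbf{Step 1: distortion of the composed transports.} By \Cref{lemma:vector-transport-bounds}, each one-step transport has singular values in $[1-c\Vert v_j\Vert^2,\,1+c\Vert v_j\Vert^2]$. Hence the singular values of $\mathcal{S}_{k\to s}$ lie between $\prod_j(1-c\Vert v_j\Vert^2)$ and $\prod_j(1+c\Vert v_j\Vert^2)$. Both products converge a.s. to positive finite limits provided $\sum_j \Vert v_j\Vert^2<\infty$ a.s. To get this, note $\Vert v_{j}\Vert\le \tau_j\,\lambda_{\max}(\mathbf{H}_j^{-1})\,\Vert\widehat{\nabla\mathcal{L}}\Vert$. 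By \Cref{assumption:eigenvalues} and \Cref{assumption:step-size},
\[
\mathbb{E}\big[\Vert v_j\Vert^2 \mid \mathcal{F}_{j-1}\big]\lesssim j^{2\beta-2\alpha}\big(C_0+C_1(\mathcal{L}(\theta^{(j-1)})-\mathcal{L}(\theta^*))\big),
\]
and $2\alpha-2\beta>1$. Since $\mathcal{L}(\theta^{(j)})-\mathcal{L}(\theta^*)$ converges a.s. by \Cref{thm:global-convergence}, and so is a.s. bounded, a localization (stopping-time) argument combined with Robbins–Siegmund gives $\sum_j\Vert v_j\Vert^2<\infty$ a.s. Consequently there is an a.s. finite random constant $\kappa\ge1$ such that all $\mathcal{S}_{k\to s}$ have singular values in $[\kappa^{-1},\kappa]$, uniformly in $k\le s$. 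By the Courant–Fischer characterization, it then suffices to bound the eigenvalues of the untransported average
\[
\tfrac1s\Big(\epsilon\,\mathrm{Id}+\sum_k \phi_k\phi_k^\top G\Big)
\]
in a common frame, up to the factor $\kappa^{\pm2}$.

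\textbf{Step 2: reduction to a common frame.} The difficulty is that the $\phi_k$ live in different tangent spaces. I would transport everything to $T_{\theta^*}\mathcal{M}$ using the parallel transports $\Gamma_{\theta^{(k-1)},\theta^*}$. Because both $\mathcal{S}$ and $\Gamma$ have bounded distortion, the quadratic forms $\langle w,\mathbf{H}_s w\rangle$ are comparable, up to constants, to
\[
\frac1s\sum_k \big\langle w,\ \Gamma\phi_k\big\rangle_{\theta^*}^2, \qquad w\in T_{\theta^*}\mathcal{M},
\]
for unit $w$ (transported appropriately). Only comparability is needed here, not consistency, which keeps this step crude.

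\textbf{Step 3: law of large numbers for the sum.} Write
\[
\Gamma\phi_k\phi_k^\top\Gamma^* = \Gamma I_F(\theta^{(k-1)})\Gamma^* + \xi_k,
\]
where $\xi_k$ is a martingale difference. Its conditional second moment is bounded by the fourth-moment condition (4) of \Cref{thm:global-convergence}, which is a.s. bounded because $\mathcal{L}(\theta^{(k)})$ converges. The martingale strong law (e.g. Robbins–Siegmund applied to $\Vert\sum_k \xi_k/k\Vert^2$, followed by Kronecker's lemma) gives $\frac1s\sum_k\xi_k\to0$ a.s. By continuity of $I_F$ at $\theta^*$ and $\theta^{(k)}\to\theta^*$, Cesàro averaging gives
\[
\frac1s\sum_k\Gamma I_F(\theta^{(k-1)})\Gamma^*\to I_F(\theta^*).
\]
Combining these, the quadratic forms are eventually sandwiched between $\kappa^{-2}\lambda_{\min}(I_F(\theta^*))/2$ and $\kappa^{2}\,(2\lambda_{\max}(I_F(\theta^*))+1)$. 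The $H_0$ term contributes $O(1/s)$ and is harmless.

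The main obstacle is Step 1, specifically establishing $\sum\Vert v_j\Vert^2<\infty$ a.s. rather than only $\sum\tau_j^2<\infty$. The conditional bound involves the random, only a.s. bounded, quantity $\mathcal{L}(\theta^{(j)})-\mathcal{L}(\theta^*)$ and the random constant hidden in \Cref{assumption:eigenvalues}. I would handle both by localizing on the events where these quantities are at most $M$, applying the conditional Borel–Cantelli/Robbins–Siegmund result there, and letting $M\to\infty$.
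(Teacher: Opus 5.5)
Your Step 1 is sound. It amounts to the paper's first and third claims: $\sum_j\Vert v_j\Vert^2<\infty$ a.s., and the composite transports are uniformly near-isometric. The real obstacle is Step 2, and it is a genuine gap. In $\langle w,\mathbf{H}_s w\rangle$ the $k$-th rank-one term enters as $\langle \phi_k,\mathcal{S}_{k\to s}w\rangle^2$, through a map $\mathcal{S}_{k\to s}$ that changes with $k$. Your comparison form instead pairs $\phi_k$ with $\Gamma_{\theta^*,\theta^{(k-1)}}w'$ for one fixed $w'$. Singular-value bounds on $\mathcal{S}_{k\to s}$ and $\Gamma$ only make the \emph{lengths} of these two vectors comparable.

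A single rank-one term has no lower eigenvalue bound, so a termwise comparison also needs their \emph{directions} to agree. That is, $\Gamma_{s,*}\mathcal{T}^{-1}_{[s,k]}\Gamma_{*,k}$ must be uniformly close to $\mathrm{Id}_{\theta^*}$. For the same reason, Courant--Fischer transfers eigenvalue bounds through one congruence, not through a sum of different ones. The paper obtains this holonomy control only inside the proof of \Cref{thm:fisher-consistency}. It uses \Cref{lemma:triangle-transport} and summability of $\sum_k d(\theta^{(k)},\theta^{(k+1)})\,d(\theta^{(k)},\theta^*)$, which needs $\alpha>2/3$ and the rate of \Cref{thm:convergence-rate}. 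That theorem in turn invokes the present lemma, so the argument would be circular.

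With convergence but no rate, nothing prevents the algorithm's frame and the parallel-transport frame from drifting apart. So your Step 3, which is correct in $T_{\theta^*}\mathcal{M}$, is about a different operator from $\mathbf{H}_s$. Only the upper bound survives as written, because it can be done termwise: $\lambda_{\max}(\mathbf{H}_s)\le\kappa^2 s^{-1}(\epsilon+\sum_k\Vert\phi_k\Vert^2)$, plus a scalar strong law. The lower bound is what breaks.

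The fix is to work only in frames produced by the algorithm's own transports. The paper does this by splitting $\mathbf{H}_{s+1}=R_{s+1}+M_{s+1}$ in the moving frame:
\begin{itemize}
\item $R_{s+1}=\frac{1}{s+1}\sum_k\mathcal{T}^{-1}_{[s,k]}I_F(\theta^{(k)})\mathcal{T}^{-*}_{[s,k]}$ is a sum of congruences of \emph{full-rank} positive operators. The norm-only information from your Step 1 therefore already gives $\lambda_{\min}(R_{s+1})\gtrsim\kappa^{-2}\lambda_{\min}(I_F(\theta^*))$, with no directional alignment needed.
\item The centred part satisfies $M_{s+1}=\frac{s}{s+1}\mathcal{T}^{-1}_{s,s-1}M_s\mathcal{T}^{-*}_{s,s-1}+\frac{1}{s+1}\Theta_{s,s}$ with $\E[\Theta_{s,s}\mid\tilde{\mathcal{F}}_s]=0$. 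Robbins--Siegmund applied to $s(\log s)^{-1-\delta}\Vert M_s\Vert^2_{\mathrm{HS}}$ then gives $M_s\to0$.
\end{itemize}

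An equivalent version, closer to your plan, uses $\mathcal{T}^{-1}_{[s,k]}=\mathcal{T}^{-1}_{[s,0]}\mathcal{T}_{[k,0]}$. Then $\mathbf{H}_{s+1}$ is a single congruence by $\mathcal{T}^{-1}_{[s,0]}$ of $\frac{1}{s+1}\big(H_0+\sum_k\psi_k\psi_k^\top G_{\theta^{(0)}}\big)$, where $\psi_k=\mathcal{T}_{[k,0]}\phi_{k+1}\in T_{\theta^{(0)}}\mathcal{M}$. In this form Courant--Fischer is legitimate. Since $\mathcal{T}_{[k,0]}$ is $\tilde{\mathcal{F}}_k$-measurable, your martingale strong law runs in this fixed space. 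The mean part then needs only termwise bounds $\kappa^{-2}\lambda_{\min}(I_F(\theta^{(k)}))$, not Ces\`aro convergence to $I_F(\theta^*)$.
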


\begin{proof}
   Define the composite transport $\mathcal{T}_{[k,s]} : T_{\theta^{(k)}}\mathcal{M} \rightarrow T_{\theta^{(s)}}\mathcal{M}$ such that for $k<s$
    \begin{equation}
        \mathcal{T}_{[k,s]} := \mathcal{T}_{s-1,s} \ \circ \ ... \ \circ \ \mathcal{T}_{k,k+1}, \qquad \mathcal{T}_{[s,k]} = \mathcal{T}_{k+1,k} \ \circ \ ... \ \circ \ \mathcal{T}_{s,s-1},
    \end{equation}
    and $\mathcal{T}_{[s,s]} = \text{Id}_{\theta^{(s)}}$. Then, we use the following notation for the transported score vectors
    \begin{equation} \label{eq:transported-score-vectors}
        \phi_{k,s} = \mathcal{T}_{[s,k]}^{-1}[\nabla_\theta \log{q_{\theta^{(k)}}(\bar{y}_{k+1})}], \qquad \tilde{\phi}_{k,s} = G_{\theta^{(s)}} \phi_{k,s},
    \end{equation}
    where $G_{\theta^{(s)}}$ is the matrix representation of the baseline metric at $\theta^{(s)}$.
    Consequently, 
    \begin{equation}
        \mathbf{H}_{s+1} = \frac{1}{s+1}\mathcal{T}^{-1}_{[s,0]} H_0 \mathcal{T}^{-*}_{[s,0]}+\frac{1}{s+1}\sum^{s}_{k=0} \phi_{k,s} \tilde{\phi}_{k,s}^\intercal.
    \end{equation}
    Denote $\Phi_{k,s} := \mathcal{T}^{-1}_{[s,k]} \circ I_F(\theta^{(k)}) \circ \mathcal{T}^{-*}_{[s,k]}$, then writing $\mathbf{H}_{s+1}=R_{s+1}+M_{s+1}$ where
    % \begin{equation}
    %     \Phi_{k,s} := \mathcal{T}^{-1}_{[s,k]} \circ I_F(\theta^{(k)}) \circ \mathcal{T}^{-*}_{[s,k]}.
    % \end{equation}
    % We split $\mathbf{H}_{s+1}$ into the sum
    \begin{equation}
        R_{s+1} = \frac{1}{s+1}\sum^s_{k=0}\Phi_{k,s}, \qquad M_{s+1} = \frac{1}{s+1} \big\{\mathcal{T}^{-1}_{[s,0]} H_0 \mathcal{T}^{-*}_{[s,0]}+\sum^{s}_{k=0} \underbrace{\left[\phi_{k,s} \tilde{\phi}_{k,s}^\intercal - \Phi_{k,s}\right]}_{:=\Theta_{k,s}}\big\}.
    \end{equation}
    % Our goal will be to show that $M_s \rightarrow 0$, and that $R_{s+1}$ satisfies the eigenvalue bounds a.s.
    
    \subsubsection{Claim: $\sum\Vert v_{s+1}\Vert^2_{\theta^{(s)}}$ is finite almost surely.} \label{lemma:eigenvalue-lower-claim1}
    Define the filtration $\mathcal{F}_s = \sigma(y_{k,i}, \bar{y}_{k'} : k \leq s, k' \leq s+1, i \leq B)$. 
    Taking $v_{s+1}$ as in \eqref{eq:approx-ngd-update}, 
    % \begin{align}
    %     \mathbb{E}[\Vert v_{s+1}\Vert^2_{\theta^{(s)}} \vert \mathcal{F}_s] & \leq \tau_{s+1}^2 \Vert \mathbf{H}^{-1}_{s+1}\Vert^2_{\text{op}} \times \mathbb{E}[\Vert \widehat{\nabla \mathcal{L}}(\theta^{(s)})\Vert^2_{\theta^{(s)}} \vert \mathcal{F}_s] \\
    %     & \leq\tau_{s+1}^2 \Vert \mathbf{H}^{-1}_{s+1}\Vert^2_{\text{op}} \times \tfrac{1}{B}\mathbb{E}[\Vert g(Y,\theta^{(s)}) \Vert^2_{\theta^{(s)}}\vert \mathcal{F}_s] \\
    %     & \leq O(s^{-(1+\delta)})\times  \tfrac{1}{B}[C_0 + C_1\underbrace{(\mathcal{L}(\theta^{(s)}) - \mathcal{L}(\theta^*))}_{:=W_s}]
    % \end{align}
    \begin{align}
        \mathbb{E}[\Vert v_{s+1}\Vert^2_{\theta^{(s)}} \vert \mathcal{F}_s] & \leq \tau_{s+1}^2 \Vert \mathbf{H}^{-1}_{s+1}\Vert^2_{\text{op}} \times \mathbb{E}[\Vert \widehat{\nabla \mathcal{L}}(\theta^{(s)})\Vert^2_{\theta^{(s)}} \vert \mathcal{F}_s] \\
        & \leq \tau_{s+1}^2 \Vert \mathbf{H}^{-1}_{s+1}\Vert^2_{\text{op}} \times \mathbb{E}[\Vert g(Y,\theta^{(s)}) \Vert^2_{\theta^{(s)}}\vert \mathcal{F}_s] ,
    \end{align}
    where the second line follows by convexity.
    Let $\delta = 2\alpha - 2\beta -1>0$. 
    Then, using \Cref{assumption:eigenvalues} and condition (3) from \Cref{thm:global-convergence}, we have
    \begin{align}
        \mathbb{E}[\Vert v_{s+1}\Vert^2_{\theta^{(s)}} \vert \mathcal{F}_s] & \leq O(s^{-(1+\delta)})\times  [C_0 + C_1\underbrace{(\mathcal{L}(\theta^{(s)}) - \mathcal{L}(\theta^*))}_{:=W_s}] \\
        & = \Vert v_s\Vert^2_{\theta^{(s-1)}}+O(s^{-(1+\delta)})-\Vert v_s\Vert^2_{\theta^{(s-1)}}.
    \end{align}
    Since $\theta^{(s)}\rightarrow \theta^*$ we have $W_s\rightarrow 0$, and the claim follows by the Robbins-Siegmund theorem.
    % where $\delta = 2\alpha - 2\beta -1>0$. The second inequality follows from convexity; the last inequality employs assumption \ref{assumption:g-2nd-moment}, $\tau_s=O(s^{-\alpha})$, and assumption \ref{assumption:g-2nd-moment} for the expectation. Since $\theta^{(k)}\rightarrow \theta^*$ we have $W_s\rightarrow 0$, and the claim follows by e.g., the Robbins-Siegmund theorem.
    
    \subsubsection{Claim: $\Vert M_s\Vert^2_{\theta^{(s-1)}}\rightarrow 0$ almost surely.} \label{lemma:eigenvalue-lower-claim2}
    For linear maps $A,B: T_\theta\mathcal{M}\rightarrow T_\theta\mathcal{M}$, let $\langle A, B\rangle_\theta$ be the Hilbert-Schmidt inner product. Let
    \begin{equation}
        \tilde{\mathcal{F}}_s = \sigma (y_{k,i}, \bar{y}_{k'} : k \leq s, k' \leq s, i \leq B).
    \end{equation}
    Note that $\tilde{\mathcal{F}}_k$ slightly differs from $\mathcal{F}_k$ by excluding $s' = k+1$. Consider the recursion:
    \begin{equation} \label{eqn:thm4-ms1}
        M_{s+1} = \frac{s}{s+1} \mathcal{T}^{-1}_{s,s-1} \circ M_s \circ \mathcal{T}^{-*}_{s,s-1} + \frac{1}{s+1}\Theta_{s,s}.
    \end{equation}
    From the definition of $\Theta_{s,s}$, we have $\mathbb{E}[\Theta_{s,s} \vert \tilde{\mathcal{F}}_s] = \mathbb{E}[\phi_{s,s} \tilde{\phi}_{s,s}^\intercal - I_F(\theta^{(s)}) \vert \tilde{\mathcal{F}_s}] = 0$. Hence
    \begin{equation}
        \E[ \Vert M_{s+1}\Vert^2_{\theta^{(s)}} \vert \tilde{\mathcal{F}}_s] = \frac{s^2}{(s+1)^2} \Vert \mathcal{T}^{-1}_{s,s-1} \circ M_s \circ \mathcal{T}^{-*}_{s,s-1}\Vert^2_{\theta^{(s)}} +\frac{1}{(s+1)^2} \mathbb{E}[ \Vert \Theta_{s,s} \Vert^2_{\theta^{(s)}} \vert \tilde{\mathcal{F}_s}].
    \end{equation}
    For the last term, by convexity and Jensen's inequality
    \begin{equation}
        \mathbb{E}[\Vert \Theta_{s,s} \Vert^2_{\theta^{(s)}} \vert \tilde{\mathcal{F}_s}]  \leq 2 \mathbb{E}[\Vert \phi_{s,s} \tilde{\phi}_{s,s}\Vert^2_{\theta^{(s)}}\vert \tilde{\mathcal{F}}_s] + 2 \Vert I_F(\theta^{(s)})\Vert^2_{\theta^{(s)}} \leq 4 \mathbb{E}[\Vert \phi_{s,s} \tilde{\phi}_{s,s}\Vert^2_{\theta^{(s)}}\vert \tilde{\mathcal{F}}_s].
    \end{equation}
    Finally, since $\phi_{s,s}\tilde{\phi}_{s,s}^\intercal$ is self-adjoint, and applying condition (4) in \Cref{thm:global-convergence}
    \begin{equation}
        4 \mathbb{E}[\Vert \phi_{s,s} \tilde{\phi}_{s,s}\Vert^2_{\theta^{(s)}}\vert \tilde{\mathcal{F}}_s]=4 \mathbb{E}[\Vert \phi_{s,s} \tilde{\phi}_{s,s}\Vert^2_{\text{op}}\vert \tilde{\mathcal{F}}_s] = 4 \mathbb{E}[\Vert \phi_{s,s} \Vert^4_{\theta^{(s)}}\vert \tilde{\mathcal{F}}_s]\leq 4 (C_0' + C_1' W_s^2).
    \end{equation}
    % \begin{align}
    %     \mathbb{E}[\Vert \Theta_{s,s} \Vert^2_{\theta^{(s)}} \vert \tilde{\mathcal{F}_s}] & \leq 2 \mathbb{E}[\Vert \phi_{s,s} \tilde{\phi}_{s,s}\Vert^2_{\theta^{(s)}}\vert \tilde{\mathcal{F}}_s] + 2 \Vert I_F(\theta^{(s)})\Vert^2_{\theta^{(s)}} \\
    %     & \leq 4 \mathbb{E}[\Vert \phi_{s,s} \tilde{\phi}_{s,s}\Vert^2_{\theta^{(s)}}\vert \tilde{\mathcal{F}}_s] = 4 \mathbb{E}[\Vert \phi_{s,s} \tilde{\phi}_{s,s}\Vert^2_{\text{op}}\vert \tilde{\mathcal{F}}_s] = 4 \mathbb{E}[\Vert \phi_{s,s} \Vert^4_{\theta^{(s)}}\vert \tilde{\mathcal{F}}_s] \\ 
    %     & = 4 \mathbb{E}[\Vert \nabla \log{q_{\theta^{(s)}}(\bar{y}_{s+1})} \Vert^4_{\theta^{(s)}}\vert \tilde{\mathcal{F}}_s] \leq 4 (C_0' + C_1' (\mathcal{L}(\theta^{(s)}) - \mathcal{L}(\theta^*))^2),
    % \end{align}
    For the first term in \eqref{eqn:thm4-ms1}, using standard properties of the HS norm, one can show that
    \begin{equation}
        \Vert \mathcal{T}^{-1}_{s,s-1} \circ M_s \circ \mathcal{T}^{-*}_{s,s-1}\Vert_{\theta^{(s)}} \leq \Vert \mathcal{T}^{-1}_{s,s-1}\Vert^2_{\text{op}} \Vert M_s\Vert_{\theta^{(s-1)}}.
    \end{equation}
   Since the iterates are eventually in a compact neighborhood of $\theta^*$, from \Cref{lemma:vector-transport-bounds} we have $\Vert\mathcal{T}^{-1}_{s,s-1}\Vert_{\text{op}} = 1 + O(\Vert v_{s}\Vert^2_{\theta^{(s-1)}})$ almost-surely.
    Hence,
    % \begin{equation}
    %     \Vert\mathcal{T}^{-1}_{s,s-1}\Vert_{\text{op}} = 1 + O(\Vert v_{s}\Vert^2_{\theta^{(s-1)}}), \qquad \text{a.s.}
    % \end{equation}
    % Then bringing everything together
    \begin{equation} \label{eq:ms1-rs-inequality}
        \E[ \Vert M_{s+1}\Vert^2_{\theta^{(s)}} \vert \tilde{\mathcal{F}}_s] \leq \frac{s^2}{(s+1)^2} \cdot \left(1 + O(\Vert v_{s}\Vert^2_{\theta^{(s-1)}})\right) \cdot \Vert M_s\Vert^2_{\theta^{(s-1)}} + \frac{4(C_0'+C_1'W_s^2)}{(s+1)^2}.
    \end{equation}
    To show that $\Vert M_{s+1}\Vert^2_{\theta^{(s)}} \rightarrow 0$ almost surely, we start by defining
    \begin{equation}
        V_{s+1} = \frac{s+1}{\log{(s+1)}^{1+\delta}} \Vert M_{s+1}\Vert^2_{\theta^{(s)}}, \qquad \delta >0.
    \end{equation}
    Then, combining the above with \eqref{eq:ms1-rs-inequality}, we have
    \begin{equation}
        \mathbb{E}[V_{s+1} \vert \tilde{\mathcal{F}}_s] \leq \underbrace{\left(\frac{s \log{s}^{1+\delta}}{(s+1)\log{(s+1)}^{1+\delta}} \right)}_{\leq 1}\left[(1 + O(\Vert v_s\Vert^2_{\theta^{(s-1)}}))V_s + \frac{4(C_0'+C_1'W_s^2)}{s\log{s}^{1+\delta}} \right].
    \end{equation}
    By the Robbins-Siegmund theorem, $V_s = O(1)$ and thus $\Vert M_s\Vert^2_{\theta^{(s-1)}} = O\left( \log{s^{1+\delta}}/s\right)$ a.s.

    \subsubsection{Claim: composite transport behaves like an isometry for large $k,s$.} \label{lemma:eigenvalue-lower-claim3}
    
    Let $\Gamma^{\mathcal{R}}_{s,s-1}$ denote parallel transport along $ \gamma(t) = \mathcal{R}_{\theta^{(s)}}(t \mathcal{R}^{-1}_{\theta^{(s)}}(\theta^{(s-1)}))$ from $t=0$ to $1$. 
    Write
    \begin{equation} \label{eq:expanded-inverse-transport}
        \mathcal{T}^{-1}_{[s,k]} = \Gamma^{\mathcal{R},-1}_{s,s-1}(\Gamma^{\mathcal{R}}_{s,s-1} \mathcal{T}^{-1}_{s,s-1})\ \circ\ ...\ \circ\ \Gamma^{\mathcal{R},-1}_{k+1,k}(\Gamma^{\mathcal{R}}_{k+1,k} \mathcal{T}^{-1}_{k+1,k}) .
    \end{equation}
    From \Cref{lemma:vector-transport-bounds}, there exist $c,N>0$ a.s. so that for $s> k > N$ and $\forall w \in T_{\theta^{(k)}}\mathcal{M}$:
    \begin{equation}
        \underbrace{\prod^{s-1}_{j=k}(1-c\Vert v_{j+1}\Vert^2_{\theta^{(j)}})\Vert}_{:=V_{k,s-1}^-} w\Vert_{\theta^{(k)}} \leq \Vert \mathcal{T}^{-1}_{[s,k]}[w]\Vert_{\theta^{(s)}} \leq \underbrace{\prod^{s-1}_{j=k}(1+c\Vert v_{j+1}\Vert^2_{\theta^{(j)}})}_{:=V_{k,s-1}^+}\Vert w\Vert_{\theta^{(k)}}.
    \end{equation}
    % To control $V_{k,s-1}^-$ and $V_{k,s-1}^+$, we note that, for a fixed sign sequence $a_k\ne-1$,
    To control $V_{k,s-1}^-$ and $V_{k,s-1}^+$, we note that, for a sequence $a_k\ne-1$,
    % \begin{equation}
    %     \sum^\infty_{k=0} a_k < \infty \Longrightarrow \prod^k_{j=0}(1+a_j) \underset{k\rightarrow \infty}{\rightarrow} a \ne 0.
    % \end{equation}
        \begin{equation}
        \sum^\infty_{k=0} \vert a_k\vert  < \infty \Longrightarrow \prod^k_{j=0}(1+a_j) \underset{k\rightarrow \infty}{\rightarrow} a \ne 0.
    \end{equation}
    Consequently,
    \begin{equation}
        \lim_{s\rightarrow \infty}V^-_{k,s} = V^-_{k,\infty}\in \mathbb{R}, \qquad \lim_{s\rightarrow \infty} V^+_{k,s} = V^+_{k,\infty} \in \mathbb{R}, \qquad \text{a.s.},
    \end{equation}
    where $V_{k,\infty}^- , V_{k,\infty}^+ \rightarrow 1$ as $k\rightarrow \infty$. Therefore, for all $w \in T_{\theta^*}\mathcal{M}$
    \begin{equation} \label{eq:composite-transport-almost-isometric}
        \lim_{k\rightarrow \infty}\sup_{s \geq k} \Vert \mathcal{T}^{-1}_{[s,k]} \circ \Gamma_{*,k}[w]\Vert_{\theta^{(s)}} = \lim_{k\rightarrow \infty}\inf_{s \geq k} \Vert \mathcal{T}^{-1}_{[s,k]} \circ \Gamma_{*,k}[w]\Vert_{\theta^{(s)}} = \Vert w\Vert_{\theta^*}.
    \end{equation}
    Clearly, a similar result holds for $\mathcal{T}^{-*}_{[s,k]}$ since it has the same singular values as $\mathcal{T}^{-1}_{[s,k]}$.
    % Note, whether $\Gamma_{s,*} \circ \mathcal{T}^{-1}_{[s,k]} \circ \Gamma_{*,k}$ converges as $s\rightarrow \infty$ is presently uncertain.

    \subsubsection{Eigenvalue bounds for $R_s$.} \label{lemma:eigenvalue-lower-claim4}
     Define
    \begin{equation}
        \tilde{R}_{s+1} = \Gamma_{s,*} \circ R_{s+1} \circ \Gamma_{*,s} = \frac{1}{s+1} \sum^s_{k=0} \Gamma_{s,*} \circ \mathcal{T}^{-1}_{[s,k]} \circ I_F(\theta^{(k)}) \circ \mathcal{T}^{-*}_{[s,k]} \circ \Gamma_{*,s}.
    \end{equation}
    For any $w \in T_{\theta^*} \mathcal{M}$, we have:
    \begin{equation}
        \langle w, \tilde{R}_{s+1}w\rangle_{\theta^*} = \frac{1}{s+1}\sum^s_{k=0}\langle\mathcal{T}^{-*}_{[s,k]}\Gamma_{*,s}w, I_F(\theta^{(k)}) \mathcal{T}^{-*}_{[s,k]}\Gamma_{*,s}w \rangle_{\theta^{(k)}}.
    \end{equation}
    Since $I_F(\theta^{(k)}) \rightarrow I_F(\theta^*)$ where $\lambda_{\min}(I_F(\theta^*)) := \lambda_{\min} >0$, for large $s,k$ we have
    \begin{equation}
        \langle\mathcal{T}^{-*}_{[s,k]}\Gamma_{*,s}w, I_F(\theta^{(k)}) \mathcal{T}^{-*}_{[s,k]}\Gamma_{*,s}w \rangle_{\theta^{(k)}} \geq \lambda_{\min} \Vert \mathcal{T}^{-*}_{[s,k]} \Gamma_{*,s}w\Vert^2_{\theta^{(k)}} > \frac{1}{2}\lambda_{\min} \Vert w\Vert^2_{\theta^*}, \qquad \text{a.s.}
    \end{equation}
    Hence, the eigenvalues of $\mathbf{H}_s$ are bounded below a.s. 
    The upper bound follows similarly.
\end{proof}

\newpage
\subsection{Proof of \Cref{thm:global-convergence}} \label{appx:proof-global-convergence}

The proof is similar to Theorem 5.1 in \citet{godichon2024natural}. However, demonstrating that $\lambda_{max}(\mathbf{H}_{s+1})$ is bounded above almost surely is more involved.

\begin{proof}[Proof of \Cref{thm:global-convergence}]

By the $L_0$-smoothness of $\mathcal{L}$ with respect to the retraction
\begin{equation}
    \mathcal{L}(\theta^{(s+1)}) \leq \mathcal{L}(\theta^{(s)})+\langle\nabla \mathcal{L}(\theta^{(s)}), \mathcal{R}^{-1}_{\theta^{(s)}}(\theta^{(s+1)}) \rangle_{\theta^{(s)}}+\frac{L_0}{2}\Vert \mathcal{R}^{-1}_{\theta^{(s)}}(\theta^{(s+1)})\Vert^2_{\theta^{(s)}}.
\end{equation}
From \eqref{eq:approx-ngd-update} we have $\mathcal{R}^{-1}_{\theta^{(s)}}(\theta^{(s+1)})=  -\tau_{s+1}\mathbf{H}_{s+1}^{-1} \tfrac{1}{B}\sum^{B}_{i=1}g(y_{s+1,i},\theta^{(s)})$. Substituting
    \begin{align}
        \mathcal{L}(\theta^{(s+1)}) & \leq \mathcal{L}(\theta^{(s)}) -\frac{\tau_{s+1}}{B}\sum^{B}_{i=1}\langle\nabla\mathcal{L}(\theta^{(s)}),\mathbf{H}_{s+1}^{-1}g(y_{s+1,i},\theta^{(s)}) \rangle_{\theta^{(s)}} \nonumber \\
        & \hspace{150pt} +\frac{L_0}{2}\left\Vert \frac{\tau_{s+1}}{B}\sum^{B}_{i=1} \mathbf{H}_{s+1}^{-1}g(y_{s+1,i},\theta^{(s)})\right\Vert^2_{\theta^{(s)}} \\
        & \leq \mathcal{L}(\theta^{(s)}) -\frac{\tau_{s+1}}{B}\sum^{B}_{i=1}\langle\nabla\mathcal{L}(\theta^{(s)}),\mathbf{H}_{s+1}^{-1}g(y_{s+1,i},\theta^{(s)}) \rangle_{\theta^{(s)}} \nonumber \\
        & \hspace{150pt} + \frac{L_0 \tau_{s+1}^2}{2B}\Vert \mathbf{H}^{-1}_{s+1}\Vert^2_{\text{op}}\sum^{B}_{i=1}\Vert g(y_{s+1,i},\theta^{(s)}) \Vert^2_{\theta^{(s)}}.
    \end{align}
    Let $W_s = \mathcal{L}(\theta^{(s)}) - \mathcal{L}(\theta^*)$, and define $\mathcal{F}_s$ as in \Cref{lemma:eigenvalue-lower-claim1} of \Cref{lemma:eigenvalue-lower-bound}. We have
    \begin{align}
        \mathbb{E}[W_{s+1}\vert\mathcal{F}_{s}] & \leq W_{s} -\frac{\tau_{s+1}}{B}\sum^{B}_{i=1}\langle\nabla\mathcal{L}(\theta^{(s)}),\mathbf{H}_{s+1}^{-1} \nabla\mathcal{L}(\theta^{(s)}) \rangle_{\theta^{(s)}} \nonumber \\
        & \hspace{50pt} + \frac{L_0 \tau_{s+1}^2}{2B}\Vert \mathbf{H}^{-1}_{s+1}\Vert^2_{\text{op}}\sum^{B}_{i=1} \mathbb{E} [\Vert g(y_{s+1,i},\theta^{(s)}) \Vert^2_{\theta^{(s)}}\vert \mathcal{F}_s].
    \end{align}
In the first line, we used $\mathbb{E}_{Y\sim p_\theta}[g(Y,\theta)] = \nabla \mathcal{L}(\theta)$. From condition (3) in the theorem,
    \begin{align}
        \mathbb{E}[W_{s+1}\vert\mathcal{F}_{s}] & \leq W_{s} -\tau_{s+1}\langle\nabla\mathcal{L}(\theta^{(s)}),\mathbf{H}_{s+1}^{-1} \nabla \mathcal{L}(\theta^{(s)}) \rangle_{\theta^{(s)}} \nonumber \\
        & \hspace{100pt} + \frac{L_0 \tau_{s+1}^2}{2}\Vert \mathbf{H}^{-1}_{s+1}\Vert^2_{\text{op}}(C_0 + C_1W_s).
    \end{align}
    Taking $\mathbf{H}^{-1}_{s+1}$ outside of the inner product and rearranging terms gives
    \begin{align}
        \mathbb{E}[W_{s+1}\vert\mathcal{F}_{s}] & \leq (1 +\frac{1}{2} C_1 L_0 \tau_{s+1}^2 \Vert \mathbf{H}^{-1}_{s+1}\Vert^2_{\text{op}})W_{s} -\tau_{s+1}\lambda_{\text{min}}(\mathbf{H}_{s+1}^{-1})\Vert \nabla\mathcal{L}(\theta^{(s)})\Vert^2_{\theta^{(s)}} \nonumber \\
        & \hspace{230pt} + \frac{1}{2}C_0L_0 \tau_{s+1}^2\Vert \mathbf{H}^{-1}_{s+1}\Vert^2_{\text{op}}.
    \end{align}
    Since $\mathbf{H}_{s+1}^{-1}$ is self-adjoint $\Vert \mathbf{H}^{-1}_s\Vert_{\text{op}} = \lambda_{\max}(\mathbf{H}^{-1}_s) = O(s^\beta)$ a.s., where the last equality is by \Cref{assumption:eigenvalues}. Then, since $\tau_s \propto (c_\alpha'+s)^{-\alpha}$ where $\beta < \alpha-1/2$, we have $2\alpha -2\beta >1$ and
    \begin{equation}
        \sum_s \tau_{s+1}^2 \Vert \mathbf{H}_{s+1}^{-1}\Vert^2_{\text{op}} = O\big( \sum_s s^{2\beta -2\alpha}\big)< + \infty, \qquad \text{a.s.}
    \end{equation}
    By the Robbins-Siegmund theorem $W_s$ converges a.s. to a finite RV $W_\infty$. Hence, we have the first part of \Cref{thm:global-convergence}. For the second assertion, from the Robbins-Siegmund theorem
    \begin{equation} \label{eq:bounded-gradient-sum}
        \sum_s \tau_{s+1} \lambda_{\min}( \mathbf{H}^{-1}_{s+1}) \Vert \nabla \mathcal{L}(\theta^{(s)})\Vert^2_{\theta^{(s)}} < \infty, \qquad \text{a.s.}
    \end{equation}
    Thus, if we can show $\lambda_{\max}(\mathbf{H}_{s+1})$ is bounded above a.s., then $\sum_s \tau_{s+1} \Vert \nabla \mathcal{L}(\theta^{(s)})\Vert^2_{\theta^{(s)}} < \infty$, which implies that $\min^s_{k=0}\Vert \nabla \mathcal{L}(\theta^{(k)})\Vert^2_{\theta^{(k)}} = o(s^{-(1-\alpha)})$; see Lemma 2 in \citet{liu2022almost}.

\subsubsection{Claim: $\lambda_{\max}(\mathbf{H}_{k+1})$ is bounded above almost surely.}
We re-use reasoning from \Cref{lemma:eigenvalue-lower-claim1,lemma:eigenvalue-lower-claim2,lemma:eigenvalue-lower-claim3}. For these claims it suffices to assume the conditions\footnote{In particular, \Cref{thm:global-convergence} assumes that the iterates are restricted to a compact domain explicitly, so the applications of \Cref{lemma:vector-transport-bounds} remain valid. This is the only reason for assuming compactness.} of \Cref{thm:global-convergence}, and the previous result on the convergence of $W_s$.

Recall $\mathbf{H}_{s+1} = R_{s+1}+ M_{s+1}$, where $\Vert M_{s+1}\Vert_{\theta^{(s)}} \rightarrow 0$ a.s., and
\begin{equation}
    R_{s+1} = \frac{1}{s+1}\sum^s_{k=0} \Phi_{k,s}, \qquad \Phi_{k,s} = \mathcal{T}^{-1}_{[s,k]} \circ I_F(\theta^{(k)}) \circ \mathcal{T}^{-*}_{[s,k]} .
\end{equation}
Using standard properties of the HS norm, and \Cref{lemma:vector-transport-bounds}
\begin{equation}
    \Vert\Phi_{k,s} \Vert^2_{\theta^{(s)}} \leq \Vert \mathcal{T}^{-1}_{[s,k]}\Vert^4_{\text{op}} \Vert I_F(\theta^{(k)})\Vert^2_{\theta^{(k)}} \leq \prod^{s-1}_{j=k}(1 + O (\Vert v_{j+1}\Vert^2_{\theta^{(j)}}))\times \Vert I_F(\theta^{(k)})\Vert^2_{\theta^{(k)}} .
\end{equation}
From Jensen's inequality and condition (4) in the theorem
\begin{equation}
    \Vert I_F(\theta^{(s)})\Vert^2_{\theta^{(s)}}\leq  \mathbb{E}[\Vert \nabla \log{q_{\theta^{(s)}}(\bar{y}_{s+1})} \Vert^4_{\theta^{(s)}}\vert \tilde{\mathcal{F}}_s] \leq C_0' + C_1' W_s^2 .
\end{equation}
The summability of the $\Vert v_{s+1} \Vert^2_{\theta^{(s)}}$ implies that the product term is almost surely bounded above by a finite random variable. Since this holds for every term comprising the summation in $R_{s+1}$, we conclude that $\Vert R_{s+1}\Vert^2_{\theta^{(s)}}$ is also bounded above almost surely. Thus
\begin{equation}
    \limsup_{s \rightarrow \infty} \Vert \mathbf{H}_{s+1}\Vert^2_{\theta^{(s)}} < \infty, \qquad \text{a.s.}
\end{equation}
The same holds for $\lambda_{\max}( \mathbf{H}_{s+1})$ since the HS norm upper-bounds the operator norm.
\end{proof}

\subsubsection{Remark: convergence to the set of minimizers.}

Suppose that $\mathcal{L}$ is geodesically\footnote{The argument can be reproduced with minor changes if $\mathcal{L}$ is retraction-convex w.r.t. some retraction.} convex, then we have
\begin{equation}
    W_s = \mathcal{L}(\theta^{(s)}) - \mathcal{L}(\theta^*) \leq -\langle \nabla \mathcal{L}(\theta^{(s)}), \exp^{-1}_{\theta^{(s)}}(\theta^*)\rangle_{\theta^{(s)}} \leq \Vert \nabla \mathcal{L}(\theta^{(s)})\Vert_{\theta^{(s)}} d(\theta^{(s)},\theta^*).
\end{equation}
Provided $\mathcal{L}$ has bounded level sets, then the first conclusion of the theorem implies that $d(\theta^{(s)}, \theta^*)$ is uniformly bounded almost surely. Consequently \eqref{eq:bounded-gradient-sum} implies that
\begin{equation}
    \sum_s\tau_s W_s^2 \leq \sum_s \tau_s \Vert \nabla \mathcal{L}(\theta^{(s)})\Vert^2 d(\theta^{(s)},\theta^*)^2<\infty.
\end{equation}
Since $W_s$ converges a.s. to a finite RV and $\sum_s \tau_s = \infty$, $W_s \rightarrow 0$. Therefore, $\theta^{(s)}$ converges a.s. to the set of minimizers. If $\mathcal{L}$ is \textit{strictly} convex along geodesics, the minimizer is unique.
% If $\mathcal{L}$ does not have bounded level sets, then compactness of $\mathcal{X}$ also ensures that $W_s \rightarrow 0$. 

% Otherwise, we say $\mathcal{L}$ satisfies a Polyak-Lojasiewicz (PL) inequality with parameter $\mu$ if
% \begin{equation}
%     \tfrac{1}{2}\Vert \nabla \mathcal{L}(\theta^{(s)})\Vert^2_{\theta^{(s)}} \geq \mu W_s,
% \end{equation}
% which also implies $W_s \rightarrow 0$ a.s. This holds if e.g., $\mathcal{L}$ is $\mu$-strongly convex along geodesics.

\newpage
\subsection{Proof of Theorem \ref{thm:convergence-rate}} \label{appx:convergence-rate-proof}

The proof is similar to Theorem 5.2 in \citet{godichon2024natural}. The key idea is to frame our analysis in the tangent space of the limit point $T_{\theta^*}\mathcal{M}$, and work with
\begin{equation}
    \Delta_{k} := \mathcal{R}^{-1}_{\theta^*}(\theta^{(k)}) \in T_{\theta^*}\mathcal{M}.
\end{equation}
The new challenges include managing the error in $\mathbf{H}_{s+1}^{-1}$ arising from vector transportation, and not assuming that $\mathbf{H}_{s+1}^{-1}$ converges to $I_F(\theta^*)$, since we require Theorem \ref{thm:convergence-rate} to prove this.

\subsubsection{Defining the recursion for $\Delta_{k+1}$}

Define the function
\begin{equation}
    F_{\theta} = \mathcal{R}^{-1}_{\theta^*} \circ \mathcal{R}_\theta : T_\theta \mathcal{M} \rightarrow T_{\theta^*} \mathcal{M}, \qquad F_k := F_{\theta^{(k)}}.
\end{equation}
The linearized iterates can be expressed as
\begin{equation}
    \Delta_{k+1} = \mathcal{R}^{-1}_{\theta^*} (\theta^{(k+1)}) = \mathcal{R}^{-1}_{\theta^*} \circ \mathcal{R}_{\theta^{(k)}}(v_{k+1}) = F_k(v_{k+1}).
\end{equation}
From Lemma 4 in \citet{tripuraneni2018averaging}, we can expand $F_k$ via a Taylor expansion
\begin{equation} 
    F_{k}(v_{k+1}) = F_k(0) + DF_k(0)[v_{k+1}] + D^2F_k(t_* v_{k+1})[v_{k+1}, v_{k+1}], \qquad t_*\in[0,1].
\end{equation}
We further have $F_k(0)=\Delta_k$, and it can be shown that
\begin{equation}
    DF_k(0) = [D\mathcal{R}_{\theta^*}(\mathcal{R}^{-1}_{\theta^*}(\theta^{(k)}))]^{-1} = \mathcal{T}^{-1}_{\theta^*, \theta^{(k)}} := \mathcal{T}^{-1}_{*,k}.
\end{equation}
For the Hessian term, since $\mathcal{R}$ is smooth and $(\theta_k, v_{k+1}) \rightarrow (\theta^*, 0_{\theta^*})$ almost surely
\begin{equation}
     \Vert D^2F_k(t_* v_{k+1})[v_{k+1},v_{k+1}]\Vert_{\theta^*} = O(\Vert v_{k+1}\Vert^2_{\theta^{(k)}}), \qquad \text{a.s.}
\end{equation}
% \textcolor{red}{For the Hessian term, since $DF_{\theta^*}(0)=\text{Id}_{\theta^*}$, we should be able to argue that for $\theta\in \mathcal{M}$ within some small radius of $\theta^*$, we have $\Vert D^2F_{\theta}(u)\Vert_{\text{op}} =O(d(\theta^*,\theta))$ for all sufficiently small $u \in T_\theta \mathcal{M}$.} Then, since $(\theta_k, v_{k+1}) \rightarrow (\theta^*, 0_{\theta^*})$ almost surely 
% \begin{equation}
%     \Vert DF_k^2(t_* v_{k+1})[v_{k+1},v_{k+1}]\Vert_{\theta^{(k)}} = O(\Vert\Delta_k\Vert_{\theta^*}\Vert v_{k+1}\Vert^2_{\theta^{(k)}}), \qquad \text{a.s.}
% \end{equation}
Therefore, we have the following recursive expression
\begin{equation}
    \Delta_{k+1} = \Delta_k + \mathcal{T}^{-1}_{*,k}[v_{k+1}] + O(\Vert v_{k+1}\Vert^2_{\theta^{(k)}}).
\end{equation}
% \begin{equation}
%     \Delta_{k+1} = \Delta_k + \mathcal{T}^{-1}_{*,k}[v_{k+1}] + O(\Vert\Delta_k\Vert_{\theta^*}\Vert v_{k+1}\Vert^2_{\theta^{(k)}})
% \end{equation}
Consider an expanded version of this recursion, where $\Gamma_{x,y}$ denotes geodesic PT,
\begin{align}
    \Delta_{k+1} & = \Delta_k - \tau_{k+1} \mathcal{T}^{-1}_{*,k}\big[ \mathbf{H}^{-1}_{k+1} \widehat{\nabla \mathcal{L}}(\theta^{(k)})\big] + O( \Vert v_{k+1}\Vert^2_{\theta^{(k)}}) \\
    & = \Delta_k - \tau_{k+1} \mathcal{T}^{-1}_{*,k} \Gamma_{*,k} \ \circ \underbrace{\Gamma^{-1}_{*,k} \mathbf{H}^{-1}_{k+1} \Gamma_{*,k}}_{:=\tilde{\mathbf{H}}^{-1}_{k+1}} \circ \underbrace{\Gamma^{-1}_{*,k} \widehat{\nabla \mathcal{L}}(\theta^{(k)})}_{:=\hat{g}_k} + O(\Vert v_{k+1}\Vert^2_{\theta^{(k)}}).
\end{align}
% Consider an expanded version of this recursion, where $\Gamma_{x,y}$ denotes geodesic parallel transport
% \begin{align}
%     \Delta_{k+1} & = \Delta_k - \tau_{k+1} \mathcal{T}^{-1}_{*,k}\big[ \mathbf{H}^{-1}_{k+1} \widehat{\nabla \mathcal{L}}(\theta^{(k)})\big] + O(\Vert\Delta_k\Vert_{\theta^*} \Vert v_{k+1}\Vert^2_{\theta^{(k)}}) \\
%     & = \Delta_k - \tau_{k+1} \mathcal{T}^{-1}_{*,k} \Gamma_{*,k} \ \circ \underbrace{\Gamma^{-1}_{*,k} \mathbf{H}^{-1}_{k+1} \Gamma_{*,k}}_{:=\tilde{\mathbf{H}}^{-1}_{k+1}} \circ \underbrace{\Gamma^{-1}_{*,k} \widehat{\nabla \mathcal{L}}(\theta^{(k)})}_{:=\hat{g}_k} + O(\Vert\Delta_k\Vert_{\theta^*}\Vert v_{k+1}\Vert^2_{\theta^{(k)}})
% \end{align}
From \Cref{lemma:transport-pt} we have $\mathcal{T}^{-1}_{*,k} \Gamma_{*,k} = \text{Id}_{\theta^*} + O(\Vert \Delta_k\Vert^2_{\theta^{*}})$ almost surely, hence
\begin{equation}
    \Delta_{k+1} = \Delta_k - \tau_{k+1} \tilde{\mathbf{H}}_{k+1}^{-1} \hat{g}_{k} + \underbrace{O(\Vert v_{k+1}\Vert^2_{\theta^{(k)}} + \Vert \Delta_k \Vert^2_{\theta^{*}}\Vert v_{k+1}\Vert_{\theta^{(k)}})}_{:=\zeta_{k+1}}.
\end{equation}
% \begin{equation}
%     \Delta_{k+1} = \Delta_k - \tau_{k+1} \tilde{\mathbf{H}}_{k+1}^{-1} \hat{g}_{k} + \underbrace{O(\Vert\Delta_k\Vert_{\theta^*}\Vert v_{k+1}\Vert^2_{\theta^{(k)}} + \Vert \Delta_k \Vert^2_{\theta^{*}}\Vert v_{k+1}\Vert_{\theta^{(k)}})}_{:=\zeta_{k+1}}
% \end{equation}
Some further manipulations yield
\begin{align}
    \Delta_{k+1} & = \Delta_k - \tau_{k+1} \tilde{\mathbf{H}}_{k+1}^{-1} \circ \underbrace{\Gamma^{-1}_{*,k} \nabla \mathcal{L}(\theta^{(k)})}_{:=g_k} + \tau_{k+1} \tilde{\mathbf{H}}_{k+1}^{-1}\underbrace{(g_k - \hat{g}_{k})}_{:=\xi_{k+1}} + \zeta_{k+1} \\
    & = \underbrace{(\text{Id} - \tau_{k+1} \tilde{\mathbf{H}}_{k+1}^{-1} \nabla^2 \mathcal{L}(\theta^*))}_{:=J_{k+1}}\Delta_k - \tau_{k+1} \tilde{\mathbf{H}}_{k+1}^{-1}\underbrace{(g_k - \nabla^2 \mathcal{L}(\theta^*)\Delta_k)}_{:=\delta_{k}} \nonumber \\
    & \hspace{230pt} + \tau_{k+1} \tilde{\mathbf{H}}_{k+1}^{-1}\xi_{k+1} + \zeta_{k+1}.
\end{align}
To summarize, we have shown that
\begin{equation} \label{eq:delta-recursion}
    \Delta_{k+1} = J_{k+1} \Delta_k - \tau_{k+1} \tilde{\mathbf{H}}_{k+1}^{-1} \delta_k + \tau_{k+1} \tilde{\mathbf{H}}_{k+1}^{-1}\xi_{k+1} + \zeta_{k+1},
\end{equation}
where
\begin{equation}
    g_k = \Gamma^{-1}_{*,k} \nabla \mathcal{L}(\theta^{(k)}), \qquad \hat{g}_k = \Gamma^{-1}_{*,k} \widehat{\nabla \mathcal{L}}(\theta^{(k)}), \qquad \tilde{\mathbf{H}}^{-1}_{k+1} = \Gamma^{-1}_{*,k} \mathbf{H}^{-1}_{k+1} \Gamma_{*,k},
\end{equation}
\begin{equation}
    \xi_{k+1} = g_k - \hat{g}_k, \qquad \delta_k = g_k - \nabla^2 \mathcal{L}(\theta^*) \Delta_k, \qquad J_{k+1} = (\text{Id} - \tau_{k+1} \tilde{\mathbf{H}}^{-1}_{k+1} \nabla^2 \mathcal{L}(\theta^*)),
\end{equation}
\begin{equation}
    \zeta_{k+1}=O(\Vert v_{k+1}\Vert^2_{\theta^{(k)}} + \Vert \Delta_k \Vert^2_{\theta^{*}}\Vert v_{k+1}\Vert_{\theta^{(k)}}).
\end{equation}
% \begin{equation}
%     \zeta_{k+1}=O(\Vert\Delta_k\Vert_{\theta^*}\Vert v_{k+1}\Vert^2_{\theta^{(k)}} + \Vert \Delta_k \Vert^2_{\theta^{*}}\Vert v_{k+1}\Vert_{\theta^{(k)}}).
% \end{equation}
Expanding \eqref{eq:delta-recursion} recursively yields
\begin{align}
    \Delta_{s+1} = & \underbrace{\beta_{s+1,0} \Delta_0}_{:=M^1_{s+1}} + \underbrace{\sum^s_{k=0} \beta_{s+1,k+1} \tau_{k+1} \tilde{\mathbf{H}}^{-1}_{k+1}\xi_{k+1}}_{:=M_{s+1}^2} \nonumber \\
    & + \underbrace{\sum^s_{k=0} \beta_{s+1,k+1} \big(\tau_{k+1} \tilde{\mathbf{H}}^{-1}_{k+1}\delta_k + O( \Vert v_{k+1}\Vert_{\theta^{(k)}}^2+\Vert \Delta_k\Vert^2_{\theta^{*}} \Vert v_{k+1}\Vert_{\theta^{(k)}})\big)}_{:=M^3_{s+1}},
\end{align}
% \begin{align}
%     \Delta_{s+1} =&  \underbrace{\beta_{s+1,0} \Delta_0}_{:=M^1_{s+1}} + \underbrace{\sum^s_{k=0} \beta_{s+1,k+1} \tau_{k+1} \tilde{\mathbf{H}}^{-1}_{k+1}\xi_{k+1}}_{:=M_{s+1}^2}\\
%     & + \underbrace{\sum^s_{k=0} \beta_{s+1,k+1} \big(\tau_{k+1} \tilde{\mathbf{H}}^{-1}_{k+1}\delta_k + O(\Vert \Delta_k\Vert_{\theta^{*}} \Vert v_{k+1}\Vert_{\theta^{(k)}}^2+\Vert \Delta_k\Vert^2_{\theta^{*}} \Vert v_{k+1}\Vert_{\theta^{(k)}})\big)}_{:=M^3_{s+1}},
% \end{align}
where $\beta_{s,k} := \prod^s_{i=k+1} J_i$ and $\beta_{s,s} := \text{Id}$.

\subsubsection{Behavior of $\Vert J_{k+1}\Vert_{\text{op}}$} \label{sec:Jk-behavior}

The operators $\tilde{\mathbf{H}}^{-1}_{k+1}$ and $\nabla^2\mathcal{L}(\theta^*)$ are self-adjoint, therefore
\begin{align}
    J_{k+1}^*J_{k+1} & = \text{Id} - \tau_{k+1} \tilde{\mathbf{H}}^{-1}_{k+1} \nabla^2 \mathcal{L}(\theta^*) - \tau_{k+1}\nabla^2 \mathcal{L}(\theta^*) \tilde{\mathbf{H}}^{-1}_{k+1} \nonumber \\ 
    & \hspace{180pt} + \tau_{k+1}^2 \nabla^2 \mathcal{L}(\theta^*)\tilde{\mathbf{H}}^{-2}_{k+1}  \nabla^2 \mathcal{L}(\theta^*).
\end{align}
By assumption $\lambda(\nabla^2\mathcal{L}(\theta^*)) \in (\lambda_{\min}, \lambda_{\max})$ where $\lambda_{\min}>0$. From \Cref{lemma:eigenvalue-lower-bound} we can also assume $\lambda(\mathbf{H}_{k+1}^{-1})\in (\lambda_{\min}, \lambda_{\max})$ almost surely for all $k$ sufficiently large. Consequently
\begin{equation}
    \Vert J_{k+1}\Vert^2_{\text{op}} = \lambda_{\max}(J_{k+1}^* J_{k+1}) \leq 1 - 2 \lambda_{\min}^2 \tau_{k+1} + \tau_{k+1}^2 \lambda_{\max}, \qquad \text{a.s.}
\end{equation}
Therefore, $\exists c,K>0$ such that $\Vert J_{k+1}\Vert_{\text{op}} \leq 1 - c \tau_{k+1}$ for all $k \geq K$ almost surely.

\subsubsection{Convergence rate of $M_s^1$:}
For $c,K$ as above, we have

\begin{equation}
    \Vert M_s^1 \Vert_{\theta^*} \leq \big\Vert \prod^{K-1}_{k=1}J_k \Delta_0\big\Vert_{\theta^*} \times \prod^{s}_{k=K}(1-c\tau_{k+1}), \qquad \text{a.s.}
\end{equation}
The first term is bounded almost surely. For the second term:
\begin{equation}
    \prod^{s}_{k=K}(1-c\tau_{k+1}) = O\big( \exp{\big[-c \sum^{s}_{k=K}\tau_{k+1}\big]}\big) = O(\exp[-cs^{1-\alpha}]).
\end{equation}
Therefore $\Vert M_s^1\Vert_{\theta^*}= O(\exp[-cs^{1-\alpha}])$ almost surely.

\subsubsection{Convergence rate of $M_s^2$:}
Define $c,K$ as before, and split $M_s^2$ into two components
\begin{equation}
    M_{s+1}^2 = \sum^{K-1}_{k=0}\beta_{s+1,k+1}\tau_{k+1}\tilde{\mathbf{H}}^{-1}_{k+1}\xi_{k+1}+\sum^{s-1}_{k=K}\beta_{s+1,k+1}\tau_{k+1} \tilde{\mathbf{H}}^{-1}_{k+1}\xi_{k+1}.
\end{equation}
Let the first term be $M_{s+1}^{2a}$ and second $M_{s+1}^{2b}$. Using similar reasoning as for $\Vert M_{s+1}^1\Vert_{\theta^*}$,
\begin{equation}
    \Vert M^{2a}_{s+1}\Vert_{\theta^*} \leq \prod^s_{k=K} \left\Vert J_k\right\Vert_{\text{op}} \times \underbrace{\Vert M_{K-1}^2\Vert_{\theta^*}}_{=O(1)} = O(\exp[-cs^{1-\alpha}]), \qquad \text{a.s.}
\end{equation}
For $M_{s+1}^{2b}$, we would like to employ Theorem 6.1 in \citet{cenac2020efficient}, to conclude
\begin{equation} \label{eq:m2bs-convergence-rate}
    \Vert M_s^{2b}\Vert_{\theta^*} = O(\sqrt{\log{s}/s^\alpha}), \qquad \text{a.s.}
\end{equation}
However, our $\beta_{n,k}$ take a different form to theirs; they define these as
\begin{equation}
    \beta_{n,k} = \prod^n_{j=k+1}(\text{Id}-\tau_{j+1} \Gamma), \qquad \beta_{n,n}=\text{Id}.
\end{equation}
where $\Gamma$ is a positive, bounded, self-adjoint linear operator. Then for sufficiently large $k$
\begin{equation}
    \Vert \text{Id} - \tau_{k+1}\Gamma\Vert_{\text{op}} \leq (1 - \tau_{k+1} \lambda_{\min}(\Gamma)).
\end{equation}
We remark that the proof of Theorem 6.1 in \citet{cenac2020efficient} uses the structure of $\beta_{n,k}$ only through the bound $\Vert \beta_{n,k}\Vert_{\mathrm{op}} \leq \prod^n_{j=k+1}(1-\lambda_{\min}\tau_{j+1})$. From \Cref{sec:Jk-behavior}, we can see that the same bound is guaranteed for $\Vert \beta_{n,k}\Vert_{\mathrm{op}}$ here almost surely for all $k$ sufficiently large. Consequently, the proof carries over without substantive changes, and we can conclude \eqref{eq:m2bs-convergence-rate}.

\subsubsection{Convergence rate of $M_s^3$:}
For sufficiently large $s$, we have
\begin{align}
    \mathbb{E}[\Vert M_{s+1}^3\Vert_{\theta^*} \vert \mathcal{F}_s] & \leq (1 - c\tau_{s+1}) \Vert M_s^3\Vert_{\theta^*} \\ 
    & + O(\tau_{s+1} \Vert \delta_s\Vert_{\theta^*} + \Vert v_{s+1}\Vert_{\theta^{(s)}}^2+ \Vert \Delta_s\Vert^2_{\theta^*} \Vert v_{s+1}\Vert_{\theta^{(s)}}).
\end{align}
% \begin{align}
%     \mathbb{E}[\Vert M_{s+1}^3\Vert_{\theta^*} \vert \mathcal{F}_s] & \leq (1 - c\tau_{s+1}) \Vert M_s^3\Vert_{\theta^*} \\ 
%     & + O(\tau_{s+1} \Vert \delta_s\Vert +\Vert \Delta_s\Vert_{\theta^*} \Vert v_{s+1}\Vert_{\theta^{(s)}}^2+ \Vert \Delta_s\Vert^2_{\theta^*} \Vert v_{s+1}\Vert_{\theta^{(s)}})
% \end{align}
By Holder's inequality, then using condition (3) in \Cref{thm:global-convergence} and \Cref{lemma:eigenvalue-lower-bound}
\begin{equation}
    \mathbb{E}[\Vert v_{s+1}\Vert_{\theta^{(s)}}\vert \mathcal{F}_s] \leq \mathbb{E}[\Vert v_{s+1}\Vert^2_{\theta^{(s)}}\vert \mathcal{F}_s]^{1/2} =O( \tau_{s+1}(C_0+C_1 W_s)^{1/2}).
\end{equation}
Therefore, since $\Vert \Delta_s\Vert_{\theta^*} = o(1)$ and $\tau_{s}^2 \propto (c_\alpha'+s)^{-2\alpha}$, we have
\begin{equation}
    \mathbb{E}[\Vert M_{s+1}^3\Vert_{\theta^*} \vert \mathcal{F}_s] \leq (1 - c\tau_{s+1}) \Vert M_s^3\Vert_{\theta^*} + O(s^{-2\alpha})+\tau_{s+1}O(\Vert \delta_s \Vert_{\theta^*} + \Vert \Delta_s\Vert_{\theta^*}^2).
\end{equation}
% \begin{equation}
%     \mathbb{E}[\Vert M_{s+1}^3\Vert \vert \mathcal{F}_s] \leq (1 - c\tau_{s+1}) \Vert M_s^3\Vert + o(s^{-2\alpha})+\tau_{s+1}O(\Vert \delta_s \Vert + \Vert \Delta_s\Vert^2)
% \end{equation}
Provided that $\Vert \delta_s\Vert_{\theta^*} = O(\Vert \Delta_s\Vert^2_{\theta^*})$, then the big-O term equals
\begin{equation}
    O(\Vert \delta_s\Vert_{\theta^*}+\Vert \Delta_s\Vert^2_{\theta^*}) = O\big( \frac{\log{s}}{s^\alpha} + \Vert M_s^3 \Vert^2_{\theta^*}\big).
\end{equation}
Since $\Vert \Delta_s \Vert_{\theta^*}, \Vert M_s^i\Vert_{\theta^*} \rightarrow 0$ a.s. for $i=1,2$, we must also have $\Vert M_s^3\Vert_{\theta^*} \rightarrow 0$. Hence

\begin{equation}
     \mathbb{E}[\Vert M_{s+1}^3\Vert_{\theta^*} \mid \mathcal{F}_s] \leq (1 - (c - O(\Vert M_s^3\Vert_{\theta^*}))\tau_{s+1}) \Vert M_s^3 \Vert_{\theta^*} + O\left(\frac{\log{s}}{s^{2\alpha}}\right).
\end{equation}
Therefore, eventually we have for some $\tilde{c}>0$
\begin{equation}
     \mathbb{E}[\Vert M_{s+1}^3\Vert_{\theta^*} \vert \mathcal{F}_s] \leq (1 - \tilde{c}s^{-\alpha}) \Vert M_s^3 \Vert_{\theta^*} + O\left(\frac{\log{s}}{s^{2\alpha}}\right).
\end{equation}
Taking $V_{s+1} =\Vert M_{s+1}^3\Vert_{\theta^*}(s+1)^{2\alpha-1}/\log{(s+1)}^{2+\delta}$, it follows that
\begin{equation}
    \mathbb{E}[V_{s+1} \mid \mathcal{F}_s] \leq \left(1-\frac{\tilde{c}}{s^\alpha}\right) \left(1 + \frac{1}{s}\right)^{2\alpha-1} V_s+ O\left( \frac{1}{s\log{s}^{1+\delta}}\right).
\end{equation}
One can show e.g. via a Taylor series that the coefficient of $V_s$ is $\leq 1$ eventually, hence
\begin{equation}
    \Vert M_{s+1}^3\Vert_{\theta^*} = O\left(\frac{\log{s}^{2+\delta}}{s^{2\alpha-1}}\right), \qquad \text{a.s.}
\end{equation}

\subsubsection{Claim: $\Vert \delta_s \Vert_{\theta^*} = O(\Vert \Delta_s\Vert^2_{\theta^*})$}
Expanding the definition of $\delta_s$
\begin{equation}
    \delta_s = \underbrace{\Gamma^{-1}_{*,s} \nabla \mathcal{L}(\theta^{(s)}) - \nabla^2 \mathcal{L}(\theta^*) \exp^{-1}_{\theta^*}(\theta^{(s)})}_{=(a)} + \underbrace{\nabla^2 \mathcal{L}(\theta^*) [\exp^{-1}_{\theta^*}(\theta^{(s)})-\Delta_s]}_{=(b)}.
\end{equation}
For (a), we can use a manifold version of Taylor's theorem to conclude this is $O(\Vert \Delta_s\Vert^2_{\theta^*})$; see e.g. Lemma 7.4.7 in \citet{absil2009optimization}. For (b), by a Taylor expansion we have
\begin{equation}
    f_x(v):=\exp_x^{-1} \circ \ \mathcal{R}_x(v) = v + O(\Vert v\Vert^2_x).
\end{equation}
Letting $x = \theta^{*}, v=\Delta_s=\mathcal{R}^{-1}_{\theta^*}(\theta^{(s)})$, it follows that
\begin{equation}
    f_x(v)-v = \exp^{-1}_{\theta^*}(\theta^{(s)})-\Delta_s = O(\Vert \Delta_s\Vert^2_{\theta^*}).
\end{equation}
Therefore, we have $\Vert \delta_s\Vert_{\theta^*} = O(\Vert \Delta_s\Vert^2_{\theta^*})$, which concludes the proof of the theorem.

\newpage
\subsection{Proof of \Cref{thm:fisher-consistency}} \label{appx:fisher-consistency-proof}

\begin{proof}
Recall we denote geodesic parallel transport from $\theta^{(s)}\rightarrow \theta^*$ by $\Gamma_{s,*} := \Gamma_{\theta^{(s)}, \theta^*}$.
From the proof of \Cref{lemma:eigenvalue-lower-bound}, we had that, for all $\delta >0$, the following holds a.s.:
\begin{equation} \label{eq:pt-fisher-expanded}
    \Gamma_{s,*} \circ \mathbf{H}_{s+1} \circ \Gamma_{*,s} = \frac{1}{s+1}\sum^s_{k=0} \Gamma_{s,*} \circ \Phi_{k,s} \circ \Gamma_{*,s} + O\left(\sqrt{\frac{\log{s^{1+\delta}}}{s}}\right) ,
\end{equation}
 where $\Phi_{k,s} = \mathcal{T}^{-1}_{[s,k]} \circ I_F(\theta^{(k)}) \circ \mathcal{T}^{-*}_{[s,k]}$. 
We can express each summand as
\begin{equation}
    \Gamma_{s,*} \circ \Phi_{k,s} \circ \Gamma_{*,s} = \left(\Gamma_{s,*} \mathcal{T}^{-1}_{[s,k]}\Gamma_{*,k}\right) \circ \left(\Gamma_{k,*}I_F(\theta^{(k)}) \Gamma_{*,k}\right) \circ \left(\Gamma_{k,*}\mathcal{T}^{-*}_{[s,k]} \Gamma_{*,s}\right).
\end{equation}
From the local Lipschitz continuity of $I_F$ at $\theta^*$ and \Cref{thm:convergence-rate}, we have
\begin{equation} \label{eq:pt-fisher-rate}
    \Vert \Gamma_{s,*} I_F(\theta^{(s)})\Gamma_{*,s} - I_F(\theta^*)\Vert_{\text{op}} = O\left(\sqrt{\frac{\log{s}}{s^\alpha}}\right).
\end{equation}
The goal now is to prove the following convergence rate; the theorem follows by expanding the summation in \eqref{eq:pt-fisher-expanded} and examining the convergence rates of the individual error terms.
\begin{equation}
    \sup_{s\geq k} \Vert \Gamma_{s,*} \mathcal{T}^{-1}_{[s,k]}\Gamma_{*,k} - \text{Id}_{\theta^*}\Vert_{\text{op}} =O\left(\frac{\log{k}^{1+\delta}}{k^{3\alpha/2-1}}\right), \qquad \delta > 0.
\end{equation}
Expanding the composite transport operation
\begin{equation} \label{eq:composite-transport}
    \Gamma_{s,*} \mathcal{T}^{-1}_{[s,k]} \Gamma_{*,k} = (\Gamma_{s,*} \mathcal{T}^{-1}_{s,s-1} \Gamma_{*,s-1}) \ \circ \ ... \ \circ \ (\Gamma_{k+1,*} \mathcal{T}^{-1}_{k+1,k} \Gamma_{*,k}).
\end{equation}
We bound the distance to $\text{Id}_{\theta^*}$ for individual terms using \Cref{lemma:triangle-transport}
\begin{equation}
    \Gamma_{s+1,*} \mathcal{T}^{-1}_{s+1,s} \Gamma_{*,s} = \text{Id}_{*} + O(d(\theta^{(s)},\theta^{(s+1)})d(\theta^{(s)},\theta^*) + d(\theta^{(s)},\theta^{(s+1)})^2).
\end{equation}
Recall from \Cref{lemma:eigenvalue-lower-claim1} in the proof of \Cref{lemma:eigenvalue-lower-bound} that
\begin{equation}
    \mathbb{E}[\Vert v_{s+1} \Vert^2_{\theta^{(s)}} \vert \mathcal{F}_s] \leq \tau_{s+1}^2 \Vert \mathbf{H}_{s+1}^{-1} \Vert^2_{\text{op}} (C_0 + C_1 W_s) := (1+s)^{-2\alpha} \tilde{W}_s.
\end{equation}
From \Cref{lemma:eigenvalue-lower-bound} and the proof of \Cref{thm:global-convergence}, $\tilde{W}_s$ is bounded almost surely. Write
\begin{equation}
    V_{s+1} = \frac{(s+1)^{2\alpha-1}}{\log{(s+1)^{1+\delta}}} \Vert v_{s+1}\Vert^2_{\theta^{(s)}}, \qquad \delta > 0.
\end{equation}
Combining this with the previous inequality yields that
\begin{equation}
    \mathbb{E}[V_{s+1} \vert \mathcal{F}_s] \leq V_s - V_s + \frac{\tilde{W}_s}{(s+1)\log{(s+1)^{1+\delta}}}.
\end{equation}
The final term has a finite sum a.s., therefore by the Robbins-Siegmund theorem,
\begin{equation} \label{eq:vs-scaled-sum}
    \sum^\infty_{s=2} V_s = \sum^\infty_{s=2} \frac{s^{2\alpha-1}}{\log{s}^{1+\delta}} \Vert v_s \Vert^2_{\theta^{(s-1)}} < \infty, \qquad \text{a.s.}
\end{equation}
Let $c_s:= s^{\gamma}/(\log{s})^{1+\delta}$ for some $\delta >0$ and $\gamma < 2\alpha-1$; by Cauchy-Schwarz
\begin{equation} \label{eq:sum-cauchy-schwarz}
    \sum^\infty_{s=k} d(\theta^{(s)}, \theta^{(s+1)}) d(\theta^{(s)},\theta^*)\leq  \left(\sum^\infty_{s=k}c_s d(\theta^{(s)}, \theta^{(s+1)})^2\right)^{1/2} \left(\sum^\infty_{s=k}c_s^{-1}d(\theta^{(s)}, \theta^*)^2\right)^{1/2}.
\end{equation}
From \Cref{lemma:retraction-ineq} we have $d(\theta^{(s)}, \theta^{(s+1)}) = O(\Vert v_{s+1}\Vert_{\theta^{(s)}})$, and hence almost surely
\begin{equation}
    \sum^\infty_{s=k}c_sd(\theta^{(s)},\theta^{(s+1)})^2 \leq \frac{1}{k^{2\alpha-1-\gamma}}\sum^\infty_{s=k} \frac{s^{2\alpha-1}}{\log{s}^{1+\delta}} d(\theta^{(s)},\theta^{(s+1)})^2 = O(k^{-(2\alpha-1-\gamma)}).
\end{equation}
For the other term, if $\gamma >1-\alpha$ (note, $1-\alpha <2\alpha -1\Longleftrightarrow \alpha >2/3$), by a standard result
\begin{equation}
    \sum^\infty_{s=k} c_s^{-1} d(\theta^{(s)},\theta^*)^2 =O\left( \sum^\infty_{s=k} \frac{\log{s}^{2+\delta}}{s^{\alpha+\gamma}}\right) = O\left( \frac{\log{k}^{2+\delta}}{k^{\alpha+\gamma-1}}\right), \qquad \text{a.s.}
\end{equation}
Consequently, since $\sum^\infty_{s=k}d(\theta^{(s)},\theta^{(s+1)})^2$ has a comparatively negligible rate:
\begin{align}
    \sum^\infty_{s=k} \underbrace{[d(\theta^{(s)},\theta^{(s+1)})d(\theta^{(s)},\theta^*)+d(\theta^{(s)},\theta^{(s+1)})^2]}_{:=a_s} & = O\left(\sqrt{\frac{\log{k}^{2+\delta}}{k^{\alpha+\gamma-1}}}\times \sqrt{\frac{1}{k^{2\alpha-1-\gamma}}} \right) \\ & = O\left(\frac{\log{k}^{1+\delta/2}}{k^{3\alpha/2-1}} \right).
\end{align}
The tail product of $(1+a_s)$ converges to $1$ at the same rate; since $e^x - 1 = O(x)$ for small $x$,
\begin{equation}
    \prod^\infty_{s=k}(1+a_s)-1 = \exp{\left(\sum^\infty_{s=k}\log{(1+a_s)} \right)}-1 \leq \exp\left({\sum^\infty_{s=k}a_s}\right)-1 = O\left(\sum^\infty_{s=k}a_s\right).
\end{equation}
For a sequence of linear operators $A_n$ and sub-multiplicative norm $\Vert \cdot \Vert$, by induction
\begin{equation}
    \left\Vert \prod^n_{i=m} (\text{Id} + A_i) - \text{Id} \right\Vert \leq \prod^n_{i=m} (1 + \Vert A_i \Vert) -1.
\end{equation}
Therefore, from \eqref{eq:composite-transport}, we have for all $k\leq s$
\begin{equation}
    \Vert \Gamma_{s,*} \mathcal{T}^{-1}_{[s,k]} \Gamma_{*,k} - \text{Id}_{\theta^*}\Vert_{\text{op}} \leq \prod^{s-1}_{n=k}( 1 + \underbrace{\Vert\Gamma_{n+1,*}\mathcal{T}^{-1}_{n+1,n} \Gamma_{*,n} - \text{Id} \Vert_{\text{op}}}_{:=B_n})-1.
\end{equation}
From \Cref{lemma:triangle-transport} we have $B_n = O(a_n)$. Therefore, for any $\delta > 0$:
\begin{equation}
    \sup_{s\geq k} \Vert \Gamma_{s,*} \mathcal{T}^{-1}_{[s,k]}\Gamma_{*,k} - \text{Id}_{\theta^*}\Vert_{\text{op}}\leq \prod^\infty_{s=k}(1+B_s)-1 =O\left(\frac{\log{k}^{1+\delta}}{k^{3\alpha/2-1}}\right),
\end{equation}
which concludes the proof.
\end{proof}

\newpage
\subsection{KL and Fisher Information on a Manifold} \label{appendix:kl-fisher-manifold}
Let $q_\theta(y)$ be a density on $\mathbb{R}^d$ parametrized by $\theta \in \mathcal{M}$, where $\mathcal{M}$ is a Riemannian manifold.
\[F_\theta[u,v]=\E_{Y\sim q_\theta}\Big[\langle\nabla\log q_\theta(Y),u\rangle_\theta\cdot\langle\nabla\log q_\theta(Y),v\rangle_\theta\Big],\qquad u,v\in T_\theta\mathcal M.\]
The local expansion of the KL in terms of $F_\theta$ on $\mathcal{M}$ can be expressed as follows.

\begin{lemma}
Let $\mathcal{R}$ be a second-order retraction. Under standard regularity conditions,
\[\text{KL}\big(q_\theta \mid q_{\mathcal{R}_\theta(v)}\big)=\frac12 F_\theta[v,v]+o(\|v\|_\theta^2), \qquad (\theta,v) \in T\mathcal{M}.\]
\end{lemma}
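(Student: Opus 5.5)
The plan is to reduce the statement to a one-dimensional Taylor expansion along the retraction curve. Fix $(\theta,v)\in T\mathcal{M}$ and set $\gamma(t)=\mathcal{R}_\theta(tv)$. Define
\[
k(t):=\mathrm{KL}\big(q_\theta \,\Vert\, q_{\gamma(t)}\big)=\E_{Y\sim q_\theta}\big[\ell_Y(\theta)-\ell_Y(\gamma(t))\big].
\]
Under the standard regularity conditions (differentiation under the integral, integrable dominating functions for the first and second derivatives of $t\mapsto \ell_y(\gamma(t))$ near $t=0$), $k$ is $C^2$ near $0$. The target is then $k(1\cdot)$ expanded to second order. To make the remainder uniform in the size of $v$, I would write $v=r u$ with $\Vert u\Vert_\theta=1$, expand $k_u(r)=\mathrm{KL}(q_\theta\Vert q_{\mathcal{R}_\theta(ru)})$ in $r$, and use continuity of the second derivative, uniform over the compact unit sphere in $T_\theta\mathcal{M}$. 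This yields $o(r^2)=o(\Vert v\Vert_\theta^2)$.

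\textbf{Step 1: the zeroth and first order terms vanish.} Clearly $k(0)=0$. Since $\dot\gamma(0)=D\mathcal{R}_\theta(0)[v]=v$,
\[
k'(0)=-\E_{q_\theta}\big[D\ell_Y(\theta)[v]\big]=-\int D_\theta q_\theta(y)[v]\,dy=-D\Big(\int q_\theta\Big)[v]=0.
\]
This is the usual zero-mean-score identity; it only uses exchanging differentiation and integration.

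\textbf{Step 2: the second derivative.} We have
\[
k''(0)=-\E_{q_\theta}\Big[\tfrac{d^2}{dt^2}\ell_Y(\gamma(t))\big|_{t=0}\Big].
\]
For any smooth $f$, the second derivative along a curve splits as
\[
\tfrac{d^2}{dt^2}f(\gamma(t))=\nabla^2 f(\gamma(t))[\dot\gamma,\dot\gamma]+\big\langle \nabla f(\gamma(t)),\tfrac{D}{dt}\dot\gamma(t)\big\rangle .
\]
Because $\mathcal{R}$ is second order, $\tfrac{D}{dt}\dot\gamma(0)=0$ (this is \eqref{eq:R_x_quandratic}), so the acceleration term drops out. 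Even without that property, its expectation would vanish by the zero-mean-score identity of Step 1. Therefore
\[
k''(0)=-\E_{q_\theta}\big[\nabla^2\ell_Y(\theta)[v,v]\big]=F_\theta[v,v],
\]
using the Hessian/outer-product equivalence $F_\theta=-\E_{q_\theta}[\nabla^2\ell_Y(\theta)]$ from \citet[Theorem 1]{smith2005covariance}. Taylor's theorem then gives $k(1)=\tfrac12 F_\theta[v,v]+o(\Vert v\Vert_\theta^2)$.

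\textbf{Main obstacle.} The delicate part is not the geometry but the uniformity of the remainder and the justification for interchanging limits. I would handle it by assuming that $(t,u)\mapsto \E_{q_\theta}[\tfrac{d^2}{dt^2}\ell_Y(\mathcal{R}_\theta(tu))]$ is continuous near $t=0$, uniformly over unit $u$, which is guaranteed by a local integrable envelope on the second derivatives. The remainder can then be written in integral form, $\int_0^1(1-s)\,[k_u''(sr)-k_u''(0)]\,ds\; r^2$, which is $o(r^2)$ uniformly in $u$.
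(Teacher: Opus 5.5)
Your proposal is correct and follows the paper's route: a second-order Taylor expansion along the retraction curve, in which the score term vanishes by the zero-mean identity and the Hessian term is identified with $-F_\theta$ via \citet[Theorem 1]{smith2005covariance}. Two of your additions go slightly beyond the paper. Your remark that the acceleration term has zero expectation anyway shows the second-order property of $\mathcal{R}$ is not actually needed, and your uniform integral-form remainder makes explicit the interchange of expectation and $o(\Vert v\Vert_\theta^2)$ that the paper leaves to ``standard regularity conditions''.
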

\begin{proof}
    Since $\mathcal{R}$ is a second-order retraction, we have the following Taylor expansion\footnote{See e.g. Proposition 5.44 in \citet{boumal2023introduction}.}
    \begin{equation*}
        \log{q_{\mathcal{R}_\theta(v)}(y)} = \log{q_{\theta}(y)}+\langle \nabla \log q_\theta (y), v\rangle_\theta + \frac{1}{2}\big\langle \mathrm{Hess}(\log q_\theta (y))[v], v \big\rangle_\theta + o(\Vert v \Vert_\theta^2),
    \end{equation*}
    in terms of the Riemannian Hessian at $\theta$. Re-arranging and taking expectations
    \begin{equation*}
        \mathbb{E}_{Y\sim q_\theta}\left[\log \frac{q_{\mathcal{R}_\theta(v)}(Y)}{q_{\theta}(Y)}\right] = \mathbb{E}_{Y\sim q_\theta}[ \langle \nabla \log q_\theta(Y), v\rangle_\theta] + \tfrac{1}{2}\mathbb{E}_{Y\sim q_\theta}\big[ \big\langle \mathrm{Hess}(\log q_\theta (Y))[v], v \big\rangle_\theta\big] + o(\Vert v \Vert^2_\theta).
    \end{equation*}
    The score term vanishes, provided we can interchange differentiation and integration. For the Hessian term, from Theorem  1 in \citet{smith2005covariance} we have
    \begin{equation}
        \mathbb{E}_{Y\sim q_\theta}\big[ \big\langle \mathrm{Hess}(\log q_\theta (Y))[v], v \big\rangle_\theta\big] = -F_\theta[v,v].
    \end{equation}
    Thus
    \begin{equation}
        \mathrm{KL}(q_\theta \mid q_{\mathcal{R}_\theta(v)}) = -\mathbb{E}_{Y\sim q_\theta}\left[\log \frac{q_{\mathcal{R}_\theta(v)}(Y)}{q_{\theta}(Y)}\right] = \frac{1}{2}F_\theta[v,v]+o(\Vert v\Vert^2_\theta).
    \end{equation}
\end{proof}

\newpage
\subsection{ELBO Score-function and Reparameterization Gradients}\label{appendix:score and reparameterization gradient}

% Application of the proposed natural gradient method in VB requires computing the Riemannian gradient of the lower bound on $(\mathcal{M},\langle
% \cdot,\cdot\rangle_\theta)$
% \[\text{LB}(\theta)=\E_{Y\sim q_\theta}\big[\log\frac{\bar\pi(Y)}{q_\theta(Y)}\big]=\E_{Y\sim q_\theta}[h_\theta(Y)],\;\;\;h_\theta(y)=\log\frac{\bar\pi(y)}{q_\theta(y)}\]
% where $\pi(y)\propto\bar\pi(y)$ is the target density.

% In the case $\mathcal{M}$ is embedded in an Euclidean space, the gradient $\nabla_\theta\text{LB}(\theta)$ can be obtained by
% projecting its Euclidean version on $T_\theta\mathcal{M}$, where the Euclidean gradient can be in the form of a score-function formulation or reparameterization formulation.
% If the score-function gradient is used, control variates are often needed to reduce the variance of the gradient estimator.

% For a general $\mathcal{M}$ not necessarily embedded in an Euclidean space, the lemma below presents expressions for score-function and reparameterized Riemannian gradients.

Consider again a family of densities $q_\theta(y)$ on $\mathbb{R}^d$ parameterized by $\theta \in \mathcal{M}$, where $\mathcal{M}$ is a Riemannian manifold. Recall, the evidence lower bound (ELBO) is given by
\begin{equation}
    \text{LB}(\theta)=\E_{Y\sim q_\theta}\big[\log\frac{\bar\pi(Y)}{q_\theta(Y)}\big]=\E_{Y\sim q_\theta}[h_\theta(Y)],\qquad h_\theta(y)=\log\frac{\bar\pi(y)}{q_\theta(y)}
\end{equation}
where $\pi(y) \propto \bar{\pi}(y)$ denote a target density. 

Depending on the approximating family, $\nabla_\theta \mathrm{LB}(\theta)$ can often be expressed in several forms; we review the score function and reparameterization gradients here for completeness.

\begin{lemma} \label{lemma:score-gradient}
The score-function gradient of $\mathrm{LB}(\theta)$ can be expressed as
\begin{equation}
    \nabla_\theta \mathrm{LB}(\theta) = \mathbb{E}_{Y \sim q_\theta}\left[ \nabla_\theta \log q_\theta(Y) \times h_\theta (Y)\right]
\end{equation}
\end{lemma}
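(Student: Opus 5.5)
We prove \Cref{lemma:score-gradient} by differentiating under the integral sign, working with the differential $D\,\mathrm{LB}(\theta)[u]$ for an arbitrary $u \in T_\theta\mathcal{M}$ and then identifying the Riemannian gradient through the baseline metric. Since the gradient is characterised by $D\,\mathrm{LB}(\theta)[u] = \langle u, \nabla_\theta \mathrm{LB}(\theta)\rangle_\theta$ for all $u$, it suffices to show that $D\,\mathrm{LB}(\theta)[u] = \mathbb{E}_{q_\theta}[h_\theta(Y)\, D\ell_Y(\theta)[u]]$. This is because $D\ell_Y(\theta)[u] = \langle u, \nabla_\theta \log q_\theta(Y)\rangle_\theta$, so the claimed vector has the correct pairing with every $u$.

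To compute the differential, take any smooth curve $\gamma$ with $\gamma(0)=\theta$, $\dot\gamma(0)=u$; for instance $\gamma(t)=\mathcal{R}_\theta(tu)$. Write
\begin{equation*}
\mathrm{LB}(\gamma(t)) = \int q_{\gamma(t)}(y)\,\big[\log \bar\pi(y) - \log q_{\gamma(t)}(y)\big]\,dy .
\end{equation*}
This reduces the problem to the scalar $t$, so the computation is purely one-dimensional and intrinsic; no ambient embedding is needed. Assuming the usual regularity conditions (differentiation and integration may be interchanged, as is also assumed in Appendix \ref{appendix:kl-fisher-manifold}), we differentiate at $t=0$ by the product rule. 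Using $\tfrac{d}{dt} q_{\gamma(t)}(y)|_{t=0} = q_\theta(y)\, D\ell_y(\theta)[u]$, this gives
\begin{equation*}
\frac{d}{dt}\mathrm{LB}(\gamma(t))\Big|_{t=0} = \mathbb{E}_{q_\theta}\big[D\ell_Y(\theta)[u]\, h_\theta(Y)\big] - \mathbb{E}_{q_\theta}\big[D\ell_Y(\theta)[u]\big].
\end{equation*}

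The second term is the expected score. It vanishes because $\mathbb{E}_{q_\theta}[D\ell_Y(\theta)[u]] = \tfrac{d}{dt}\int q_{\gamma(t)}(y)\,dy|_{t=0} = \tfrac{d}{dt} 1 = 0$, again interchanging derivative and integral. The first term is exactly the required pairing, which proves the lemma.

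The only genuine obstacle is justifying these two interchanges of differentiation and integration. We would handle this with a dominated convergence hypothesis: a locally integrable envelope for $|\tfrac{d}{dt} q_{\gamma(t)}(y)\,(1+|h_{\gamma(t)}(y)|)|$, uniform for $t$ near $0$. We would state this as part of the standard regularity conditions rather than derive it. Everything else is the Euclidean argument read through the curve $\gamma$.
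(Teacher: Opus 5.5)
Your proposal is correct and is essentially the paper's own argument. You differentiate $\mathrm{LB}$ along a curve through $\theta$ with velocity $u$, split by the product rule, kill the term $-\int \tfrac{d}{dt} q_{\gamma(t)}(y)\,dy\big|_{t=0}$ using $\int q_{\gamma(t)}\,dy = 1$, and identify the gradient through the baseline-metric pairing; your dominated-convergence hypothesis simply makes explicit the interchange of derivative and integral that the paper uses without comment.
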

\begin{proof}
    Let $v \in T_\theta \mathcal{M}$ and $c:[0,1] \rightarrow \mathcal{M}$ be a smooth curve with $c(0)=\theta, c'(0)=v$.
    \begin{equation}
        D\mathrm{LB}(\theta)[v] =\int \frac{\partial}{\partial t} q_{c(t)}(y)\Big\vert_{t=0} \cdot h_\theta(y) dy+ \int q_\theta(y) \cdot \frac{\partial}{\partial t} h_{c(t)}(y)\Big\vert_{t=0}dy
    \end{equation}
    The second integrand evaluates to $-\langle\nabla_\theta q_\theta(y), v\rangle_\theta$, hence this term is zero. Thus
    \begin{equation}
        D\mathrm{LB}(\theta)[v]=\int \langle \nabla_\theta q_\theta (y), v\rangle_\theta \cdot h_\theta (y)dy  = \langle \underbrace{\mathbb{E}_{Y \sim q_\theta}\left[\nabla_\theta \log q_\theta (y) h_\theta(Y) \right]}_{:=\nabla \mathrm{LB}(\theta)}, v\rangle_\theta
    \end{equation}
\end{proof}

The Monte-Carlo estimator of the score function gradient can exhibit considerable variance. Techniques such as control variates are often required to obtain a useful update direction \citep{ranganath2014black}. A popular alternative is to employ the reparameterization trick \citep{kingma2013auto}, which often results in a more stable Monte-Carlo estimator.

\begin{lemma}
    Let $\zeta : \mathcal{M} \times \mathbb{R}^m \rightarrow \mathbb{R}^d$ such that $Y = \zeta(\theta, \epsilon)\sim q_\theta(\cdot)$, where $\epsilon \sim p(\cdot)$ is some distribution independent of $\theta$. The gradient of the lower bound can be expressed as
    \begin{equation}
        \nabla_\theta \mathrm{LB}(\theta) = \mathbb{E}_{p(\epsilon)}\big\{ D\zeta_\epsilon(\theta)^*[\nabla_yh_\theta(\zeta_\epsilon(\theta))]\big\},
    \end{equation}
    where $D\zeta_\epsilon(\theta)^*: \mathbb{R}^d \rightarrow T_\theta \mathcal{M}$ denotes the adjoint of the differential w.r.t $\theta$ for $\epsilon$ fixed.
\end{lemma}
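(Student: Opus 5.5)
The lemma asserts that $\nabla_\theta \mathrm{LB}(\theta) = \mathbb{E}_{p(\epsilon)}\{ D\zeta_\epsilon(\theta)^*[\nabla_y h_\theta(\zeta_\epsilon(\theta))]\}$. The plan is to follow the same template as \Cref{lemma:score-gradient}: compute the differential $D\mathrm{LB}(\theta)[v]$ along a smooth curve, then identify the Riemannian gradient by reading off the Riesz representer with respect to the baseline metric.

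First, fix $v \in T_\theta\mathcal{M}$ and a smooth curve $c:[0,1]\to\mathcal{M}$ with $c(0)=\theta$ and $c'(0)=v$. Using the reparameterisation, write
\begin{equation*}
\mathrm{LB}(c(t)) = \mathbb{E}_{p(\epsilon)}\big[ h_{c(t)}(\zeta(c(t),\epsilon))\big],
\end{equation*}
so the sampling distribution $p$ no longer depends on $t$. Under the standard regularity conditions already invoked elsewhere (domination allowing differentiation under the integral), we can differentiate inside the expectation at $t=0$.

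By the chain rule, the derivative of $t \mapsto h_{c(t)}(\zeta(c(t),\epsilon))$ splits into two parts:
\begin{itemize}
\item the explicit $\theta$-dependence of $h_\theta$, which is $-D(\log q_\theta(y))[v]$ evaluated at $y=\zeta_\epsilon(\theta)$;
\item the dependence through $y$, which is $\langle \nabla_y h_\theta(\zeta_\epsilon(\theta)), D\zeta_\epsilon(\theta)[v]\rangle_{\mathbb{R}^d}$.
\end{itemize}

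Second, the explicit term vanishes in expectation. Since $\zeta_\epsilon(\theta)\sim q_\theta$ when $\epsilon\sim p$,
\begin{equation*}
\mathbb{E}_p\big[-D\log q_\theta(\zeta_\epsilon(\theta))[v]\big] = -\mathbb{E}_{q_\theta}\big[\langle \nabla_\theta \log q_\theta(Y), v\rangle_\theta\big] = 0.
\end{equation*}
The last equality is the zero-mean score identity, obtained from $\int q_\theta = 1$ by differentiating under the integral, exactly as in the proof of \Cref{lemma:score-gradient}. The key point is that $\theta$ is frozen in the density while $y$ is evaluated at $\zeta_\epsilon(\theta)$, so this term is a genuine score expectation.

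Third, rewrite the remaining term using the adjoint $D\zeta_\epsilon(\theta)^*:\mathbb{R}^d\to T_\theta\mathcal{M}$, defined with respect to the Euclidean inner product on $\mathbb{R}^d$ and $\langle\cdot,\cdot\rangle_\theta$ on $T_\theta\mathcal{M}$:
\begin{equation*}
\langle \nabla_y h_\theta(\zeta_\epsilon(\theta)), D\zeta_\epsilon(\theta)[v]\rangle_{\mathbb{R}^d} = \big\langle D\zeta_\epsilon(\theta)^*[\nabla_y h_\theta(\zeta_\epsilon(\theta))], v\big\rangle_\theta .
\end{equation*}
Pulling the expectation inside the inner product, which is valid by linearity and integrability, gives
\begin{equation*}
D\mathrm{LB}(\theta)[v] = \langle \mathbb{E}_p\{D\zeta_\epsilon(\theta)^*[\nabla_y h_\theta(\zeta_\epsilon(\theta))]\}, v\rangle_\theta \quad \text{for all } v.
\end{equation*}
Uniqueness of the Riesz representer then yields the claimed formula.

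The main obstacle is the second step: cleanly separating the two $\theta$-dependences of $h_\theta$ in the chain rule and justifying that the explicit score term has zero mean. This requires interchanging differentiation and integration twice, once for the reparameterised expectation and once for the score identity. I would handle both under the same integrability assumptions, stated as standard regularity conditions.
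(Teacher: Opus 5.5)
Your proposal is correct and matches the paper's proof essentially step for step. Both differentiate along a curve $c(t)$ and split the chain rule into two parts: the explicit $\theta$-dependence of $h_\theta$, which has zero mean by the score identity since $\zeta_\epsilon(\theta)\sim q_\theta$, and the dependence through $y$, which is rewritten via the adjoint $D\zeta_\epsilon(\theta)^*$ before reading off the Riesz representer.
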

\begin{proof}
    Let $c(t)$ be as in \Cref{lemma:score-gradient}, then applying the chain rule 
    \begin{align}
        D\mathrm{LB}(\theta)[v]& = \frac{\partial}{\partial t}\mathbb{E}_{p(\epsilon)}\{h_{c(t)}[\zeta(c(t), \epsilon)]\}\big|_{t=0} \\
        & = \mathbb{E}_{p(\epsilon)}\left\{ \frac{\partial}{\partial t} h_{c(t)}[\zeta(\theta, \epsilon)]\big\vert_{t=0}+ \frac{\partial}{\partial t} h_\theta[\zeta(c(t),\epsilon)]\big\vert_{t=0}\right\}
    \end{align}
    Defining $y^*(\epsilon) = \zeta(\theta, \epsilon)$ to disambiguate the gradient operator, the first term equals
    \begin{equation}
        \frac{\partial}{\partial t} h_{c(t)}[\zeta(\theta, \epsilon)]\big\vert_{t=0} = \langle\nabla_\theta h_\theta[y^*(\epsilon)], v\rangle_\theta = -\langle \nabla_\theta \log q_\theta(y^*(\epsilon)), v\rangle_\theta
    \end{equation}
    Taking the expectation of the above in $p(\epsilon)$ yields zero. For the second term, we have
    \begin{equation}
        \frac{\partial}{\partial t} h_\theta[\zeta(c(t),\epsilon)]\big\vert_{t=0} = \langle \nabla_y h_\theta[\zeta(\theta,\epsilon)], D\zeta_\epsilon(\theta)[v]\rangle  = \langle D\zeta_\epsilon(\theta)^*[\nabla_y h_\theta[\zeta(\theta,\epsilon)]], v\rangle_\theta
    \end{equation}
    Taking expectations again with respect to $p(\epsilon)$ yields the result.
\end{proof}

\newpage
\subsection{Low-dimensional Representation of $H_s^{-1}$} \label{appendix:vectorized-fisher}

The purpose of this section is to describe how one can store, update, and transport a vectorized ``sliding window'' version of $H_s^{-1}$, comprised of the most recent $K$ score vectors. 

Let $\mathcal{M}$ be a Riemannian manifold, and $\theta \in \mathcal{M}$, $u_s, v_s \in T_\theta \mathcal{M}$ for $0 \leq s \leq K$. Define
\begin{equation}
    H_s := \underbrace{\epsilon I}_{:=H_0} + \sum^s_{k=1} u_k v_k^\top  G_\theta, \qquad 1 \leq s \leq K.
\end{equation}
We consider a slightly more general setting than described in \Cref{rem:limited-memory-approximation}. Rather than defining $H_s$ via a summation of terms $\phi_k \phi_k^\top G_\theta$, we have used $u_k v_k^\top G_\theta$. The purpose of this change is to accommodate the use of general (i.e, non-isometric) transports in \Cref{sec:transporation-of-hs1}.

\subsubsection{Vectorized Representation of $H_s^{-1}$}
Following the inversion formula \eqref{eq:sherman-morrison}, for $0 \leq s \leq K-1$ we have
\begin{align}
    H_{s+1}^{-1} & = H_s^{-1}-(1+v_{s+1}^\top G_\theta H_s^{-1} u_{s+1})^{-1} H_s^{-1} u_{s+1} v_{s+1}^\top G_\theta H^{-1}_s \\
    & = H_s^{-1} - (1+\langle v_{s+1}, H_s^{-1}u_{s+1}\rangle_\theta )^{-1} (H^{-1}_su_{s+1})(H^{-*}_sv_{s+1})^\top G_\theta.
\end{align}
For $0 \leq s \leq K-1$ define
\begin{equation}
    \mu_s := H_s^{-1}u_{s+1}, \qquad \nu_{s} :=H_s^{-*}v_{s+1}, \qquad c_s := (1 + \langle v_{s+1}, \mu_s\rangle_\theta)^{-1}.
\end{equation}
Expanding the previous recursion, we therefore have
\begin{equation} \label{eq:inv-H-sum}
    H_{s+1}^{-1} := \tfrac{1}{\epsilon}I - \sum^s_{k=0}c_k \mu_k \nu_k^\top G_\theta, \qquad 0 \leq s \leq K-1.
\end{equation}
\vspace{-0.5cm}
For the adjoint of $H_s$, we just swap the roles of $\mu$ and $\nu$.
\begin{align}
    H_{s+1}^{-*} & = (H_s+u_{s+1}v_{s+1}^\top G_\theta )^{-*} = (H_s^*+ v_{s+1}u_{s+1}^\top G_\theta)^{-1} \\
    & = H_s^{-*} - (1 + \langle v_{s+1}, H_s^{-1} u_{s+1}\rangle_\theta)^{-1} (H_s^{-*}v_{s+1})(H_s^{-1} u_{s+1})^\top G_\theta \\
    & = H_s^{-*} - c_s \nu_s \mu_s^\top G_\theta = \tfrac{1}{\epsilon} I - \sum^s_{k=0}c_k \nu_k \mu_k^\top G_\theta.
\end{align}

\subsubsection{Updating Vectorized Representation of $H_K^{-1}$}
In the following, we index score vectors from newest to oldest, as this simplifies the presentation. 
We want an efficient mechanism to compute $(H_{K-1}^0)^{-1}$ from $H_K^{-1}$, where
\begin{align}
    H_{K-1}^0&:= H_0+u_0v_0^\top G_\theta+u_1v_1^\top G_\theta +\cdots u_{K-1}v_{K-1}^\top G_\theta.
\end{align}
% from $H_K^{-1}$,
% \begin{align}
%     H_K= H_0+u_1v_1^\top G_\theta+\cdots u_{K-1}v_{K-1}^\top G_\theta+u_{K}v_{K}^\top G_\theta.
% \end{align}
That is, we want to drop the oldest vectors $u_K$, $v_K$ and incorporate the new vectors $u_0$, $v_0$.
To drop the oldest vectors, one simply removes the last term in the summation \eqref{eq:inv-H-sum}. Incorporating the new term is slightly trickier, as we need to add it at the head of this summation, which then requires updating the subsequent $c_k, \mu_k, \nu_k$. Define
\begin{equation}
    H^{0}_{-1} := \epsilon I, \qquad H_s^0 := H_s + u_0 v_0^\top G_\theta \quad \mathrm{for} \quad 0 \leq s \leq K-1.
\end{equation}
We wish to find the $\tilde{c}_s, \tilde\mu_s$, and $\tilde \nu_s$ defining $(H_{K-1}^0)^{-1}$ as in \eqref{eq:inv-H-sum}
\begin{equation}
    (H_{K-1}^0)^{-1} = \tfrac{1}{\epsilon}I - \tilde c_{-1} \tilde\mu_{-1} \tilde \nu_{-1}^\top G_\theta - \sum^{K-2}_{s=0} \tilde c_s \tilde \mu_s \tilde \nu_s^\top G_\theta .
\end{equation}
That is, where for $-1 \leq s \leq K-2$ we have
\begin{equation}
    \tilde \mu_s = (H^0_s)^{-1} u_{s+1}, \qquad \tilde{\nu}_s = (H^0_s)^{-*} v_{s+1}, \qquad \tilde{c}_s = (1 + \langle v_{s+1}, \tilde\mu_s\rangle_\theta)^{-1}.
\end{equation}
To that end, let $z_0 := u_0 /\epsilon$ and $z_0^*:=v_0/\epsilon$, and for $1 \leq s \leq K-1$ define
\begin{align}
    & z_s := H_s^{-1} u_0 = (H^{-1}_{s-1} - c_{s-1} \mu_{s-1} \nu_{s-1}^\top G_\theta)u_0 = z_{s-1} - c_{s-1} \mu_{s-1} \langle \nu_{s-1}, u_0 \rangle_\theta \\
    & z_s^* := H_s^{-*} v_0 = (H^{-*}_{s-1} - c_{s-1} \nu_{s-1} \mu_{s-1}^\top G_\theta)v_0 = z_{s-1}^* - c_{s-1} \nu_{s-1} \langle \mu_{s-1}, v_0 \rangle_\theta.
\end{align}
Then, for the $\tilde\mu_s$ with $0 \leq s \leq K-2$ do
\begin{align}
    \tilde\mu_s &= (H^0_s)^{-1} u_{s+1} = (H_s + u_0 v_0^\top G_\theta)^{-1}{u_{s+1}} 
    \\ & = H_s^{-1}u_{s+1} - (1 + \langle v_0, H_s^{-1}u_0\rangle_\theta)^{-1} H_s^{-1} u_0 v_0^\top G_\theta H_s^{-1} u_{s+1}
    \\ & = \mu_s - (1 + \langle v_0, z_s\rangle_\theta)^{-1} \langle v_0, \mu_s \rangle_\theta \cdot z_s.
\end{align}
By similar reasoning, for the $\tilde\nu_s$ we have
\begin{equation}
    \tilde \nu _s = (H_s^0)^{-*} v_{s+1} = \nu_s - (1 + \langle u_0, z_s^*\rangle_\theta)^{-1} \langle u_0, \nu_s\rangle_\theta \cdot z_s^*.
\end{equation}
Note that $\langle u_0, z_s^*\rangle_\theta = \langle v_0, z_s\rangle_\theta$. Finally, to obtain the $\tilde{c}_s$ for $0 \leq s \leq K-2$
\begin{align}
    \tilde c^{-1}_s -1 & = \langle v_{s+1}, \tilde \mu_s\rangle_\theta =  v_{s+1}^\top G_\theta (H_s^0)^{-1} u_{s+1}
    \\ & = v_{s+1}^\top G_\theta H_s^{-1} u_{s+1} - (1 + \langle v_0, H_s^{-1} u_0 \rangle_\theta)^{-1} v_{s+1}^\top G_\theta H_s^{-1} u_0 v_0^\top G_\theta H_s^{-1} u_{s+1}
    \\ & = c^{-1}_s - 1 - (1+\langle v_0, z_s\rangle_\theta)^{-1} \langle u_0, \nu_s\rangle_\theta \cdot \langle v_0, \mu_s\rangle_\theta.
\end{align}
The full procedure for recomputing $\tilde{c}_k, \tilde{\mu}_k, \tilde{\nu}_k$ thus requires $O(K)$ evaluations of the metric.

\subsubsection{Transportation of $H_s^{-1}$} \label{sec:transporation-of-hs1}
Let $\tilde \theta \in \mathcal{M}$, and $\mathcal{T} : T_\theta \mathcal{M} \rightarrow T_{\tilde{\theta}} \mathcal{M}$ an invertible linear map. 
We want to transport $H_s^{-1}$ from $T_\theta \mathcal{M}$ to $T_{\tilde{\theta}} \mathcal{M}$.
For $0 \leq s \leq K-1$ define
\begin{equation}
    \tilde H_{s+1} := \mathcal{T} \circ H_{s+1} \circ \mathcal{T}^{-1} = \tilde H_s + (\mathcal{T}u_{s+1})(\mathcal{T}^{-*}v_{s+1})^\top G_{\tilde\theta}.
\end{equation}
The transported inverse is then
\begin{equation}
    \tilde{H}^{-1}_{s+1} = \tilde{H}_s^{-1} - (1+\langle \mathcal{T}^{-*} v_{s+1}, \tilde H_s^{-1} (\mathcal{T}u_{s+1})\rangle_{\tilde \theta})^{-1} (\tilde H^{-1}_s \mathcal{T}u_{s+1})(\tilde H_s^{-*} \mathcal{T}^{-*}v_{s+1})^\top G_{\tilde\theta}.
\end{equation}
Note that
\begin{equation}
 \langle \mathcal{T}^{-*} v_{s+1}, \tilde H_s^{-1} (\mathcal{T}u_{s+1})\rangle_{\tilde \theta} =  \langle v_{s+1}, H_s^{-1}u_{s+1}\rangle_\theta = \langle v_{s+1},\mu_s\rangle_\theta.   
\end{equation}
Furthermore
\begin{align}
    & \tilde H^{-1}_s \mathcal{T} u_{s+1} = \mathcal{T} (H^{-1}_s u_{s+1})=\mathcal{T} \mu_s, \\
    & \tilde H^{-*}_s \mathcal{T}^{-*} v_{s+1} = \mathcal{T}^{-*} (H_s^{-*}v_{s+1}) = \mathcal{T}^{-*}\nu_s.
\end{align}
Thus
\begin{equation}
    \tilde{H}^{-1}_{s+1} = \tilde{H}^{-1}_s - c_s (\mathcal{T}\mu_s)(\mathcal{T}^{-*}\nu_s)^\top G_{\tilde\theta}.
\end{equation}
Hence $(c_s, \mathcal{T}\mu_s, \mathcal{T}^{-*}\nu_s)$ maintains the structure in \eqref{eq:inv-H-sum} relative to $(c_s,\mu_s, \nu_s)$. Note that if $\mathcal{T}$ is isometric then $\mathcal{T}^{-*}=\mathcal{T}$. Hence, provided that $u=v$ for every outer product term, then we only need to transport and update one set of vectors.

\newpage
\subsection{Gaussian Variational Inference} \label{appx:bw-nat-grad}

We review the relevant details on the Fisher metric and Bures-Wasserstein manifold which are needed to implement \Cref{alg:riemannian-if-ngd}. There are many references on the latter; see e.g. \citet{takatsu2011wasserstein, malago2018wasserstein, bhatia2019}. Then, we provide some additional details on our experimental methodology.

\subsubsection{The Fisher and Bures-Wasserstein Metrics}

The space of non-degenerate Gaussian distributions on $\mathbb{R}^d$ can be viewed as a smooth manifold $\mathcal{M}$, parametrized by the mean and covariance. For a smooth curve $(m(t), \Sigma(t)) \in \mathbb{R}^d \times S^d_{++}$, the velocity $(\dot m(t), \dot \Sigma(t))$ lies in $\mathbb{R}^d \times S^d$, where $S^d$ denotes the space of $d\times d$ symmetric matrices. Thus, tangent spaces are identified as
\begin{equation}
    T_{m,\Sigma}\mathcal{M} \equiv \mathbb{R}^d \times S^d.
\end{equation}

\paragraph{Fisher Metric:} The Fisher information defines\footnote{See for example \citet{skovgaard1984riemannian}.} the following Riemannian metric on $\mathcal{M}$
\begin{equation}
    \langle (u,U), (v,V)\rangle^F_{m,\Sigma} := u^\top \Sigma^{-1} v +\frac{1}{2}\tr\left(\Sigma^{-1} U \Sigma^{-1} V\right), \qquad (u,U), (v,V) \in \mathbb{R}^d \times S^d.
\end{equation}
For a smooth function $f: \mathcal{M} \rightarrow \mathbb{R}$, let $\nabla_\mu f, \nabla_\Sigma f$ denote its Euclidean (partial) gradients. To obtain the gradient with respect to the Fisher metric (i.e. the \textit{natural gradients}), consider
\begin{align}
    \nabla_\mu f^\top v+\tr(\nabla_\Sigma f \cdot V) & = (\Sigma\nabla_\mu f)^\top \Sigma^{-1} v+ \frac{1}{2}\tr(\Sigma^{-1}(2\Sigma \nabla_\Sigma f \Sigma) \Sigma^{-1} V) \\
    & =\langle (\Sigma \nabla_\mu f, 2\Sigma \nabla_\Sigma f \Sigma), (v,V)\rangle^F_{m,\Sigma}.
\end{align}
where $(v,V) \in \mathbb{R}^d \times S^d$. Thus we have
\begin{equation} \label{eq:gauss-nat-grad}
    \nabla_\mu^{\text{nat}}f := \Sigma \nabla_\mu f, \qquad \nabla^{\text{nat}}_\Sigma f := 2\Sigma \nabla_\Sigma f \Sigma.
\end{equation}

\paragraph{Bures-Wasserstein Metric:} The 2-Wasserstein distance on $\mathcal{M}$ has a closed expression
\begin{equation} \label{eq:w2-distance}
    W^2_2(\mathcal{N}(m_1, \Sigma_1), \mathcal{N}(m_2, \Sigma_2)) = \Vert m_1 - m_2\Vert^2_2 + \tr\left(\Sigma_1 + \Sigma_2 -2(\Sigma_1^{1/2}\Sigma_2\Sigma_1^{1/2})^{1/2}\right).
\end{equation}
This distance arises as the unique Riemannian distance associated with the \textit{Bures-Wasserstein} metric. For $(u,U), (v,V) \in \mathbb{R}^d \times S^d$, let $L_\Sigma(U)$ denote the unique symmetric solution of the Lyapunov equation $\Sigma X + X\Sigma = U$. Then
\vspace{-0.3cm}
\begin{equation}
    \langle(u,U), (v,V)\rangle^{\mathrm{BW}}_{m,\Sigma} := u^\top v + \text{tr}(L_\Sigma(U) \Sigma L_\Sigma(V)).
\end{equation}
The corresponding Riemannian manifold is often referred to as the \textit{Bures-Wasserstein space}\footnote{We note that various authors also use the term Bures-Wasserstein space/metric/distance to refer exclusively to a structure on the set of SPD matrices. Indeed, one can observe that the mean and covariance components of the metric decouple, and it is only the latter which is non-Euclidean.} $\mathrm{BW}(\mathbb{R}^d)$. The Riemannian gradient with respect to the BW metric is obtained as follows
\begin{align}
    \nabla_\mu f^\top v+\tr(\nabla_\Sigma f \cdot V) & = \nabla_\mu f^\top v+2\tr(\nabla_\Sigma f \Sigma L_\Sigma (V)) \\ & = \langle (\nabla_\mu f, 2 L_\Sigma^{-1} (\nabla_\Sigma f)), (v,V)\rangle_{\mu,\Sigma}^{\mathrm{BW}}.
\end{align}
Thus we have $\nabla_\mu^\mathrm{BW}f = \nabla_\mu f$, and for the covariance
\begin{equation}
    \nabla^{\mathrm{BW}}_\Sigma f = 2L_\Sigma^{-1}(\nabla_\Sigma f) = 2(\Sigma \nabla_\Sigma f + \nabla_\Sigma f \Sigma).
\end{equation}
The exponential map is given below, and is well-defined for $L_\Sigma(U) \succ -I$
\begin{equation} \label{eq:bw-exp-map}
    \exp_{m,\Sigma}(u,U) :=(m+u, (I+L_\Sigma(U))\Sigma(I+L_\Sigma(U))).
\end{equation}
\paragraph{Alternative Parametrisation:} It can be convenient to parameterize the covariance component of the tangent space in terms of $X = L_\Sigma(U)$ rather than $U$ itself.
In these coordinates, the covariance component of the BW metric reduces to $\langle X_1, X_2\rangle_\Sigma = \tr(X_1 \Sigma X_2)$, the exponential map to $\exp_\Sigma(X) = (I+X)\Sigma (I+X)$, while the BW gradient of $f$ is simply $2\nabla_\Sigma f$. This parametrisation is adopted in some of the optimal transport literature \citep{lambert2022variational, diao2023forward}; we proceed with this convention as it simplifies our presentation.

\subsubsection{Differential of the Exponential Map on $\mathrm{BW}(\mathbb{R}^d)$} \label{appx:bw-diff-exp}
Following \eqref{eq:bw-exp-map}, the exp-map can be written as $\exp_{m,\Sigma}(u,X)=(m+u,\exp_{\Sigma}(X))$. Then
\begin{equation}
    D\exp_{m,\Sigma}((u,X))[(v,Y)] = (v,D\exp_\Sigma(X)[Y]) \in \mathbb{R}^d \times S^d.
\end{equation}
The differential of the covariance component of the exponential map is given below\footnote{This is also provided in \citet{malago2018wasserstein}, we include it here for completeness.}:
\begin{lemma}
    For $\Sigma \in S^{d}_{++}$ and $X,Y \in S^d$, such that $X \succ -I$ we have
    \begin{equation}
        D\exp_{\Sigma}(X)[Y] = L_{\exp_\Sigma(X)}\left((X+I)\Sigma Y+Y\Sigma(X+I)\right).
    \end{equation}
\end{lemma}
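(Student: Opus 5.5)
We differentiate the map $X \mapsto \exp_\Sigma(X) = (I+X)\Sigma(I+X)$ directly, working in the alternative parametrisation where tangent vectors at $\Sigma$ are identified with $X = L_\Sigma(U)$. The Euclidean directional derivative is then converted back to this parametrisation at the new base point $\exp_\Sigma(X)$. The plan has two steps, both essentially algebraic.

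\emph{Step 1: ambient derivative.} Treat $\exp_\Sigma$ as a polynomial map $S^d \to S^d$ and compute $\frac{d}{dt}\exp_\Sigma(X+tY)\big|_{t=0}$. Expanding $(I+X+tY)\Sigma(I+X+tY)$ and keeping only the linear term in $t$ gives
\begin{equation*}
    \dot\Sigma = Y\Sigma(I+X) + (I+X)\Sigma Y .
\end{equation*}
This is a symmetric matrix, namely the velocity of the curve in the ambient coordinates $U \in S^d$. The restriction $X \succ -I$ guarantees that $(I+X)\Sigma(I+X)$ is positive definite, so the curve stays in $S^d_{++}$ for small $t$ and the differential is well defined.

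\emph{Step 2: change of tangent coordinates.} In the alternative parametrisation, a tangent vector at the point $\Sigma' := \exp_\Sigma(X)$ with ambient velocity $U$ is represented by $L_{\Sigma'}(U)$. The input $Y$ is already expressed in the $X$-coordinates at $\Sigma$, so no conversion is needed on the input side. On the output side we apply $L_{\Sigma'}$ to $\dot\Sigma$, which yields
\begin{equation*}
    L_{\exp_\Sigma(X)}\bigl((X+I)\Sigma Y + Y\Sigma(X+I)\bigr).
\end{equation*}
Well-definedness of $L_{\Sigma'}$ follows because $\Sigma' \succ 0$: the Lyapunov equation $\Sigma' Z + Z\Sigma' = U$ then has a unique symmetric solution, since the spectra of $\Sigma'$ and $-\Sigma'$ are disjoint.

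The only real obstacle is bookkeeping. One must be careful that $D\exp_\Sigma(X)$ is interpreted as a map between tangent spaces expressed in $L$-coordinates at the respective base points, rather than in ambient coordinates. Once that convention is fixed, the two steps above give the stated formula. The mean component is trivial and was already handled above.
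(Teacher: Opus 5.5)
Your proposal is correct and follows the same route as the paper. You differentiate $(I+X+tY)\Sigma(I+X+tY)$ at $t=0$ to obtain the ambient velocity $(I+X)\Sigma Y + Y\Sigma(I+X)$, then convert to the $X$-parametrisation at the new base point by applying $L_{\exp_\Sigma(X)}$; your well-definedness remarks (positive definiteness of $\exp_\Sigma(X)$ and unique solvability of the Lyapunov equation) are correct and slightly more careful than the paper's two-line argument.
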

\begin{proof}Consider the curve $\Sigma(t) = \exp_{\Sigma} (X+tY)$. Its velocity at $t=0$ is
   \begin{equation}
       \Sigma'(0)= \frac{d}{dt}(I+X+tY)\Sigma(I+X+tY)\vert_{t=0} = (I+X)\Sigma Y+Y\Sigma (I+X).
   \end{equation}
   To get $D\exp_{\Sigma}(X)[Y]$, we just need to convert parametrizations by applying $L_{\exp_\Sigma(X)}$.
\end{proof}
For $A, B \in S^d$ the \textit{matrix geometric mean} \citep{bhatia2009positive} is given by
\begin{equation}
    A \#B = A^{1/2}(A^{-1/2} B A^{-1/2})^{1/2} A^{1/2}.
\end{equation}
The inverse of the exponential map (logarithmic map) is then
\begin{equation}
    \log_{m_1,\Sigma_1}(m_2,\Sigma_2) := (m_2-m_1, \Sigma_1^{-1} \# \Sigma_2-I).
\end{equation}
Letting $\log_{\Sigma_1}(\Sigma_2) = \Sigma_1^{-1} \# \Sigma_2-I$, we therefore have
\begin{equation}
    \mathcal{T}_{\Sigma_1, \Sigma_2}(X) := D\exp_{\Sigma_1}(\log_{\Sigma_1}(\Sigma_2))[X] = L_{\Sigma_2} \left[(\Sigma_1^{-1} \# \Sigma_2)\Sigma_1X + X\Sigma_1(\Sigma_1^{-1} \# \Sigma_2) \right].
\end{equation}

\subsubsection{Algorithm 1 - Representing, Updating, and Transporting $H_{s}^{-1}$} \label{appx:transporting-fisher}

Let $\ell(x)$ denote the log-likelihood of $\mathcal{N}(\mu,\Sigma)$. The two components of the score are then
\begin{equation}
    \nabla_\mu \ell(x) = \Sigma^{-1}(x-\mu), \qquad \nabla_\Sigma \ell(x) = \frac{1}{2}\Sigma^{-1}(x-\mu)(x-\mu)^\top\Sigma^{-1} - \frac{1}{2}\Sigma^{-1}.
\end{equation}
The approximate Fisher information $H_s$ is decomposed into parts which act separately on the mean and covariance parameters; we denote these $H^\mu_s \in \mathbb{R}^{d\times d}$ and $H^\Sigma_s \in \mathbb{R}^{d^2\times d^2}$ respectively.

\paragraph{Score Update (Mean):} The score vector update for this component is
\begin{equation}
    (H^\mu_{s+1})^{-1} = (H^\mu_{s})^{-1} - (1+(\phi_{s+1}^\mu)^\top (H_s^\mu)^{-1} \phi^\mu_{s+1})^{-1} \times (H^\mu_{s})^{-1} \phi^\mu_{s+1} (\phi^\mu_{s+1})^\top (H^\mu_{s})^{-1},
\end{equation}
where $\phi^\mu_{s+1}= \nabla_\mu \ell(\bar y_{s+1})$ with $\bar y_{s+1} \sim \mathcal{N}(\mu^{(s)}, \Sigma^{(s)})$. The update is identical in the Euclidean and $\mathrm{BW}(\mathbb{R}^d)$ versions of the algorithm. The latter does not require a transport operation since the mean component of the tangent space is Euclidean.

\paragraph{Score Update (Covariance):} The covariance component $H_s^\Sigma$ is a $\mathbb{R}^{d^2 \times d^2}$ matrix acting on the space of \textit{vectorized matrices}\footnote{We favor this over a half-vectorized representation as it simplifies the transport operation. This example is primarily demonstrative, hence we favor ease of implementation over modest efficiency gains.}. Here, $\mathrm{vec}$ denotes an operation which stacks columns $\mathrm{vec}(\Sigma) = [\Sigma_{1,1}, ..., \Sigma_{d,1}, \Sigma_{1,2}, ...,\Sigma_{d,2}, ...]^\intercal$ for $\Sigma \in \mathbb{R}^{d\times d}$. The vec-space score update is then
% \begin{equation}
%     \mathrm{vec}(\Sigma) = [\Sigma_{1,1}, ..., \Sigma_{d,1}, \Sigma_{1,2}, ...,\Sigma_{d,2}, ...]^\intercal, \qquad \Sigma \in \mathbb{R}^{d\times d}.
% \end{equation}
% The vec-space score update is
\begin{equation}
    (H^\Sigma)_{s+1}^{-1} = (H^\Sigma)_{s+1/2}^{-1} - c_s \times  (H^\Sigma)_{s+1/2}^{-1}\mathrm{vec}(\phi^\Sigma_{s+1})\mathrm{vec}(\tilde{\phi}^\Sigma_{s+1})^{\top}(H^\Sigma)_{s+1/2}^{-1},
\end{equation}
where $\phi^\Sigma_{s+1} := \nabla_\Sigma \ell(\bar y_{s+1})$ for $\bar y_{s+1} \sim \mathcal{N}(\mu^{(s)},\Sigma^{(s)})$, and

\begin{equation}
    c_s=(1+\mathrm{vec}(\tilde{\phi}^\Sigma_{s+1})^\top (H^\Sigma)_{s+1/2}^{-1} \mathrm{vec}(\phi^\Sigma_{s+1}))^{-1}.
\end{equation}
In the Euclidean setting $\tilde{\phi}^\Sigma_{s+1} = \phi^\Sigma_{s+1}$. For $\mathrm{BW}(\mathbb{R}^d)$, recall that the metric on the covariance tangent space is $\langle X,Y\rangle_\Sigma = \tr(X \Sigma Y)$. In vec-space this takes the form
\begin{equation}
    \langle X, Y\rangle_\Sigma = \mathrm{vec}(X)^\top \underbrace{\tfrac{1}{2}(I \otimes \Sigma + \Sigma \otimes I)}_{=G^\Sigma_{\mathrm{vec}}}\mathrm{vec}(Y), \quad X,Y \in S^d.
\end{equation}
% Note that $\mathrm{vec}(X)^\top (I \otimes \Sigma) \mathrm{vec}(Y) = \tr (X \Sigma Y)$. The symmetric form of $G^\Sigma_{\mathrm{vec}}$ is preferred as it maps $\mathrm{vec}(S^d)$ into $\mathrm{vec}(S^d)$. 
The modified score vector is then
\begin{equation}
    \mathrm{vec}(\tilde{\phi}^\Sigma_{s+1}) :=G^{\Sigma^{(s)}}_{\mathrm{vec}} \mathrm{vec}(\phi^\Sigma_{s+1}) = \frac{1}{2}\mathrm{vec}(\Sigma^{(s)}\phi_{s+1}^\Sigma+\phi_{s+1}^\Sigma \Sigma^{(s)}).
\end{equation}

% \paragraph{Eigenvalue Normalization (Covariance):} Since the score vectors $\phi^\Sigma$ are symmetric, the unscaled matrix $(H^\Sigma_s)^{-1}$ receives no rank-1 updates on the antisymmetric subspace $\mathrm{vec}(S^d)^\perp$. Consequently, the scaled approximation $(\mathbf{H}_s^\Sigma)^{-1} = (s+1)(H^\Sigma_s)^{-1}$ can grow indefinitely on this subspace, which can cause numerical instability. This can be addressed by occasionally projecting onto $\mathrm{vec}(S^d)$\footnote{That is, where $\mathrm{vec}(S^d):=\{ \mathrm{vec}(A) : A \in S^d\}$, and similarly for $\mathrm{vech}(S^d)$.} via the duplication matrix\footnote{For more information on $D_d, D_d^+, K_{d,d}$ etc. see chapter 3 of \citet{magnus2019matrix}} $D_d$ and its Moore-Penrose inverse
% \begin{equation}
%     D_d : \text{vech}(S^d) \rightarrow \text{vec}(S^d), \qquad D_d^+ : \text{vec}(S^d) \rightarrow \text{vech}(S^d),
% \end{equation}
% where $D_d^+D_d=I$ and $D_dD_d^+=\tfrac{1}{2}(I+K_{d,d})$. Specifically, we occasionally do $H_{s}^{-1} \rightarrow D_dD_d^+H_s^{-1} D_d^+D_d = \tfrac{1}{2}(I+K_{d,d})H_s^{-1}$ to normalize its eigenvalues on $\text{vec}(S^d)^\perp$. Note $K_{d,d}$ is very sparse, so this operation can be performed at low cost after every score update.

\paragraph{Transportation:} In vec-space, the differentiated exponential map becomes
\footnotesize
\begin{align}
    \tilde{\mathcal{T}}_{\Sigma_1, \Sigma_2}& :=\text{vec} \ \circ \ D\exp_{\Sigma_1} (\log_{\Sigma_1}(\Sigma_2)) \circ \text{vec}^{-1}  \\ & = (P_2\otimes P_2)(\Lambda_2 \otimes I + I\otimes \Lambda_2)^{-1}(P_2\otimes P_2)^\top ((\Sigma_1^{-1} \# \Sigma_2) \Sigma_1 \otimes I + I\otimes (\Sigma_1^{-1} \# \Sigma_2)\Sigma_1).
\end{align}
\normalsize
where $\Sigma_2 = P_2\Lambda_2 P_2^\top$ is the eigendecomposition. The transport operation is then
\begin{equation}
    (H^\Sigma_{s+1/2})^{-1} := \tilde{\mathcal{T}}_{\Sigma^{(s-1)},\Sigma^{(s)}} (H^\Sigma_{s})^{-1} \tilde{\mathcal{T}}_{\Sigma^{(s)},\Sigma^{(s-1)}}.
\end{equation}
Since $\tilde{\mathcal{T}}$ has a Kronecker product structure, the above operation has $O(d^5)$ complexity. This can be substantially accelerated on a GPU, e.g. via \texttt{torch.einsum} operations in PyTorch. Furthermore, one does not need to materialize the full $d^2 \times d^2$ matrix representation of $\tilde{\mathcal{T}}$. In practice, we found the transport operation to be a small constant factor ($2-5\times$) slower than the $O(d^4)$ score vector update across the examples considered.

\subsubsection{Experiment Details - VI (Logistic Regression)} \label{appx:vi-logistic-regression}

Let $\pi =\exp(-V)/Z$ be a probability density where $V: \mathbb{R}^d \rightarrow \mathbb{R}$, and define:
\small
\begin{equation}
    \mathcal{L}(\mu,\Sigma) = \mathrm{KL}(\mathcal{N}(\mu,\Sigma) \mid \pi).
\end{equation}
\normalsize
The partial derivatives of $\mathcal{L}$ are as follows, see e.g. appendices of \citet{rezende2014stochastic},
\small
\begin{align}
    & \nabla_\mu\mathcal{L}(\mu,\Sigma) = \mathbb{E}_{\beta \sim \mathcal{N}(\mu,\Sigma)}[\nabla V(\beta)], \label{eq:logreg-KL-gradient-mu}\\
    & \nabla_\Sigma \mathcal{L}(\mu,\Sigma) = \frac{1}{2}\mathbb{E}_{\beta \sim \mathcal{N}(\mu,\Sigma)}[\nabla^2 V(\beta)]-\frac{1}{2}\Sigma^{-1}. \label{eq:logreg-KL-gradient-sigma}
\end{align}
\normalsize
For the logistic regression model \eqref{eq:logistic-regression}, the log-posterior is
\small
\begin{equation}
    \log p(\beta \mid \{x_i,y_i\}^n_{i=1}) = \underbrace{\sum^n_{i=1} [y_i \beta^\top x_i- \log{(1 + \exp(\beta^\top x_i))}]-\frac{1}{2\sigma^2}\Vert \beta\Vert^2_2}_{:=-V(\beta)}+C.
\end{equation}
\normalsize
Letting $S(x)=1/(1+e^{-x})$, one can show that:
\small
\begin{align}
    & \nabla V(\beta) = \sum^n_{i=1}(S(\beta^\top x_i)-y_i)x_i + \frac{1}{\sigma^2}\beta, \\
    & \nabla^2 V(\beta) = \sum^n_{i=1} S(\beta^\top x_i) (1- S(\beta^\top x_i)) x_i x_i^\top + \frac{1}{\sigma^2}I.
\end{align}
\normalsize

\paragraph{Update Directions:} The update directions for each algorithm are based on the Euclidean stochastic gradients $\widehat{\nabla}_\mu \mathcal{L}$ in \eqref{eq:logreg-KL-gradient-mu}, and $\widehat{\nabla}_\Sigma \mathcal{L}$ in \eqref{eq:logreg-KL-gradient-sigma}. These are estimated with a Monte Carlo sample size of $B=100$ across all methods and datasets. The six algorithms differ in how these Euclidean gradients are transformed into update directions:
\begin{itemize}
    \item \textbf{Euc-GD:} uses $(\widehat{\nabla}_\mu \mathcal{L}, \widehat{\nabla}_\Sigma \mathcal{L})$ directly.
    \item \textbf{Euc-NGD:} applies the transformation \eqref{eq:gauss-nat-grad} giving $(\Sigma \widehat{\nabla}_\mu \mathcal{L}, 2\Sigma \widehat{\nabla}_\Sigma \mathcal{L} \Sigma)$.
    \item \textbf{Euc-NGD Approx:} follows Algorithm \ref{alg:riemannian-if-ngd} in the Euclidean setting.
    % \begin{equation}
    %     ((\mathbf{H}^\mu_s)^{-1}\widehat{\nabla}_\mu \mathcal{L}, \mathrm{vec}^{-1}[(\mathbf{H}^\Sigma_s)^{-1} \mathrm{vec}(\widehat{\nabla}_\Sigma \mathcal{L})])
    % \end{equation}
    \item \textbf{BW-GD:} uses $(\widehat{\nabla}_\mu \mathcal{L}, 2\widehat{\nabla}_\Sigma \mathcal{L})$; the constant $2$ arises from the $X$-parametrisation.
    \item \textbf{BW-NGD:} applies \eqref{eq:gauss-nat-grad}, followed by conversion to $X$-parametrisation\footnote{Note that for $S\in S^d$ we have $L_\Sigma(\Sigma S\Sigma) = L_{\Sigma^{-1}}(S)$; this is easy to show.}
    \begin{equation}
        (\Sigma \widehat{\nabla}_\mu \mathcal{L}, 2L_{\Sigma^{-1}}(\widehat{\nabla}_\Sigma \mathcal{L})).
    \end{equation}
    \item \textbf{BW-NGD Approx:} follows Algorithm \ref{alg:riemannian-if-ngd} on $\mathrm{BW}(\mathbb{R}^d)$.
    % There is an extra $2$ constant applied to the covariance component of the gradient as in BW-GD.
\end{itemize}
\paragraph{Retractions:}The Euclidean (i.e., ``Euc'') algorithms employ an additive step for both parameters; that is, given raw update directions $(\hat{g}_\mu, \hat{g}_\Sigma)$ obtained as above, we do:
\begin{equation}
    (\mu^{(s+1)}, \Sigma^{(s+1)}) = (\mu^{(s)} - \tau_{s+1} \hat{g}_\mu, \mathrm{Clip}_\eta[\Sigma^{(s)} - \tau_{s+1} \hat{g}_\Sigma]).
\end{equation}
where $\tau_{s+1}$ is the step size, and $\mathrm{Clip}_\eta$ projects eigenvalues to $[\eta,\infty)$; we let $\eta=10^{-8}$ in the experiments. For the Bures-Wasserstein algorithms, we perform

\begin{equation}
    (\mu^{(s+1)}, \Sigma^{(s+1)}) = (\mu^{(s)} - \tau_{s+1}\hat{g}_\mu, \mathrm{Clip}_\eta[(I-\tau_{s+1}\hat{g}_\Sigma) \Sigma^{(s)} (I-\tau_{s+1}\hat{g}_\Sigma)]).
\end{equation}

% \paragraph{Tuning Procedure (\Cref{fig:logreg-small})} The step size schedule follows $\tau_s=c_0/(100+s)^{\alpha}$. We tuned the initial step size $\tau_0$ (via $c_0$), $\alpha$, and the initialization parameter $\epsilon$ (for the approximate methods) over the grids $\tau_0 \in \{10^{-1}, 10^{-2}, 10^{-3}, 10^{-4}\}$, $\alpha \in \{0, 0.3, 0.5, 0.7, 0.9\}$, and $\epsilon \in \{100, 250, 500, 1000, 2500\}$. For each algorithm-dataset pair, we sampled 25 random configurations from the grid, and selected the one yielding the lowest NELBO after $2,500$ iterations (start $(\mu,\Sigma)=(0,I)$). Finally, we also evaluated the adjacent configurations.

% \paragraph{Tuning Procedure (\Cref{fig:logreg-large})} Here we used a constant step size $(\alpha=0)$, selecting $\tau_0 \in \{10^{-1}, 10^{-2}, 10^{-3}, 10^{-4}, 10^{-5}\}$, and $\epsilon \in \{1000, 2500, 5000, 10000\}$. Each configuration was evaluated, and the combination with the lowest NELBO after 500 iterations was selected.

\newpage
\subsection{Details of Stiefel Manifold Experiment} \label{appx:Steifel-manifold-experiment}

We review relevant details on the Stiefel manifold, and the implementation of our algorithms.

\subsubsection{Geometric Preliminaries}

The following standard properties of the Stiefel manifold can be found in \citet{absil2009optimization}.

\paragraph{Background:} The Stiefel manifold $\mathrm{St}(p,n)$ is an embedded submanifold of $\mathbb{R}^{n\times p}$
\begin{equation}
    \mathrm{St}(p,n) := \{ X \in \mathbb{R}^{n\times p} \mid X^\top X = I_p\}, \qquad p \leq n.
\end{equation}
The tangent space at $X \in \mathrm{St}(p,n)$ forms a subspace of $\mathbb{R}^{n\times p}$ given by
\begin{equation}
    T_X\mathrm{St}(p,n) = \{ Z \in \mathbb{R}^{n\times p} : Z^\top X + X^\top Z = 0_{p\times p}\}.
\end{equation}
The Riemannian metric on this tangent space is $\langle Y, Z \rangle^S_X = \tr(Y ^\top Z)$; i.e., the ordinary Euclidean metric. The orthogonal projection operator onto $T_X \mathrm{St}(p,n)$ is
\begin{align} \label{eq:stiefel-tangent-projection}
    \mathrm{Proj}_X{M} & = (I-XX^\top)M + X\mathrm{skew}(X^\top M), \quad M \in \mathbb{R}^{n\times p} \\
    & = M - X \mathrm{sym}(X^\top M),
\end{align}
where $\mathrm{sym}(A):=(A+A^\top)/2$ and $\mathrm{skew}(A) := (A-A^\top)/2$. For a differentiable function $f:\mathbb{R}^{n\times p} \rightarrow \mathbb{R}$, with Euclidean gradient $\nabla^\mathcal{E}_X f(X)$ at $X \in \mathrm{St}(p,n)$ its Riemannian gradient is
\begin{equation}
    \nabla_X f(X) := \mathrm{Proj}_X(\nabla^\mathcal{E}_Xf(X)).
\end{equation}

\paragraph{Retraction \& Transport:} For $X \in \mathrm{St}(p,n)$ and $U \in T_X\mathrm{St}(p,n)$, define
\begin{equation}
    W_U := P_X U X^\top - X U^\top P_X, \qquad P_X = I-\frac{1}{2}XX^\top.
\end{equation}
The Cayley transform retraction \citep{zhu2017riemannian} is
\begin{equation}
    \mathcal{R}_X(U) = \left(I-\frac{1}{2}W_U\right)^{-1}\left(I+\frac{1}{2}W_U\right)X.
\end{equation}
The associated isometric vector transport \citep[Lemma 3]{zhu2017riemannian} is
\begin{equation} \label{eq:stiefel-isometric-transport}
    \mathcal{T}_U(V)=\left(I-\frac{1}{2}W_U\right)^{-1}\left(I+\frac{1}{2}W_U\right)V, \qquad V \in T_X\mathrm{St}(p,n),
\end{equation}
which maps $T_X \mathrm{St}(p,n) \rightarrow T_{R_X(U)}\mathrm{St}(p,n)$.

\subsubsection{Algorithm Details}

% \textit{[TBD] An overview of the computation of the stochastic Euclidean NELBO gradient $\nabla^\mathcal{E} \widehat{\mathcal{L}}$, its Riemannian gradient $\nabla \widehat{\mathcal{L}}$, and same for the score function components for $\theta = (W_1, W_2, b_1, b_2$). The retraction in $\theta$ refers to using the Cayley transform retraction for the Stiefel factors, and the additive Euclidean retraction for the intercepts.}

The variational parameter $\theta = (W_1, W_2, b_1, b_2)$ belongs to the product manifold
\begin{equation}
    \mathcal{M} = \mathrm{St}(d, d) \times \mathrm{St}(d,d) \times \mathbb{R}^d \times \mathbb{R}^d.
\end{equation}
Closed-form expressions for the Euclidean score $\nabla^\mathcal{E}_\theta \log q_\theta (y)$ and the negative ELBO gradient $\nabla^\mathcal{E}_\theta \mathcal{L}(\theta)$ are derived in Appendix A.5 of \citet{godichon2024natural}; both are estimated by sampling from the base distribution $\mathcal{N}(0,I)$ of the normalising flow. The Riemannian gradients are obtained by projecting the Stiefel components onto the tangent space via \eqref{eq:stiefel-tangent-projection}. Concretely, for the score (and similarly for the ELBO gradient),
\small
\begin{align}
    \nabla_\theta \log q_\theta (y) & = \mathrm{Proj}_{\theta}(\nabla_\theta^\mathcal{E} \log q_\theta (y))
    \nonumber \\ & :=( \mathrm{Proj}_{W_1}[\nabla^\mathcal{E}_{W_1} \log q_\theta(y)], \mathrm{Proj}_{W_2}[\nabla^\mathcal{E}_{W_2} \log q_\theta(y)], \nabla^\mathcal{E}_{b_1} \log q_\theta(y), \nabla^\mathcal{E}_{b_2} \log q_\theta(y)).
\end{align}
\normalsize
In the following, the retraction $\mathcal{R}_\theta$ on $\mathcal{M}$ applies the Cayley retraction to the Stiefel components $W_1, W_2$, and the standard additive update for $b_1, b_2$.
\normalsize
\paragraph{Riemannian Stochastic Gradient Descent:} This is given by

\begin{equation}
    \theta^{(s+1)} = \mathcal{R}_{\theta^{(s)}}(-\tau_{s+1}\nabla_{\theta}\widehat{\mathcal{L}}(\theta^{(s)})),
\end{equation}
where $\nabla_\theta \widehat{\mathcal{L}}(\theta^{(s)}) = \mathrm{Proj}_{\theta^{(s)}}[\nabla_\theta^\mathcal{E} \widehat{\mathcal{L}}(\theta^{(s)})]$ is the stochastic Riemannian NELBO gradient.

\paragraph{Riemannian Natural Gradient:} This method follows the limited-memory variant of Algorithm \ref{alg:riemannian-if-ngd} described in Remark \ref{rem:limited-memory-approximation} to move and update $\mathbf{H}^{-1}_s$.
%has a block diagonal form
%\begin{equation}
%    \mathbf{H}^{-1}_s := \mathrm{diag}\left[(\mathbf{H}^{W_1}_s)^{-1}, (\mathbf{H}^{W_2}_s)^{-1}, (\mathbf{H}^{b_1}_s)^{-1}, (\mathbf{H}^{b_2}_s)^{-1}\right].
%\end{equation}
The update is
\begin{equation}
    \theta^{(s+1)} = \mathcal{R}_{\theta^{(s)}}(-\tau_{s+1}\mathbf{H}_{s+1}^{-1} \nabla_\theta \widehat{\mathcal{L}}(\theta^{(s)})).
\end{equation}
This matrix is updated using a sliding window of the $K=200$ most recent score vectors; see Appendix \ref{appendix:vectorized-fisher}. The intercept parameters $b_1, b_2$ are Euclidean, so the corresponding score updates are straightforward and require no transportation.
The Stiefel blocks are updated using the Riemannian score components $\nabla_W \log {q}_\theta(y)$, with past vectors transported to the current tangent space via \eqref{eq:stiefel-isometric-transport}. Since this transport is isometric, $\mu =\nu$ in the inverse representation of the $\mathbf{H}_s^{-1}$, which simplifies the sliding window update.

\subsection{Details of Fixed-Rank Manifold Experiment} \label{appx:Fixed-rank-manifold-experiment}

We review details on the fixed-rank manifold, and the implementation of our algorithms.

\subsubsection{Geometric Preliminaries}

The following properties of the fixed-rank manifold are discussed in \citet{vandereycken2013low}; see also \citet{meyer2011linear, mishra2014fixed}.

\paragraph{Background:} The manifold of rank-$r$ matrices is an embedded submanifold of $\mathbb{R}^{m\times n}$,
\begin{equation}
    \mathcal{M}_r := \{ X \in \mathbb{R}^{m\times n}, \mathrm{rank}(X)=r\}, \qquad 1 \leq r \leq \min(m,n),
\end{equation}
with $\mathrm{dim}(\mathcal{M}_r)=r(m+n-r)$. We represent points by their SVD $X = U \Sigma V^\top$, where $U \in \mathrm{St}(r,m)$, $V \in \mathrm{St}(r,n)$, and $\Sigma \in \mathbb{R}^{r\times r}$ is diagonal with non-increasing positive entries.

The tangent space at $X = U\Sigma V^\top \in \mathcal{M}_r$ forms a subspace of $\mathbb{R}^{m\times n}$ given by
\begin{equation}
    T_X \mathcal{M}_r = \{\xi \in \mathbb{R}^{m\times n} : (I_m - UU^\top) \xi (I_n -VV^\top)=0\}.
\end{equation}
For each $\xi \in T_X \mathcal{M}_r$, there exists a triple $(M,U_p,V_p)$ such that
\begin{equation} \label{eq:fixedrank-coordinates}
    \xi = UMV^\top + U_pV^\top + UV^\top_p,
\end{equation}
where $M \in \mathbb{R}^{r\times r}$, $U_p \in \mathbb{R}^{m\times r}$, $V_p \in  \mathbb{R}^{n\times r}$, and furthermore $U^\top U_p = 0$ and $V^\top V_p=0$.

For $\xi\equiv(M_\xi,U_\xi,V_\xi)$ and $\eta\equiv(M_\eta, U_\eta, V_\eta)$ in $T_X \mathcal{M}_r$, the Riemannian metric is given by
\begin{equation} \label{eq:fixedrank-metric}
    \langle \xi, \eta\rangle_X =\tr(\xi^\top\eta)=\tr(M_\xi^\top M_\eta) + \tr(U^\top_\xi U_\eta) + \tr(V_\xi^\top V_\eta).
\end{equation}

The orthogonal projection of $Z \in \mathbb{R}^{m\times n}$ onto the tangent space $T_X \mathcal{M}_r$ is
\begin{equation} \label{eq:fixedrank-proj}
    \mathrm{Proj}_X(Z) = UU^\top Z + (I_m -UU^\top)ZVV^\top.
\end{equation}
In particular, we have $\mathrm{Proj}_X(Z) = (M,U_p, V_p)$ where
\begin{equation} \label{eq:fixedrank-proj-decompose}
    M = U^\top Z V, \qquad U_p = (I_m-UU^\top)ZV, \qquad V_p = (I_n-VV^\top)Z^\top U.
\end{equation}

\paragraph{Retraction \& Transport:} We use the projection-based \eqref{eq:fixedrank-proj} vector transport
\begin{equation}
    \mathcal{T}_{X,\tilde{X}}(\xi)= \mathrm{Proj}_{\tilde{X}}(\xi) = (\tilde{M}, \tilde{U}_p, \tilde{V}_p).
\end{equation}
% Let $X = U\Sigma V^\top$ and $\tilde{X}=\tilde{U} \tilde{\Sigma}\tilde{V}^\top$, the components of $\tilde{\xi} = \mathcal{T}_{X,\tilde{X}}(\xi) := (\tilde{M}, \tilde{U}_p, \tilde{V}_p)$ are
% \begin{equation} \label{eq:fixedrank-proj-decompose}
%     \tilde{M} = \tilde{U}^\top \xi \tilde{V}, \qquad
%     \tilde{U}_p = (I_m-\tilde{U}\tilde{U}^\top) \xi \tilde{V}, \qquad
%     \tilde{V}_p =(I_n-\tilde{V} \tilde{V}^\top) \xi^\top \tilde{U}.
% \end{equation}
For $\xi = (M,U_p, V_p)$, the equations \eqref{eq:fixedrank-proj-decompose} yield a linear mapping $(M, U_p, V_p) \rightarrow (\tilde{M}, \tilde{U}_p, \tilde{V}_p)$ requiring $(O(n+m)r^2)$ operations and $O((n+m)r)$ intermediate storage.

For the retraction, we employ the metric projection \citep{absil2012projection} which returns the closest rank-$r$ matrix to $X+\xi$:
\begin{equation}
    \mathcal{R}_X(\xi)=\mathrm{trunc}_r(X+\xi):=\argmin_{\zeta \in \mathcal{M}_r} \Vert (X+\xi)-\zeta\Vert_F.
\end{equation}

This can be computed by truncating the SVD of $X+\xi$, which is achievable in $O((m+n)r^2)$ operations owing to the low-rank structure of $X$ and $\zeta$; see \citet{vandereycken2013low}.

\subsubsection{Implementation Details}

For a $K$-class classification problem with $d$ features, we take $m=d,n=K-1$. Thus, the parameter vector is $\theta =(B,\alpha) \in \mathcal{M}_r \times \mathbb{R}^{K-1}$.

\paragraph{Score Gradients:} Let $(x,y)$ denote an observation, and $p(x):=(p_1(x),...,p_{K-1}(x))$ the predicted class probabilities. The Euclidean score, and its Riemannian counterpart are
\begin{equation}
    \nabla_B^\mathcal{E} \ell (y \mid x) = x(e_y - p(x))^\top \in \mathbb{R}^{d\times (K-1)}, \qquad \nabla^\mathcal{M}_B\ell (y \mid x)  = \mathrm{Proj}_B (\nabla_B^\mathcal{E} \ell (y \mid x)),
\end{equation}
where $e_y \in \mathbb{R}^{K-1}$ is the $y$-th standard basis vector for $y \in \{1,...,K-1\}$ and $e_K := 0$. The intercept score is $\nabla_\alpha \ell(y \mid x) = e_y - p(x) \in \mathbb{R}^{K-1}$. The stochastic gradient of the NLL is
\begin{equation}
    \nabla_B^\mathcal{E} \hat{\mathcal{L}} = \frac{1}{\vert \mathcal{B} \vert} \sum_{i \in \mathcal{B}} x_i (p(x_i) - e_{y_i})^\top, \qquad \nabla_\alpha \hat{\mathcal{L}} = \frac{1}{\vert \mathcal{B}\vert} \sum_{i \in \mathcal{B}} (p(x_i) - e_{y_i}),
\end{equation}
where $\mathcal{B}$ is a minibatch. The Riemannian stochastic gradient $\nabla^\mathcal{M}_B \hat{\mathcal{L}}$ is obtained by projecting the Euclidean stochastic gradient onto the tangent space via \eqref{eq:fixedrank-proj}.

\paragraph{Fisher Score Update:} Let $\mathcal{B}_s=\{(x^s_i,y^s_i)\}^b_{i=1}$ denote the minibatch of observations used to obtain the objective gradient at the $s$-th iteration. For each observation in $\mathcal{B}_s$, we sample $\tilde{y}_i^s \sim \mathrm{Mult}(1,p(x_i^s))$, and compute the scores $\phi_i^B = \nabla_B \ell(\tilde{y}_i^s \mid x_i^s)$ and $\phi_i^\alpha = \nabla_\alpha \ell(\tilde{y}_i^s \mid x_i^s)$. We maintain a block-diagonal representation of the approximate Fisher operator, with separate components for the $B$ and $\alpha$ parameters
\begin{equation}
    H_s := \mathrm{BlockDiag}(H^B_s, H^\alpha_s).
\end{equation}

Due to the orthogonality of the three components $(M, U_p, V_p)$ in \eqref{eq:fixedrank-metric} with respect to the baseline metric, and the fact that the projection-based transport preserves this decomposition, the approximate Fisher operator $H_s^B$ also admits a block-diagonal structure
\begin{equation}
    H^B_s = \mathrm{BlockDiag}(H^M_s, H^{U_p}_s,H^{V_p}_s).
\end{equation}

For the inverse-free Riemannian natural gradient method (i.e., \Cref{alg:riemannian-if-ngd}), each $\phi_i^B$ is projected onto the tangent space $T_B \mathcal{M}_r$ of the current iterate via \eqref{eq:fixedrank-proj-decompose}, yielding $\xi_i = \mathrm{Proj}_{B}(\phi^B_i)=(M^i,U_p^i,V_p^i)$. The score vector update for $(H_s^B)^{-1}$ is performed separately for each sub-block using $\mathrm{vec}(M^i), \mathrm{vec}(U^i_p)$, and $\mathrm{vec}(V^i_p)$ respectively.
% The $\vert \mathcal{B}_s \vert$ score vectors can be incorporated into the current $H^{-1}$ simultaneously using the Woodbury matrix identity, of which the earlier rank-one update is a special case. Specifically, for each component in $(M,U_p, V_p)$, we stack the corresponding vectorized component of the score vectors, e.g. $\Phi^M = [\mathrm{vec}(\phi_1^M), ..., \mathrm{vec}(\phi_{\vert \mathcal{B}_s\vert}^M)] \in \mathbb{R}^{r^2 \times \vert \mathcal{B}_s \vert}$, and do
% \begin{equation}
%     (H_{\mathrm{new}}^M)^{-1} = (H^M)^{-1} - (H^M)^{-1}\Phi^M(I_{\vert\mathrm{B}_s\vert} + (\Phi^M)^\top (H^M)^{-1} (\Phi^M))^{-1} (\Phi^M)^\top (H^M)^{-1}
% \end{equation}

% For the ``extrinsic'' inverse-free method, the raw Euclidean scores $\phi^B$ are used directly. In both settings, we maintain a block-diagonal representation of the inverse Fisher operator, with separate components for the $B$ and $\alpha$ parameters. 
% The update $(H^\alpha)^{-1} \rightarrow (H^\alpha_\mathrm{new})^{-1}$ for the intercept component proceeds in an identical fashion.

\paragraph{Fisher Transport:} For the inverse-free Riemannian NGD method, when the matrix iterate moves $B^{\mathrm{old}}\rightarrow B^{\mathrm{new}}$, the current inverse Fisher block $(H^B)^{-1}$ is transported via
\begin{equation}
    (H^B_{\mathrm{new}})^{-1} = \mathcal{T}_{B^{\mathrm{old}}, B^{\mathrm{new}}} \circ (H^B)^{-1} \circ \mathcal{T}_{B^{\mathrm{new}}, B^{\mathrm{old}}} = \mathrm{Proj}_{B^{\mathrm{new}}} \circ (H^B)^{-1}.
\end{equation}
The projection term factorizes as a sum of Kronecker products, which preserve the block-diagonal structure of $H^B$. The intercept term $(H^\alpha)^{-1}$ is not transported.

\subsubsection{Algorithms}

We consider three separate algorithms.

\paragraph{Riemannian Stochastic Gradient Descent (RSGD):} This is given by
\begin{equation}
    B^{(s+1)} = \mathcal{R}_{B^{(s)}}\left( -\tau_{s+1} \nabla^\mathcal{M}_B \hat{\mathcal{L}} \right), \qquad \alpha^{(s+1)}=\alpha^{(s)}-\tau_{s+1} \nabla_\alpha \hat{\mathcal{L}}.
\end{equation}

\paragraph{Inverse-Free Riemannian Natural Gradient (IF-RNGD):} This method follows \Cref{alg:riemannian-if-ngd}. The inverse approximation $(H^B_s)^{-1}$ is a linear endomorphism on the tangent space $T_B \mathcal{M}_r$, represented using the coordinate system from \eqref{eq:fixedrank-coordinates}. It is updated with (projected) Riemannian score vectors, and transported between tangent spaces at each iteration.
\begin{equation}
    B^{(s+1)} = \mathcal{R}_{B^{(s)}}\left( -\tau_{s+1} (\mathbf{H}^B_{s+1})^{-1}\nabla^\mathcal{M}_B \hat{\mathcal{L}} \right), \qquad \alpha^{(s+1)}=\alpha^{(s)}-\tau_{s+1} (\mathbf{H}_{s+1}^\alpha)^{-1}\nabla_\alpha \hat{\mathcal{L}}.
\end{equation}

\paragraph{Extrinsic Inverse-Free Natural Gradient (Extrinsic IF-NGD):} The inverse Fisher approximation $(H^B_s)^{-1}$ is a linear endomorphism on the ambient space $\mathbb{R}^{d\times(K-1)}$, and is updated using raw (i.e., Euclidean) score vectors. The update direction is obtained by preconditioning the Euclidean objective gradient, and projecting onto the tangent space.
\begin{align}
    & B^{(s+1)} = \mathcal{R}_{B^{(s)}}\left( -\tau_{s+1} \mathrm{Proj}_{B^{(s)}}[(\mathbf{H}^B_{s+1})^{-1}\nabla^\mathcal{E}_B \hat{\mathcal{L}}] \right), \\
    & \alpha^{(s+1)}=\alpha^{(s)}-\tau_{s+1} (\mathbf{H}_{s+1}^\alpha)^{-1}\nabla_\alpha \hat{\mathcal{L}}.
\end{align}
This approach is similar to that described in \citet[Example 3]{godichon2024natural}.

\paragraph{Hyperparameters:} The step size schedule is $\tau_s = c_0 / (c_1 + s)^{\alpha}$ where $c_0=1, c_1=100$, and $\alpha=0.75$. For each method, the stochastic gradient of the objective is computed using a random minibatch of $128$ observations. The intercept parameter is initialized at zero, while $B_{\mathrm{init}}$ is diagonal with entries $(\bm{1}_r, \bm{0}_{\mathrm{rest}})$. The approximate inverse Fisher matrices $(H^B_0)^{-1},(H^\alpha_0)^{-1}$ are initialized at $I/\epsilon$, where the damping parameter is set to $\epsilon =1$. Variation of these hyperparameters largely leads to the same qualitative behaviour.

\end{document}